\documentclass{article}

\usepackage{microtype}
\usepackage{graphicx}
\usepackage{subcaption}
\usepackage{booktabs} 
\usepackage{makecell}

\usepackage{hyperref}

\usepackage[accepted]{icml2026}

\usepackage{amsmath}
\usepackage{amssymb}
\usepackage{mathtools}
\usepackage{amsthm}

\usepackage[utf8]{inputenc}
\usepackage{multido}
\usepackage{mathdots}
\usepackage{mathtools}
\usepackage{graphicx, animate}
\usepackage{ragged2e}
\usepackage{paracol}
\usepackage{tcolorbox}
\usepackage{array}
\usepackage{adjustbox}
\usepackage{caption}
\usepackage{mathrsfs}
\usepackage{amssymb} 
\usepackage{booktabs}
\usepackage{multirow}
\graphicspath{{figures/}}
\usepackage{abbreviations}
\usepackage[T1]{fontenc}
\usepackage[capitalize,noabbrev]{cleveref}
\usepackage{natbib}
\usepackage{xcolor} 
\usepackage{algorithm}  
\usepackage{dsfont}
\usepackage{subcaption}
\usepackage{enumitem}

\allowdisplaybreaks
\usepackage{threeparttable}
\usepackage{adjustbox}
\usepackage{tikz}



\usepackage[indLines]{algpseudocodex}
\tikzset{algpxIndentLine/.style={draw=black, very thin}}
\usepackage{makecell}
\usepackage{bbm}
\usepackage{stfloats}

\usepackage[capitalize,noabbrev]{cleveref}

\theoremstyle{plain}
\theoremstyle{definition}
\theoremstyle{remark}

\usepackage[textsize=tiny]{todonotes}

\icmltitlerunning{Practical and Optimal Algorithm for Linear Contextual Bandits with Rare Parameter Updates}

\begin{document}

\twocolumn[
  \icmltitle{Practical and Optimal Algorithm for Linear Contextual Bandits\\
  with Rare Parameter Updates}



  \icmlsetsymbol{equal}{*}

  \begin{icmlauthorlist}
    \icmlauthor{Sanghoon Yu}{snu}
    \icmlauthor{Min-hwan Oh}{snu}
  \end{icmlauthorlist}

  \icmlaffiliation{snu}{Seoul National University, Seoul, Republic of Korea}

  \icmlcorrespondingauthor{Min-hwan Oh}{minoh@snu.ac.kr}

  \icmlkeywords{Bandit algorithms, Linear Contextual bandit, Batched bandit, limited adaptivity}

  \vskip 0.3in
]



\printAffiliationsAndNotice{}  

\begin{abstract}
We study linear contextual bandits under \emph{rare parameter updates}:
the learner may incorporate reward feedback into its parameter estimate only at a small number of update times,
while still observing contexts online and selecting actions sequentially.
This viewpoint clarifies a practical distinction that is often blurred in the literature:
many ``strictly batched'' methods additionally restrict \emph{within-interval context adaptivity},
meaning that the action rule inside an interval cannot depend on the \emph{sequence} of realized contexts/actions in that interval (beyond the current round's context).
For linear contextual bandits, we propose two practical algorithms with only $\Ocal(\log\log T)$ parameter updates.
Our first algorithm \texttt{BLCE-G} attains minimax-optimal regret (up to polylogarithmic factors in $T$) \emph{simultaneously} in both the small-$K$ and large-$K$ regimes under a static schedule.
Our second algorithm \texttt{BLCE} removes the near G-optimal design step---a dominant computational bottleneck in prior strictly batched static-grid methods---yet preserves minimax-optimal regret and achieves the lowest known runtime complexity among optimal algorithms.
We further extend these rare-update and computational principles to generalized linear contextual bandits. Overall, our results yield minimax-optimal algorithms for linear contextual
bandits and a near-optimal generalized-linear extension under $\Ocal(\log\log T)$ parameter updates, while remaining computationally
efficient in practice.
\end{abstract}

\section{Introduction} \label{introduction}

Stochastic \emph{linear contextual bandits} are a cornerstone of sequential decision-making, where an agent repeatedly selects actions from a time-varying, feature-based set and observes rewards generated by an unknown linear model~\citep{abe1999associative,auer2002using,abe2003reinforcement,dani2008stochastic,li2010contextual,chu2011contextual,abbasi2011improved,li2019nearly,lattimore2020bandit,kirschner2021asymptotically}. The resulting low-dimensional structure enables efficient generalization and has seen wide application, from recommender systems~\citep{li2010contextual} and inventory control~\citep{jin2021shrinking} to clinical trials and precision medicine~\citep{lu2021bandit}.

In deployments of bandit algorithms, the practical bottleneck is often not selecting an action given a context, but \emph{updating} the underlying model once new reward feedback arrives: model retraining, confidence set recomputation, or posterior updates may require expensive pipelines, logging joins, privacy checks, or human-in-the-loop validation. This motivates learning under \emph{rare parameter updates}, where the learner is allowed to incorporate rewards into its parameter estimate only at a small number of update times, while contexts continue to arrive online and decisions must still be made sequentially.

A common special case is the \emph{batched-feedback} protocol, where rewards from an interval become available only at the end of that interval. However, in the contextual bandit literature, the term ``batched'' is also frequently used in a stronger sense that restricts \emph{within-interval context adaptivity}. Concretely, many strictly batched methods commit at the start of an interval to an action rule that cannot depend on the \emph{sequence} of realized contexts/actions within that interval.
This stronger restriction is not inherent to batched feedback: contexts are observed online before choosing actions and are thus readily available.
For example, \citet{karbasi2021parallelizing} formalize a batch policy at time $t$ in the current interval as a mapping
$
\pi_t : H_{t_{\ell-1}} \times C_t \to \mathcal{A},
$
where $H_{t_{\ell-1}}$ denotes the history up to the previous interval boundary and $C_t$ denotes the set of contexts observed up to and including time $t$.
This formalization permits dependence on within-interval contexts.
Motivated by practice, we therefore separate these notions and focus on \emph{rare parameter updates} as the key constraint, while allowing (and exploiting) reward-free within-interval updates based on the arriving contexts.

Recently, batched contextual bandit algorithms have been shown to achieve strong near-optimal regret guarantees with as few as $\Ocal(\log\log T)$ parameter-update times under i.i.d.\ contexts~\citep{ruan2021linear,hanna2023efficient,hanna2023contexts,zhang2025almost}. 
Yet, 
they 
rely on computationally expensive primitives such as G-optimal design, or suffer from exponential dependence on dimension due to discretization; see \cref{sec:bottleneck_strictly_batched}.
As a result, despite attractive small number of update frequency and their regret guarantees, their computational complexity is impractically high (see Section~\ref{sec:batched_vs_rare} for runtime experiments),
making these approaches hard to deploy in real-world pipelines where batched learning is meant to reduce cost.
This mismatch motivates our method, which preserves rare parameter updates while avoiding these costly subroutines and exploiting reward-free within-interval context information to remain computationally practical.


Our main message in this work is simple:
 one can retain the \emph{same} $\Ocal(\log\log T)$ number of parameter updates while substantially improving practical efficiency:
by allowing \emph{reward-free} within-interval state updates based only on the arriving contexts and chosen actions, we can avoid heavy G-optimal design steps and obtain algorithms that are both minimax-optimal and fast. This is precisely the operational regime practitioners often want: infrequent reward-dependent retraining, but lightweight online processing of contexts that are already observed. 

The discussion above leads to the following questions:
\begin{itemize}
    \item For \emph{linear contextual bandits}, can we design algorithms that achieve minimax-optimal regret in both small-$K$ and large-$K$ regimes using only $\Ocal(\log\log T)$ parameter updates? Can we also remove the G-optimal design procedure while preserving minimax optimality and practical runtime?
    \item Can we extend the same rare-update and computational
    principles to generalized linear contextual bandits while retaining
    near-optimal regret and $\Ocal(\log\log T)$ parameter updates?
\end{itemize}

Positive answers to these questions would unify theory and practice in linear and generalized linear contextual bandits, leading to algorithms that remain statistically optimal and computationally efficient under limited adaptivity. Our main contributions are summarized as follows:

\begin{itemize}
\item \textbf{Tightest regret bounds for linear contextual bandits under $\Ocal(\log\log T)$ parameter updates.}
We introduce \texttt{BLCE-G}, which combines near G-optimal design and arm elimination.
It achieves the worst-case regret bound
$
\Ocal \big(\sqrt{dT}(\sqrt{\log(KT)} \wedge \sqrt{d+\log T})\sqrt{\log d\log\log T}\big),
$
where $K$ is the number of arms, $d$ is the feature dimension, and $T$ is the horizon.
\texttt{BLCE-G} simultaneously matches the minimax lower bounds $\Omega(d\sqrt{T})$ in the large-$K$ regime ($K \geq \Omega(e^d)$) and $\Omega(\sqrt{dT \log K})$ in the small-$K$ regime ($K \leq \Ocal(e^d)$), up to logarithmic factors.

\item \textbf{First minimax-optimal algorithm in rare updates without G-optimal design.}
We propose \texttt{BLCE}, which replaces the G-optimal design step with uncertainty-driven exploration combined with arm elimination.
It still achieves minimax-optimal regret and attains the lowest total time complexity $\Ocal(Kd^2T\log\log T)$ among optimal algorithms.
Its guarantees also extend beyond conventional i.i.d.\ contexts, as discussed in \cref{rmk:loosing_iid}.

\item \textbf{Redefining the limited adaptability landscape: fully sequential vs.\ strictly batched vs.\ rare parameter updates.}
We explicitly distinguish three operational settings that have often been conflated :
(i) fully sequential (fully adaptive) contextual bandits;
(ii) \emph{strictly batched} contextual bandits, which disallow within-interval context adaptivity;
and (iii) rare parameter updates, which restrict reward-dependent parameter updates but allow reward-free within-interval context adaptivity.
We compare prior work and our algorithms along these dimensions (see \cref{sec:batched_vs_rare} and \cref{table:comparison}).


\item \textbf{Extensions to generalized linear contextual bandits.}
We extend the same rare-update and computational principles to generalized
linear rewards and propose \texttt{BGLE}, which retains
$\Ocal(\log\log T)$ parameter updates and near-optimal regret guarantees.


\item \textbf{Practical superiority.}
Our experiments demonstrate that \texttt{BLCE-G}, \texttt{BLCE}, and \texttt{BGLE} consistently outperform prior linear and generalized linear contextual baselines across various instances, combining provable efficiency with strong empirical performance and substantially reduced runtime overhead.
\end{itemize}


\begin{table*}[t]
\centering
\begin{threeparttable}
\scriptsize
\caption{Worst-case regret, \emph{parameter-update} complexity, and time complexity comparison for linear contextual bandits.
\textbf{Both \texttt{BLCE-G} and \texttt{BLCE} achieve minimax-optimal regret}, matching the minimax lower bounds ${\Omega}(\sqrt{dT\log K} \wedge d\sqrt{T})$~\citep{dani2008stochastic,he2022reduction} across all regimes while requiring only $\Ocal(\log\log T)$ parameter updates.
Among optimal algorithms, \textbf{\texttt{BLCE-G} attains the tightest regret, whereas \texttt{BLCE} achieves the lowest time complexity}.
Here, $\mathcal{T}_{\mathrm{opt}}$ denotes the cost of a single call to the linear optimization oracle.
}
\label{table:comparison}
\begin{tabular}{lllll}
\toprule
\textbf{Paper} & \textbf{Worst-Case Regret} & \makecell[l]{\textbf{Parameter} \\ \textbf{Updates}} &  \makecell[l]{\textbf{Context} \\ \textbf{Adaptive}} &  \textbf{Time Complexity} \\
\midrule
{\citet{abbasi2011improved}} & $\Ocal(d\sqrt{T}\log T)$ & $\Ocal(d\log T)$ & $\texttt{Yes}$ & $\Ocal((Kd+d^2)T+Kd^3\log T)$ 
\\
{\citet{karbasi2021parallelizing}} & $\Ocal(d^{3/2}\sqrt{T}\log T)$ & $\Ocal(K\log T)$& $\texttt{Yes}$ & $\Ocal((Kd+d^2)T)$ 
\\
{\citet{ruan2021linear}} & $\Ocal(\sqrt{dT\log(dKT)\log d}\log\log T)$ & $\Ocal(\log \log T)$ & $\texttt{No}$ & $\Ocal(Kd^4T(\log T+\log d))$
\\
{\citet{hanna2023contexts}} & $\Ocal(d\sqrt{T\log T}\log\log T)$ & $\Ocal(\log \log T)$ & $\texttt{No}$ & $\Omega(T^d)$ \\
\addlinespace[1.5pt]
\multirow{2}{*}{\citet{hanna2023efficient}} & \multirow{2}{*}{$\Ocal(d^{3/2}\sqrt{T}\log T\log\log T)$} & \multirow{2}{*}{$\Ocal(\log \log T)$}& \multirow{2}{*}{$\texttt{No}$} & 
$\Ocal(d^3\mathcal{T}_{\mathrm{opt}}T\log d\log^3 T\log\log T)$ \\
&  & & &
$+ \, \Ocal(d^4\log d\log^3 T\log\log T)$ \\
\addlinespace[1.5pt]
\multirow{2}{*}{\citet{zhang2025almost}} & \multirow{2}{*}{$\Ocal(\sqrt{dT\log (dKT)\log T}\,\log(dT) \log\log T)$} & \multirow{2}{*}{$\Ocal(\log \log T)$} & \multirow{2}{*}{$\texttt{No}$} & $\Ocal(Kd^2T\log\log T)$
\\
& & & & $+\Ocal(Kd^{7/2}\sqrt{T\log(dKT)\log T})$ \\
\midrule
\cref{BLCE-G} (\texttt{BLCE-G}) & 
$\min\begin{cases*}
\Ocal\big(\sqrt{dT\log(KT)\log d\log\log T}\big)\\
\Ocal\big(\sqrt{d(d+\log T)T\log d\log\log T}\big)
\end{cases*}$
& $\Ocal(\log \log T)$ & $\texttt{Yes}$ & $\Ocal(Kd^2T(d+\log\log T))$ \\
\addlinespace[3pt]
\cref{BLCE} (\texttt{BLCE}) & $\min\begin{cases*}
\Ocal\big(\sqrt{dT\log(KT)\log T\log\log T}\big)\\
\Ocal\big(\sqrt{d(d+\log T)T\log T\log\log T}\big)
\end{cases*}$ & $\Ocal(\log \log T)$ & $\texttt{Yes}$ & $\Ocal(Kd^2T\log\log T)$ \\
\bottomrule
\end{tabular}
\begin{tablenotes}
\footnotesize
\item \textbf{Context Adaptive:}
In this column, \texttt{Yes} means that, between reward-based parameter updates,
either
(i) the action-selection rule may depend on past within-interval
contexts or chosen actions, beyond the current round's context set, or
(ii) the next interval endpoint or parameter-update time may be
determined adaptively from contexts or chosen actions realized so far
within the interval.
\texttt{No} means that neither component has such dependence.
This column records algorithm-level context/action adaptivity between
parameter updates and is not an indicator of strict batching.
Under our terminology, strict batching is determined solely by the
action-selection rule and is independent of whether the update schedule
is adaptive.
\end{tablenotes}
\end{threeparttable}
\end{table*}

\section{Batched Feedback, Strict Batching, and Rare Parameter Updates}
\label{sec:batched_vs_rare}

The term ``batched bandits'' is used in the literature in multiple, operationally distinct senses.
Since our goal is to develop \emph{practical} algorithms and enable fair comparisons, we make these distinctions explicit.
In contextual problems, two axes are orthogonal: (i) when rewards are revealed (the feedback protocol), and (ii) whether the learner may adapt \emph{within an interval} to the realized contexts and past actions.
We organize the landscape into three settings---\emph{fully sequential}, \emph{strictly batched}, and \emph{rare parameter updates}---and emphasize that \emph{batched feedback} is a reward-delay constraint that can underlie either of the latter two.

We say the problem has \emph{batched feedback} if the horizon $[T]$ is partitioned into $B$ disjoint intervals and the rewards collected within an interval are revealed only at its endpoint.
This is purely a constraint on \emph{reward availability}: regardless of how rewards are delayed, contexts are still observed online before actions are chosen. In the \emph{fully sequential} setting, rewards are revealed immediately each round and the learner may update its parameter estimate and decision rule every round.
In batched-feedback terminology, this corresponds to the special case $B=T$. In contrast, we use the term \emph{strictly batched} to denote a stronger restriction than batched feedback: within each interval $(\mathcal{T}_{\ell-1},\mathcal{T}_\ell]$, the learner must commit at time $\mathcal{T}_{\ell-1}$ to an action rule that is \emph{not} allowed to depend on the \emph{realized sequence} of contexts and past actions within the interval.
Many static-grid contextual algorithms that are referred to as ``batched'' adopt (often implicitly) this strict restriction to obtain the minimal $\Ocal(\log\log T)$ parameter-update complexity~\citep{ruan2021linear,hanna2023contexts,zhang2025almost}.

Finally, we say an algorithm has \emph{rare parameter updates} if it updates its reward-based parameter estimate (e.g., via ridge regression, an MLE, or a posterior update) at most $B$ times over $T$ rounds.
Crucially, this notion imposes \emph{no restriction} on how the learner processes contexts within an interval: the learner may maintain and update reward-free state variables online using the realized contexts and chosen actions, and may let its action choices depend on these evolving state variables.
This distinction is practically important in contextual bandits because contexts are \emph{readily available} online: delaying the use of already observed contexts until the next parameter update is neither required by the feedback protocol nor typically natural.
Even when reward-driven retraining is expensive, updating lightweight statistics (e.g., Gram matrices, uncertainty scores, or elimination candidate sets) using contexts and actions is usually cheap and leverages information the learner has \emph{already} observed.
Allowing such reward-free within-interval adaptivity therefore bridges a natural gap between the two extremes: fully sequential methods (maximal reward adaptivity) and strictly batched methods (no within-interval context adaptivity).

Our algorithms operate under a batched-feedback protocol with a static schedule and perform only $\Ocal(\log\log T)$ parameter updates.
They are \emph{not strictly batched}: they perform lightweight, reward-free within-interval updates that influence action selection.
This design choice preserves the desired operational constraint (rare reward-based parameter retraining) while avoiding the dominant computational bottlenecks that arise when one insists on strict batching (see \cref{sec:bottleneck_strictly_batched}).

\section{Bottleneck of Existing Strictly Batched Algorithms}
\label{sec:bottleneck_strictly_batched}

A recurring theme in the strictly batched contextual bandits literature is that the lack of within-interval context adaptivity forces the algorithm to ``pre-plan'' exploration within each interval.
To guarantee sufficient information gain \emph{without} reacting to the realized sequence of contexts/actions, existing strictly batched methods often invoke computationally heavy primitives:
repeated G-optimal design computations, discretization of continuous spaces via fine nets, or repeated oracle calls.
These steps can dominate the overall runtime, even when the number of parameter updates is only $\Ocal(\log\log T)$.

Concretely, \citet{ruan2021linear} repeatedly compute a G-optimal design distribution on the observed arm set as contexts arrive; such per-round design computation is costly, leading to a large polynomial dependence on $d$ in the overall runtime.
In contrast, \citet{hanna2023contexts} reduce contextual bandits to a non-contextual linear bandit instance by discretizing the space using a fine $(1/T)$-net, whose size scales as $\Omega(T^d)$; this yields an exponential dependence on $d$ which is impractical.
\citet{hanna2023efficient} (which is theoretically suboptimal) address large (possibly infinite) action spaces via optimization-oracle access and discuss (approximate) optimal design primitives; their framework can require many oracle calls, and computing a $1$-approximate G-optimal design is NP-hard, so even constant-factor approximations may still be computationally demanding in practice.
Finally, \citet{zhang2025almost} exhibit a particularly sharp bottleneck concentrated in the \emph{first} batch: their Algorithm~1 executes a (near) G-optimal design policy at every round of batch~1, incurring $\Ocal(Kd^3)$ time per round; since their first-batch length scales as $\widetilde{\Theta}(\sqrt{dT} \wedge T)$, this contributes an additional high-order cost of order $\widetilde{\Ocal}(Kd^{7/2}\sqrt{T} \wedge Kd^3T)$, also leading to a large polynomial dependence on $d$.
The numerical evaluations of these algorithms consistently show the inefficiency mentioned above~(see Table~\ref{table:runtime_results}).

As we focus on \emph{rare parameter updates} setting instead of strictly batched setting, it has not been immediately clear as to whether optimal results can be derived while avoiding the computational inefficiency mentioned above.


\section{Related Work} \label{related_work}

A substantial literature on \emph{limited adaptivity} and \emph{batched} bandits spans settings from multi-armed to linear (contextual) models.
Early work established near-optimal learning with few policy updates in the multi-armed setting~\citep{perchet2016batched,gao2019batched,jin2021almost,jin2021double}, later extended to linear bandits under Gaussian-type features~\citep{han2020sequential} and adversarial features~\citep{esfandiari2021regret}, culminating in algorithms that achieve near-optimal regret with the minimal batch complexity $\Ocal(\log\log T)$~\citep{ren2024optimal,yu2025optimal}.
Although \citet{yu2025optimal} attain minimax-optimal regret in both regimes, their analysis is restricted to non-contextual batched bandits and does not extend to the linear contextual setting.

In linear \emph{contextual} bandits, recent static-grid algorithms~\citep{ruan2021linear, hanna2023contexts, zhang2025almost} achieve optimal regret with only $\Ocal(\log\log T)$ update times under i.i.d.\ contexts.
However, these approaches rely on G-optimal design, do not leverage within-interval context information, and achieve minimax-optimality only in a single regime.
In contrast, \citet{abbasi2011improved} require $\Ocal(d\log T)$ updates, which exceeds the $\Ocal(\log\log T)$ barrier.
Similarly, \citet{karbasi2021parallelizing} requires $\Ocal(K\log T)$ updates and provides a suboptimal regret guarantee of $\widetilde{\Ocal}(d^{3/2}\sqrt{T})$.
Finally, \citet{hanna2023efficient} improve computational efficiency but also obtain a suboptimal regret dependence $\widetilde{\Ocal}(d^{3/2}\sqrt{T})$.

\section{Preliminaries} \label{preliminaries}

\subsection{Notations} \label{notations}
For a set, $|\cdot|$ denotes its cardinality.
For $x\in\RR^d$, $\|x\|_2$ denotes the Euclidean norm, and for a positive definite matrix $H$, we write
$\|x\|_{H}\coloneqq\sqrt{x^\top H x}$.
For a matrix, $\tr(\cdot)$ and $\det(\cdot)$ denote its trace and determinant.
For $n\in\NN$, we write $[n]\coloneqq\{1,\ldots,n\}$, and use $\wedge$ for the minimum operator.
For symmetric matrices $A,B$ of the same dimension, $A\preceq B$ (resp.\ $A\succeq B$) means that $B-A$ (resp.\ $A-B$) is positive semidefinite.
The indicator $\mathbbm{1}_{\{E\}}$ equals $1$ if the event $E$ occurs and $0$ otherwise.
Finally, the natural filtration is
$
\mathcal{F}_t\coloneqq\sigma(\mathcal{A}_1,x_{1,a_1},r_1,\ldots,\mathcal{A}_t,x_{t,a_t},r_t),
$
with $\mathcal{F}_0$ being the trivial $\sigma$-algebra. 

\subsection{Problem Setting: Linear Contextual Bandits with Batched Feedback} \label{problem_setting_CLB}
We study the stochastic linear contextual bandit problem.
At each round $t \in [T]$, the agent observes contextual features of $K$ arms
$\mathcal{A}_t := \{x_{t,1}, \dots, x_{t,K}\} \subseteq \RR^d$
and selects one arm $x_{t,a_t} \in \mathcal{A}_t$, receiving the reward
$r_t = \langle x_{t,a_t}, \theta^* \rangle + \eta_t$,
where $\theta^* \in \RR^d$ is unknown and $\eta_t$ is independent $\sigma$-subgaussian noise. Following the convention in the linear contextual bandit literature~\citep{ruan2021linear,hanna2023efficient, hanna2023contexts, zhang2025almost}, we assume that each feature set $\mathcal{A}_t$ is drawn i.i.d.\ from an unknown distribution $\mathcal{D}$ across rounds, while allowing arbitrary correlation among features presented within the same round.
(In~\cref{rmk:loosing_iid}, we discuss how \cref{BLCE} and its analysis further relax this i.i.d.\ assumption.) The performance of the agent is measured by the \emph{cumulative expected regret} $\mathcal{R}(T) := \mathbb{E}\!\left[\sum_{t=1}^T \bigl(\langle x_t^*, \theta^*\rangle - \langle x_{t,a_t}, \theta^*\rangle \bigr)\right],$
where $x_t^* \in \argmax_{x \in \mathcal{A}_t} \langle x, \theta^* \rangle$ denotes an optimal arm.
We make the following standard assumptions.

\begin{assumption} \label{assum:norm}
\(\|x\|_2 \leq 1\) for all $x \in \mathcal{A}_t$ and $\|\theta^*\|_2 \leq 1$.
\end{assumption}

\begin{assumption} \label{assum:subgauss}
The noise $\eta_t$ is a $1$-subgaussian random variable for all $t \in [T]$.
\end{assumption}

In the batched-feedback setting, the horizon $[T]$ is partitioned into $B$ disjoint \emph{parameter-update intervals},
\[
\{1, \dots , T\}
= \underbrace{[\mathcal{T}_0\!+\!1, \ldots, \mathcal{T}_1]}_{\text{interval 1}}
\,\cup \dots \cup\,
\underbrace{[\mathcal{T}_{B-1}\!+\!1, \ldots, \mathcal{T}_B]}_{\text{interval }B}.
\]
At time $t\in(\mathcal{T}_{\ell-1},\mathcal{T}_\ell]$, the agent may use reward information from all previous intervals (up to $\mathcal{T}_{\ell-1}$), while contexts up to and including time $t$ are always observable.
The rewards for all rounds within the current interval become available only at the end of that interval (time $\mathcal{T}_\ell$).
Importantly, we do \emph{not} require the action rule within an interval to be strictly batched:
the agent may update \emph{reward-free} state variables using observed contexts and chosen actions, while the parameter update that incorporates rewards occurs only at interval boundaries.

\section{Algorithms \& Regret Analysis} \label{algorithms}

\subsection{Note on the Update Schedule}\label{sec:grid_schedule}
Before presenting the algorithms, it is useful to clarify the notion of
\emph{update schedules}, which is distinct from whether an algorithm is
strictly batched. Three types of update schedules are commonly considered:
the \emph{static grid}, where update times are fixed in advance; the
\emph{adaptive grid}, where interval lengths are chosen adaptively; and the
\emph{rare-switching} schedule studied by \citet{abbasi2011improved}, which
allows fully adaptive interaction while limiting the number of parameter
recomputations. Following
\citet{ruan2021linear,hanna2023efficient,hanna2023contexts,zhang2025almost},
we adopt the most restrictive of these choices, a \emph{static grid}, which
yields $B=\Ocal(\log\log T)$ parameter updates.

\subsection{Proposed Algorithms}

\subsubsection{\texttt{BLCE-G} Algorithm}
We now present our proposed algorithms for linear contextual bandits under rare parameter updates. 
We begin with \texttt{BLCE-G}, which stands for \emph{Batched-Feedback Linear Contextual Bandit with Elimination and G-optimal Design}. 
Its pseudocode is provided in Algorithm~\ref{BLCE-G}.
In the first interval, rounds are divided into two phases with ratio $c : (1-c)$. During the first $c$-fraction, arms are sampled according to a near G-optimal design over $\mathcal{A}_t$ (Line 5), while in the remaining $(1-c)$-fraction the algorithm selects the most informative direction with respect to the current
Gram matrix (Line~7). 
To reduce computational and runtime cost, we adopt a relaxation of the G-optimal design, namely the near G-optimal design, which loosens the bound by at most a factor of two. Formally, for any arm set \(\mathcal{A}\subset\RR^d\), there exists a design distribution \(\mathcal{K}_{\mathcal{A}}\) supported on \(\mathcal{A}\) such that
\[
    \max_{x\in \mathcal{A}} x^\top(\mathbb{E}_{z \sim \mathcal{K}_{\mathcal{A}}} [zz^\top])^{-1}x \leq 2d .
\]
As shown in  \cref{cor:near_G_optimal}, such a design can be computed in time $\Ocal(Kd^3)$. For fair comparison, we also account for this cost when evaluating other algorithms that rely on G-optimal design~\citep{ruan2021linear,hanna2023contexts,zhang2025almost}. After each arm pull, the inverse Gram matrix is updated using the Sherman-Morrison formula (reducing the cost from $\Ocal(d^3)$ to $\Ocal(d^2)$) (Line 8). At the end of the first interval, we set $V_1 \coloneqq H_{\mathcal{T}_1}$, compute the regression estimate $\hat\theta_1$, and reinitialize $H_{\mathcal{T}_1}$ for the next interval (Line 9).

For any interval $\ell \geq 2$, the algorithm eliminates suboptimal arms $\ell-1$ times using the past estimates $\hat\theta_1, \dots, \hat\theta_{\ell-1}$, yielding a nested sequence of feasible sets \(\mathcal{A}_{t}^{(1)},\dots,\mathcal{A}_{t}^{(\ell-1)} \) (Line 13). 
In each round $t$, the elimination threshold $\varepsilon_{t,k}$ for $k \in [\ell-1]$ is defined as
\begin{align}
\underset{y \in \mathcal{A}_{t}^{(k-1)}}{\max}& \|y\|_{V_k^{-1}} \bigg(\sqrt{2\log\Big(|\mathcal{A}_{t}^{(k-1)}|(B-1)T^2\Big)} + \sqrt{\lambda} \notag\\
&\wedge \, 2\sqrt{\log\bigg(\frac{2^{6d-5}\pi d(B-1)^2}{15^{d-1}/T^2}\bigg)} +2\sqrt{\lambda} \bigg).
\label{eq:epsilon-tk}
\end{align}
Within interval $\ell \geq 2$, the rounds are partitioned in the ratio $c^2:c(1-c):(1-c)$. In the first $c^2$-fraction, arms are sampled according to a near G-optimal design over $\mathcal{A}_t^{(\ell-1)}$ (Line 15). In the next $c(1-c)$-fraction, the algorithm selects the most informative direction relative to the Gram matrix (Line 17). In the final $(1-c)$-fraction, arms are chosen greedily with respect to the latest estimate (Line 19). As before, after each pull the Gram matrix is updated (Line 20), and at the end of the interval we set $V_\ell \coloneqq H_{\mathcal{T}_\ell}$, compute $\hat\theta_\ell$, and reinitialize $H_{\mathcal{T}_\ell}$ for the next interval (Line 21).

\subsubsection{\texttt{BLCE} Algorithm}

We next introduce our second algorithm, \texttt{BLCE}, which stands for
\emph{Batched-Feedback Linear Contextual Bandit with Elimination}. Its
pseudocode is given in Algorithm~\ref{BLCE}. \texttt{BLCE} follows the
same overall elimination-based framework as \texttt{BLCE-G}, including
the use of the elimination threshold in \eqref{eq:epsilon-tk}, but
removes the G-optimal design phase. The corresponding rounds are
instead allocated to uncertainty-driven exploration. 
To our best knowledge,
\texttt{BLCE} is the first linear contextual bandit algorithm with
\emph{rare parameter updates} to achieve minimax-optimal regret without
relying on G-optimal design, which has traditionally played a central
role in strictly batched static-grid methods
\citep{ruan2021linear,hanna2023contexts,zhang2025almost}.
Notably, both \texttt{BLCE-G} and \texttt{BLCE} avoid enforcing any fixed choice of $c \in (0,1)$, thereby providing theoretical guarantees together with practical flexibility in balancing exploration and exploitation. 

\begin{algorithm}[tb]
	\caption{\texttt{BLCE-G}}
        \label{BLCE-G}
	\begin{algorithmic}[1]

        \Statex {\bfseries Input:} Horizon $T$; regularization parameter $\lambda$;
        within-interval allocation rate $c$

        \vspace{-0.2cm}
        \Statex \hrulefill
        
 \State \textbf{Set static schedule:}
  $\mathcal{T}_1 \leftarrow  \Big\lceil\frac{\sqrt{T}}{\log_2\log_2T}\Big\rceil + 1$;\\
 $\mathcal{T}_\ell \leftarrow  \Big(\mathcal{T}_{\ell-1} + \Big\lceil \frac{T^{1-2^{-\ell}}}{\log_2\log_2T}\Big\rceil +  2 \Big) \wedge T$ for $\ell \geq 2$;\\
 $B \leftarrow \min\{\ell\geq 1:\mathcal{T}_\ell=T\}$;
 \State \textbf{Initialize: }
 $H_0 \gets \lambda I$;
 
 \For{$t \gets 1,2,\dots,\mathcal{T}_1$}

 \If{$t \leq \big\lceil c\sqrt{T}/\log_2\log_2T \big\rceil$}
 
 
 \State Pull arm $x_{t,a_t} \sim \pi_{G'}(\mathcal{A}_t)$;

 \Else
 

 \State Pull arm $x_{t,a_t} \in \arg\max_{x \in \mathcal{A}_t}\|x\|_{H_{t-1}^{-1}}$;

 \EndIf

 \State $H_t^{-1} \gets H_{t-1}^{-1} - H_{t-1}^{-1}x_{t,a_t}x_{t,a_t}^\top H_{t-1}^{-1}/(1 + x_{t,a_t}^\top H_{t-1}^{-1}x_{t,a_t})$;
 
 \EndFor
 
 \State $V_1^{-1} \gets H_{\mathcal{T}_1}^{-1}$, $\hat{\theta}_1 \gets V_1^{-1}\sum_{t=1}^{\mathcal{T}_1} r_tx_{t,a_t}$, $H_{\mathcal{T}_1} \gets \lambda I$;
 
 \For{$\ell \gets 2,\dots,B$}
 
 \For{$t \gets \mathcal{T}_{\ell-1}+1,\dots,\mathcal{T}_{\ell}$}
 
 \For{$k \gets 1,\dots,\ell-1 $}

 \State $x_{t}^{(k)} \gets \arg\max_{x \in \mathcal{A}_t^{(k-1)}}\langle x, \hat{\theta}_{k}\rangle$,  $\mathcal{A}_t^{(k)} \gets \left\{x \in \mathcal{A}_t^{(k-1)} \,  \bigg| \, \langle \hat{\theta}_k, x_{t}^{(k)} - x \rangle \leq 2\varepsilon_{t,k} \right\}$,
where $\varepsilon_{t,k}$ is given by Eq.\eqref{eq:epsilon-tk};

 \EndFor
 
 \If{$t \leq \mathcal{T}_{\ell-1} + \big\lceil c^2T^{1-2^{-\ell}}/\log_2\log_2T \big\rceil$}
 
 \State Pull $x_{t,a_t} \sim \pi_{G'}(\mathcal{A}_t^{(\ell-1)})$;

 \ElsIf{$t \leq \mathcal{T}_{\ell-1} + \big\lceil c^2T^{1-2^{-\ell}}/\log_2\log_2T \big\rceil + \big\lceil c(1-c)T^{1-2^{-\ell}}/\log_2\log_2T \big\rceil$}

 \State Pull $x_{t,a_t} \in \arg\max_{x \in \mathcal{A}_t^{(\ell-1)}}\|x\|_{H_{t-1}^{-1}}$;
 
 \Else 

 \State Pull $x_{t,a_t} \in \arg\max_{x \in \mathcal{A}_t^{(\ell-1)}}\langle x, \hat{\theta}_{\ell-1}\rangle$;
 
 \EndIf
 
 \State $H_t^{-1} \gets H_{t-1}^{-1} - H_{t-1}^{-1}x_{t,a_t}x_{t,a_t}^\top H_{t-1}^{-1}/(1 + x_{t,a_t}^\top H_{t-1}^{-1}x_{t,a_t})$;
 
 \EndFor
 
 \State $V_\ell^{-1} \gets H_{\mathcal{T}_\ell}^{-1}$, $\hat{\theta}_{\ell} \gets V_\ell^{-1} \sum_{t=\mathcal{T}_{\ell-1}+1}^{\mathcal{T}_\ell} r_tx_{t,a_t}$, $H_{\mathcal{T}_\ell} \gets \lambda I$;
 
 \EndFor
 
\end{algorithmic}
\end{algorithm}

\begin{algorithm}[tb]
	\caption{\texttt{BLCE}}
        \label{BLCE}
	\begin{algorithmic}[1]
 \Statex {\bfseries Input:} Horizon $T$; 
regularization parameter $\lambda$;
within-interval allocation rate $c$;

\vspace{-0.2cm}
 \Statex \hrulefill
 

 \State \textbf{Set static schedule:} $\mathcal{T}_1 \leftarrow
\Big\lceil
\frac{\sqrt{T}}{\log_2\log_2T}
\Big\rceil$;\\
$\mathcal{T}_\ell \leftarrow
\Big(
\mathcal{T}_{\ell-1}
+
\Big\lceil
\frac{T^{1-2^{-\ell}}}{\log_2\log_2T}
\Big\rceil
+1
\Big)\wedge T$
for $\ell\geq2$;\\
$B\leftarrow
\min\{\ell\geq1:\mathcal{T}_\ell=T\}$;

 \State \textbf{Initialize:} $H_0 \gets \lambda I$, $b_0 \gets \mathbf{0}$;

 \For{$t \gets 1,2,\dots,\mathcal{T}_1$}

 \State Pull arm $x_{t,a_t} \in \arg\max_{x \in \mathcal{A}_t}\|x\|_{H_{t-1}^{-1}}$;

 \State $H_t^{-1} \gets H_{t-1}^{-1} - H_{t-1}^{-1}x_{t,a_t}x_{t,a_t}^\top H_{t-1}^{-1}/(1 + x_{t,a_t}^\top H_{t-1}^{-1}x_{t,a_t})$;

 \EndFor

 \State $V_1^{-1} \gets H_{\mathcal{T}_1}^{-1}$, $\hat{\theta}_1 \gets V_1^{-1}\sum_{t=1}^{\mathcal{T}_1} r_tx_{t,a_t}$, $H_{\mathcal{T}_1} \gets \lambda I$;
 
 \For{$\ell \gets 2,\dots,B$}
 
 \For{$t \gets \mathcal{T}_{\ell-1}+1,\dots,\mathcal{T}_{\ell}$}
 
 \For{$k \gets 1,\dots,\ell-1 $}

 \State $x_t^{(k)} \gets \arg\max_{x \in \mathcal{A}_t^{(k-1)}}\langle x, \hat{\theta}_{k}\rangle$, $\mathcal{A}_t^{(k)} \gets \left\{x \in \mathcal{A}_t^{(k-1)} \,  \bigg| \, \langle \hat{\theta}_k, x_t^{(k)} - x \rangle \leq 2\varepsilon_{t,k} \right\}$,
 where $\varepsilon_{t,k}$ is given by Eq.\eqref{eq:epsilon-tk};
 
 \EndFor

 \If{$t \leq \mathcal{T}_{\ell-1} + \big\lceil cT^{1-2^{-\ell}}/\log_2\log_2T \big\rceil$}

 \State Pull $x_{t,a_t} \in \arg\max_{x \in \mathcal{A}_t^{(\ell-1)}}\|x\|_{H_{t-1}^{-1}}$;

 \Else

 \State Pull $x_{t,a_t} \in \arg\max_{x \in \mathcal{A}_t^{(\ell-1)}}\langle x, \hat{\theta}_{\ell-1}\rangle$;

 \EndIf
 
 \State $H_t^{-1} \gets H_{t-1}^{-1} - H_{t-1}^{-1}x_{t,a_t}x_{t,a_t}^\top H_{t-1}^{-1}/(1 + x_{t,a_t}^\top H_{t-1}^{-1}x_{t,a_t})$;
 
 \EndFor
 
 \State $V_\ell^{-1} \gets H_{\mathcal{T}_\ell}^{-1}$, $\hat{\theta}_{\ell} \gets V_\ell^{-1} \sum_{t=\mathcal{T}_{\ell-1}+1}^{\mathcal{T}_\ell} r_tx_{t,a_t}$, $H_{\mathcal{T}_\ell} \gets \lambda I$;
 
 \EndFor
 
\end{algorithmic}
\end{algorithm}

\subsection{Regret Analysis for Linear Contextual Bandits} \label{regret_analysis}

We now present the regret guarantees for the proposed algorithms,
starting with \texttt{BLCE-G}.

\begin{theorem} [Regret of  \texttt{BLCE-G}] \label{thm:BLCE_g}
Suppose that Assumptions~\ref{assum:norm} and \ref{assum:subgauss} hold and that the context sets
$\{\mathcal{A}_t\}_{t=1}^T$ are drawn i.i.d.\ from an unknown distribution
$\mathcal{D}$. 
Run \texttt{BLCE-G} for $T$ rounds, using $\lambda = \log (dT)$, and any fixed constant $c\in(0,1)$. 
    The worst-case cumulative regret satisfies
    {\small
    \begin{align*}
        \mathcal{R}(T) \!&= \Ocal\Big(\!\sqrt{dT}(\sqrt{\log(KT)} \wedge \!\sqrt{d+\log T})\sqrt{\log d\log\log T}\Big) \\
        \!&= \tilde{\Ocal} \Big(\!\sqrt{dT\log K} \wedge d\sqrt{T}\Big) \; .
    \end{align*}
    }
\end{theorem}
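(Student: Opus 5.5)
We follow the standard batched-elimination template: (i) a good event on which every $\hat\theta_k$ is accurate on the surviving arms, so the optimal arm is never eliminated; (ii) a per-interval bound on the instantaneous regret of surviving arms by a multiple of the elimination width $\varepsilon_{t,\ell-1}$; (iii) a bound on the expected width using the exploration rounds of interval $\ell-1$; and (iv) summation over the static grid.

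\textbf{Step 1 (good event).} Fix an interval $k$ and condition on $\mathcal{F}_{\mathcal{T}_{k-1}}$. The design matrix $V_k$ depends only on contexts and actions in interval $k$, and these are independent of the rewards of interval $k$. Hence, conditional on the actions, $\hat\theta_k-\theta^*=V_k^{-1}\sum_t x_{t,a_t}\eta_t-\lambda V_k^{-1}\theta^*$ is subgaussian with covariance proxy $\preceq V_k^{-1}$. Two bounds follow.
\begin{itemize}
\item For a fixed future context set $\mathcal{A}_t$, which is independent of interval $k$, a union bound over the at most $|\mathcal{A}_t^{(k-1)}|$ directions gives $|\langle\hat\theta_k-\theta^*,x\rangle|\le\|x\|_{V_k^{-1}}(\sqrt{2\log(|\mathcal{A}^{(k-1)}_t|(B-1)T^2)}+\sqrt\lambda)$.
\item Alternatively, a covering argument on the sphere in the $V_k$-geometry (a $1/2$-net of size about $15^{d-1}$-type, matching the constant in \eqref{eq:epsilon-tk}) gives the $\sqrt{d+\log T}$ version uniformly over all $x$.
\end{itemize}
A union bound over $t\le T$ and $k\le B-1$ then shows that, with probability $1-O(1/T)$, $|\langle\hat\theta_k-\theta^*,x\rangle|\le\varepsilon_{t,k}$ for all $x\in\mathcal{A}_t^{(k-1)}$. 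On this event $x_t^*$ survives every elimination. Moreover, every $x\in\mathcal{A}_t^{(\ell-1)}$ satisfies $\langle\theta^*,x_t^*-x\rangle\le 4\varepsilon_{t,\ell-1}$, and this holds for every round of interval $\ell$, including the exploration and greedy rounds. The failure event contributes $O(1)$ regret.

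\textbf{Step 2 (bounding widths via interval $\ell-1$).} It remains to bound $\mathbb{E}[\max_{y\in\mathcal{A}^{(\ell-2)}_t}\|y\|_{V_{\ell-1}^{-1}}]$ for a fresh i.i.d.\ context. This is the main obstacle, because the set $\mathcal{A}^{(\ell-2)}$ is defined through $\hat\theta_1,\dots,\hat\theta_{\ell-2}$, which are fixed during interval $\ell-1$. The map $\mathcal{A}\mapsto\mathcal{A}^{(\ell-2)}$ is therefore a fixed measurable transformation within that interval, so the filtered sets $\mathcal{A}_s^{(\ell-2)}$, $s$ in interval $\ell-1$, are i.i.d.\ with the same law as the filtered fresh set. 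This gives two estimates.
\begin{itemize}
\item \emph{Design phase.} The $c^2$-fraction of near G-optimal pulls yields $\mathbb{E}[V_{\ell-1}]\succeq n_G\,\mathbb{E}_{\mathcal{A}}[\Sigma_{\pi_{G'}(\mathcal{A}^{(\ell-2)})}]$. Combining this with a matrix Chernoff bound, which needs $n_G\gtrsim d\log(dT)$ and is helped by $\lambda=\log(dT)$, and with the $2d$ guarantee of \cref{cor:near_G_optimal} on each set, gives $\mathbb{E}\max_y\|y\|^2_{V_{\ell-1}^{-1}}\lesssim d\log d/n_G$. The $\log d$ arises from passing between per-set designs and the averaged matrix. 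I would try the argument of \citet{ruan2021linear} here first, possibly only through a trace/log-det bound.
\item \emph{Max-uncertainty phase.} The elliptical potential lemma gives $\sum_s\max_{y}\|y\|^2_{H_{s-1}^{-1}}\le 2d\log(1+n/(d\lambda))$. Since the contexts are i.i.d., the average of these realized maxima upper bounds, up to a martingale concentration term, the expected maximum under the final $V_{\ell-1}\succeq H_s$, and so it also controls $\mathbb{E}\max_y\|y\|^2_{V_{\ell-1}^{-1}}\lesssim d\log T/n$.
\end{itemize}
Taking the better of the two gives $\mathbb{E}\varepsilon_{t,\ell-1}\lesssim\sqrt{d\log d/n_{\ell-1}}\,(\sqrt{\log(KT)}\wedge\sqrt{d+\log T})$.

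\textbf{Step 3 (summation).} Interval $\ell$ has length about $T^{1-2^{-\ell}}/\log\log T$, and $n_{\ell-1}\asymp T^{1-2^{-(\ell-1)}}/\log\log T$ up to the constant $c$. The regret of interval $\ell$ is therefore
\[
\lesssim \frac{T^{1-2^{-\ell}}}{\log\log T}\sqrt{\frac{d\log d\log\log T}{T^{1-2^{-\ell+1}}}}\,\beta=\sqrt{\frac{dT\log d}{\log\log T}}\,\beta,
\]
where $\beta=\sqrt{\log(KT)}\wedge\sqrt{d+\log T}$. The first interval contributes at most $\mathcal{T}_1=O(\sqrt T)$, and the final interval (truncated at $T$) is handled identically since it is no longer than the nominal length. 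Summing over $B=O(\log\log T)$ intervals gives $\sqrt{dT\log d\log\log T}\,\beta$, which is the claimed bound. The second form follows from $\log(KT)\le\log K+\log T$ and from $d+\log T$ versus $d$.
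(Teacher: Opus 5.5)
Your Steps 1 and 3 agree with the paper's proof: the good event combining the per-arm union bound with the net-based $\sqrt{d+\log T}$ width, survival of $x_t^*$, the $4\varepsilon_{t,\ell-1}$ per-round bound, and summation over the grid. The gap is the design-phase bullet of Step 2. That bullet is the only place your argument produces $\log d$ instead of $\log T$, i.e.\ the only thing that separates this theorem from \cref{thm:BLCE}.

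The inference in that bullet does not hold. You have $V\succeq (n_G/192)\bar\Sigma$ with $\bar\Sigma=\mathbb{E}_{\mathcal{A}}[\Sigma_{\mathcal{A}}]$, and the per-set guarantee $\max_{y\in\mathcal{A}}y^\top\Sigma_{\mathcal{A}}^{+}y\le 2d$. From these one cannot conclude $\mathbb{E}_{\mathcal{A}}\max_{y\in\mathcal{A}}y^\top\bar\Sigma^{-1}y=O(d\log d)$. Take $d=2m$ and let $\mathcal{A}$ be uniform over the sets $\{e_1,\dots,e_m,e_{m+i}\}$, $i\in[m]$. Uniform weights are an exact G-optimal design on each set, yet $\bar\Sigma$ puts weight only $1/(m(m+1))$ on $e_{m+i}$. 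Hence $\mathbb{E}_{\mathcal{A}}\max_{y\in\mathcal{A}}y^\top\bar\Sigma^{-1}y=m(m+1)=\Theta(d^2)$, and the design pulls alone give only $d^2/n_G$.

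The $O(d\log d)$ guarantee of \citet{ruan2021linear} is proved for their learned distributional design, a specially constructed mixture. It does not cover the plain per-set design that $\pi_{G'}$ computes, so it cannot be imported here. Without this bullet, your ``better of the two'' reduces to the max-uncertainty estimate with the potential started at $\lambda I$. That gives $d\log T/n$, and hence only the $\sqrt{dT\log T\log\log T}\,\beta$ rate of \cref{thm:BLCE}.

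The missing idea is to chain the two phases rather than treat them as alternatives. The design prefix is never used to bound uncertainty directly; it only caps the log-determinant growth of the subsequent max-uncertainty phase. For the interval $j$ that produces $V_j$, this is \cref{lem:blceg_design_transfer}, which runs as follows.
\begin{enumerate}
\item Matrix concentration with $\lambda=\log(dT)$ gives $H_{\mathcal{T}'_{j-1}}\succeq (n_{j,1}/192)\bar\Sigma_j$ with probability $1-2/T$.
\item Every arm pulled in the max-uncertainty phase lies in its own filtered set, so $x_{s,a_s}x_{s,a_s}^\top\preceq 2d\,\Sigma_s$, where $\Sigma_s$ is that set's design covariance.
\item Since $\mathbb{E}[\Sigma_s\mid\mathcal{F}_{\mathcal{T}'_{j-1}}]=\bar\Sigma_j$, concavity of $\log\det$ yields $\mathbb{E}[2\log(\det H_{\mathcal{T}''_{j-1}}/\det H_{\mathcal{T}'_{j-1}})]\le 2d\log(1+384d\,n_{j,2}/n_{j,1})+O(d\log T/T)=O(d\log d)$.
\end{enumerate}
Now start the elliptical potential at $H_{\mathcal{T}'_{j-1}}$ rather than $\lambda I$. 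Combine it with the max-uncertainty rule and the tower-property shift $\mathbb{E}\max_{y\in\mathcal{A}_t^{(j-1)}}y^\top V_j^{-1}y\le\mathbb{E}\max_{y\in\mathcal{A}_s^{(j-1)}}y^\top H_{s-1}^{-1}y$ for each $s$ in that phase. This gives $\mathbb{E}\max_{y}\|y\|^2_{V_j^{-1}}\lesssim d\log d/n_{j,2}$.

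In the example above, it is the max-uncertainty rounds that repair the under-covered directions $e_{m+i}$. The design prefix only guarantees that this repair costs $O(d\log d)$ of log-determinant rather than $O(d\log T)$.
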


$\textbf{Discussion of \cref{thm:BLCE_g}.}$ \label{dis:thm1}
Theorem~\ref{thm:BLCE_g} shows that \texttt{BLCE-G} achieves minimax-optimal regret for fully sequential linear contextual bandits while performing only $\Ocal(\log\log T)$ \emph{parameter updates} under a static schedule.
A notable feature of this bound is that it covers both regimes. In the small-$K$ regime $(K \leq \Ocal(e^d))$, the regret scales as $\widetilde\Ocal(\sqrt{dT\log K})$, and in the large-$K$ regime $(K \geq \Omega(e^d))$, as $\widetilde\Ocal(d\sqrt{T})$. 
Thus, \texttt{BLCE-G} provides the tightest known guarantees among linear contextual bandit algorithms with \emph{rare parameter updates} and, to our knowledge, is the first algorithm to match the minimax lower bounds in both regimes.
\vspace{0.2cm}

\begin{theorem} [Regret of  \texttt{BLCE}] \label{thm:BLCE}
    Suppose that Assumptions~\ref{assum:norm} and \ref{assum:subgauss} hold and that the context sets
$\{A_t\}_{t=1}^T$ are drawn i.i.d.\ from an unknown distribution
$\mathcal{D}$. 
Run \texttt{BLCE} for $T$ rounds, using $\lambda = 1$, and any fixed constant $c\in(0,1)$.
    The worst-case cumulative regret satisfies
    {\small
    \begin{align*}
        \mathcal{R}(T) \!&= \Ocal\Big(\!\sqrt{dT}(\sqrt{\log(KT)} \wedge \!\sqrt{d+\log T})\sqrt{\log T\log\log T}\Big) \\
        \!&= \tilde{\Ocal}\Big(\!\sqrt{dT\log K} \wedge d\sqrt{T}\Big) \; .
    \end{align*}
    }
\end{theorem}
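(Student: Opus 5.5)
The plan is to follow the elimination-based template of the analysis of \cref{thm:BLCE_g}. Throughout, the near G-optimal design phase is replaced by an argument showing that uncertainty-driven exploration (pulling $\arg\max_{x\in\mathcal{A}_t^{(\ell-1)}}\|x\|_{H_{t-1}^{-1}}$) controls the \emph{worst-case} uncertainty over a fresh i.i.d.\ context set.

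\textbf{Step 1: good event.} Since $\hat\theta_k$ is computed only from rewards in interval $k$, and the arms there are chosen using contexts and the reward-free Gram matrix together with earlier estimates, conditionally on those choices $\hat\theta_k-\theta^*$ is a subgaussian vector in the $V_k$ geometry. I would define the good event $\mathcal{E}$ on which, for every $t$ and every $k<\ell$,
\[
|\langle \hat\theta_k-\theta^*, x\rangle|\le \varepsilon_{t,k}\quad\text{for all }x\in\mathcal{A}_t^{(k-1)}.
\]
Two bounds give this. A union bound over the at most $|\mathcal{A}_t^{(k-1)}|(B-1)T$ pairs yields the $\sqrt{2\log(|\mathcal{A}|(B-1)T^2)}$ branch. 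A $1/2$-net over the unit sphere, with covering number explicit as in the $2^{6d-5}\pi d/15^{d-1}$ expression, yields the $\sqrt{d+\log T}$ branch. The $\sqrt\lambda$ terms absorb the ridge bias, using $\|\theta^*\|\le1$. Then $\mathbb{P}(\mathcal{E}^c)\le \Ocal(1/T)$, contributing $\Ocal(1)$ regret. On $\mathcal{E}$ the optimal arm is never eliminated, and every $x\in\mathcal{A}_t^{(\ell-1)}$ has gap at most $4\varepsilon_{t,\ell-1}$. Greedy pulls in interval $\ell$ therefore also have gap $\Ocal(\varepsilon_{t,\ell-1})$, and exploration pulls are likewise in $\mathcal{A}_t^{(\ell-1)}$.

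\textbf{Step 2: bounding the uncertainty.} This is the main obstacle. I need
\[
\mathbb{E}\Bigl[\max_{y\in\mathcal{A}_t^{(\ell-2)}}\|y\|_{V_{\ell-1}^{-1}}\Bigr]\lesssim \sqrt{d\log T/N_{\ell-1}},
\]
where $N_{\ell-1}\approx cT^{1-2^{-(\ell-1)}}/\log\log T$ is the exploration length of interval $\ell-1$. The difficulty is that the sets $\mathcal{A}_t^{(\ell-2)}$ depend on $\hat\theta_1,\dots,\hat\theta_{\ell-2}$, which are fixed during interval $\ell-1$. Hence the map $\mathcal{A}\mapsto\mathcal{A}^{(\ell-2)}$ is a fixed function during interval $\ell-1$, and the restricted contexts are i.i.d.\ there as well.

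With $\lambda=1$, the elliptical potential lemma gives
\[
\sum_{s}\max_{x\in\mathcal{A}_s^{(\ell-2)}}\|x\|^2_{H_{s-1}^{-1}}\le 2d\log(1+N/d).
\]
Because $H_s$ is monotone, the quantity $f_s=\mathbb{E}_{\mathcal{A}\sim\mathcal{D}}[\max_{x\in\mathcal{A}^{(\ell-2)}}\|x\|^2_{H_{s-1}^{-1}}]$ is nonincreasing in $s$. A martingale (Freedman) argument transfers the pathwise sum to $\sum_s f_s$, so that $f_{\text{end}}\le \frac1N\sum_s f_s\lesssim d\log T/N$. This is exactly where the extra $\sqrt{\log T}$ over \texttt{BLCE-G} (which has $\log d$) arises. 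Jensen then gives the bound for $\max\|y\|_{V^{-1}}$, since $V_{\ell-1}\succeq H_{\text{end of exploration}}$. For \texttt{BLCE-G} the analogous step uses the design. Here I would try the monotonicity plus Freedman route first and expect care with the conditioning on the previous estimates.

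\textbf{Step 3: summing the regret.} The regret in interval $\ell\ge2$ is at most
\[
\bigl(\mathcal{T}_\ell-\mathcal{T}_{\ell-1}\bigr)\cdot \Ocal\Bigl(\sqrt{d\log T/N_{\ell-1}}\,\bigl(\sqrt{\log(KT)}\wedge\sqrt{d+\log T}\bigr)\Bigr).
\]
Using $T^{1-2^{-\ell}}/\sqrt{T^{1-2^{-(\ell-1)}}}=\sqrt T$, each interval contributes
\[
\Ocal\Bigl(\sqrt{dT\log T\log\log T}\,\bigl(\sqrt{\log(KT)}\wedge\sqrt{d+\log T}\bigr)/\sqrt{\log\log T}\Bigr).
\]
Summing over $B=\Ocal(\log\log T)$ intervals then gives the stated $\sqrt{\log T\log\log T}$ factor; a Cauchy–Schwarz over intervals or the normalization by $\log_2\log_2T$ in the schedule balances it. The first interval costs at most $\mathcal{T}_1\le\sqrt T$. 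Finally, I would verify that $B=\Ocal(\log\log T)$ from the schedule, and that the constant $c$ enters only through constants.
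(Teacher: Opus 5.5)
Your proposal follows the paper's proof essentially step for step. It uses the same two-branch good event (union bound over arms, plus the fixed-design net bound), the same no-elimination argument and $4\varepsilon_{t,\ell-1}$ gap bound, and the same transfer of $\mathbb{E}[\max_{y\in\mathcal{A}_t^{(\ell-2)}}\|y\|^2_{V_{\ell-1}^{-1}}]$ to the exploration-phase elliptical potential of interval $\ell-1$, via $H_{s-1}\preceq V_{\ell-1}$ and exchangeability of fresh contexts under the fixed elimination map, with the crude $2d\log(2T)$ log-determinant bound explaining the extra $\sqrt{\log T}$ relative to \texttt{BLCE-G}. Two minor points: only expectations are needed, so the tower property alone gives $\mathbb{E}\bigl[\sum_s \max_{x\in\mathcal{A}_s^{(\ell-2)}}\|x\|^2_{H_{s-1}^{-1}}\bigr]=\mathbb{E}\bigl[\sum_s f_s\bigr]$ (exactly as in the paper, so Freedman is unnecessary); and your displayed per-interval contribution drops a factor and should be $\mathcal{O}\bigl(\sqrt{dT\log T/(c\log\log T)}\,(\sqrt{\log(KT)}\wedge\sqrt{d+\log T})\bigr)$, because both the interval length and $N_{\ell-1}$ carry the $1/\log_2\log_2 T$ normalization, and it is this, not a Cauchy--Schwarz over intervals, that makes the sum over $B=\mathcal{O}(\log\log T)$ intervals yield the stated $\sqrt{\log\log T}$ factor.
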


$\textbf{Discussion of \cref{thm:BLCE}.}$ Theorem~\ref{thm:BLCE} establishes that \texttt{BLCE} also achieves minimax-optimal regret with only $\Ocal(\log\log T)$ parameter updates. 
Its key distinction lies in computation: by removing the G-optimal design step, \texttt{BLCE} significantly reduces both complexity and runtime while retaining minimax optimality. As shown in \cref{table:comparison}, this makes \texttt{BLCE} the first linear contextual bandit algorithm with \emph{rare parameter updates} to combine minimax optimality across both regimes with \emph{no reliance on G-optimal design}.

\begin{remark}
    \label{rmk:loosing_iid}
    While prior work on static-grid linear contextual bandits typically assumes i.i.d.\ contexts~\citep{ruan2021linear,hanna2023efficient,hanna2023contexts,zhang2025almost}, 
we show that this assumption can be relaxed to the following batch-wise conditions (for any $\ell \ge 1$)
    \\\\
    {\footnotesize
    (1) $\mathrm{Law}(\mathcal{A}_t | \mathcal{F}_{\mathcal{T}_{\ell-1}}) \sim \mathcal{D}_{\ell-1}$,  $\forall t \!\in\![\mathcal{T}_{\ell-1}+1, \mathcal{T}_{\ell+1}]$
    \\
    (2) $\mathcal{A}_t \! \indep \! \{\mathcal{A}_s, x_{s,a_s}, r_s\}_{s=\mathcal{T}_{\ell-1}+1}^{\mathcal{T}_\ell} | \mathcal{F}_{\mathcal{T}_{\ell-1}}$, $\forall t \!\in \! [\mathcal{T}_{\ell}+1, \mathcal{T}_{\ell+1}]$
    \\
    (3) $\mathcal{A}_s \! \indep \! \{\mathcal{A}_u, x_{u,a_u}, r_u\}_{u=\mathcal{T}_{\ell-1}+1}^{s-1} | \mathcal{F}_{\mathcal{T}_{\ell-1}}$, $\forall s \! \in \! (\mathcal{T}_{\ell-1}+1, \mathcal{T}_{\ell}]$
    }
    \\\\
    Given the history $\mathcal{F}_{\mathcal{T}_{\ell-1}}$, condition (1) requires intervals $\ell$ and $\ell\!+\!1$ to share the same conditional context law; condition (2) enforces that contexts are conditionally independent of the contexts/actions/rewards realized in previous interval; and condition (3) imposes within-interval conditional independence of each context from earlier within-interval observations. 
    These assumptions are strictly weaker than full i.i.d.\,, requiring only (i) equality of the conditional context law across consecutive intervals and (ii) conditional independence across and within intervals. This relaxation offers greater modeling flexibility while preserving the guarantees established in \cref{proof:thm2}.
\end{remark}

\subsection{Time-Complexity of Algorithms}
The computational cost of \texttt{BLCE-G} arises primarily from the near
G-optimal design step and arm elimination. By
\cref{cor:near_G_optimal}, each near G-optimal design call costs
$\Ocal(Kd^3)$. During arm elimination, computing
$\varepsilon_{t,k}$ costs $\Ocal(Kd^2)$ per elimination step; since there are
at most $\Ocal(\log\log T)$ elimination steps per round, the total cost is
$\Ocal(Kd^2T\log\log T)$. Therefore, the overall time complexity of
\texttt{BLCE-G} is
$\Ocal\!\bigl(Kd^2T(d+\log\log T)\bigr)$.
For \texttt{BLCE}, the near G-optimal design step is removed, so arm
elimination is the dominant cost, yielding a total time complexity of
$\Ocal(Kd^2T\log\log T)$.

\section{Extensions to Generalized Linear Contextual Bandits} \label{GLM_section}

In this section, we extend our results to \textit{generalized linear contextual bandits}. The formal definition of the generalized linear contextual bandit problem with batched feedback is provided in~\cref{problem_setting_GLB}.

\subsection{Proposed Algorithm: \texttt{BGLE}}
We propose \texttt{BGLE} (\emph{Batched-Feedback Generalized Linear Contextual Bandit with Elimination}), whose pseudocode is given in Algorithm~\ref{bgle}. The algorithm takes a regularization parameter $\lambda$, the known reward-support bound $R$, the known
parameter-norm bound $S$, and an allocation rate $c\in(0,1)$ as inputs. For the guarantee
in \cref{thm:bgle}, we set
$\lambda=R^2(d+\log T)$. Both $R$ and $S$ enter the
confidence radius $\beta(\lambda)$ and the weighting coefficients
$\alpha_{t,k}(\lambda)$, while the elimination
thresholds $\varepsilon'_{t,k}(\lambda)$ depend on them through
$\beta(\lambda)$. To extend our approach to the generalized linear setting, we build on the structure of Algorithm~\ref{BLCE}. In the first interval, the algorithm repeatedly pulls the most informative direction with respect to the current Gram matrix (Line 4) and updates its inverse via the Sherman–Morrison (Line 5). At the interval boundary, we set $V_1 \coloneqq H_{\mathcal{T}_1}$, compute the MLE $\hat\theta_1$ for the per–round log-loss $\ell_t(\theta)\!=\!m(\langle x_{t,a_t},\theta\rangle)-r_t\langle x_{t,a_t},\theta\rangle$, and reinitialize $H_{\mathcal{T}_1}$ for the next interval (Line 6). For each interval $\ell \geq 2$, the Gram matrix is \emph{weighted} by $\alpha_{t,\ell-1}(\lambda)\,\dot\mu(\langle x_{t,a_t},\hat\theta_{\ell-1}\rangle)$ (Line 17),
where
\begin{align*}
\alpha_{t,k}(\lambda) &= \exp(-2RS)\,\mathbbm{1}_{\{k=1\}} \\
&+
\exp(-R(2S \wedge \|x_{t,a_t}\|_{V_k^{-1}}\beta(\lambda)))\,\mathbbm{1}_{\{k\geq 2\}}\,.
\end{align*}

Beginning at interval $\ell \geq 3$, the algorithm performs $\ell-2$ elimination rounds using the estimates $\hat\theta_2,\dots,\hat\theta_{\ell-1}$, yielding nested feasible sets \(\mathcal{A}_{t}^{(2)},\dots,\mathcal{A}_{t}^{(\ell-1)} \) (Lines 11-12). Since no elimination is conducted with $\hat\theta_1$, we set $\mathcal{A}_t=\mathcal{A}_t^{(0)}=\mathcal{A}_t^{(1)}$. The elimination threshold $\varepsilon_{t,k}'(\lambda)$ for $k \in [\ell-1]\setminus\{1\}$ is defined as 
\begin{align*}
 &\underset{y \in \mathcal{A}_{t}^{(k-1)}}{\max}\|y\|_{V_k^{-1}} \!\Big(24RS(\!\sqrt{d+\log T} +\! R(d+\log T)/\sqrt{\lambda}) \\
 & + 2S\sqrt{\lambda}\Big) \; ,
\end{align*}
which, under the choice $\lambda = R^2(d + \log T)$, simplifies to $\max_{y \in \mathcal{A}_{t}^{(k-1)}}\|y\|_{V_k^{-1}}(50RS \sqrt{d+\log T})$.
Within each interval $\ell \geq 2$, the action selection strategy follows that of \texttt{BLCE}, splitting the interval in the ratio $c:(1-c)$ between exploration and exploitation. The key difference is that arm selection is based on the \emph{weighted} Gram matrix (Lines 14 and 16). At the end of interval $\ell$, we set $V_\ell \coloneqq H_{\mathcal{T}_\ell}$, compute the MLE $\hat\theta_\ell$ for $\ell_t(\theta)$, and reinitialize $H_{\mathcal{T}_\ell}$ for the next interval (Line 18).

\subsection{Regret Analysis of $\texttt{BGLE}$}\label{sec:regret-GLM}


To state our regret guarantee for generalized linear contextual bandits
and facilitate comparison with prior work, we introduce several quantities
that characterize the nonlinearity of a given problem instance. 
For any
arm set $\mathcal{A}$, let
$x^* \in \arg\max_{x\in\mathcal{A}}
\mu(\langle x,\theta^*\rangle)$
denote an optimal arm, and define
\begin{gather*}
    \kappa
    \coloneqq
    \!\!\! \sup_{\mathcal{A}\in\operatorname{supp}(\mathcal{D})}
    \sup_{x\in\mathcal{A}}
    \frac{1}{\dot\mu(\langle x,\theta^*\rangle)},
    \enskip
    \hat\kappa
    \coloneqq
    \frac{1}{
    \mathbb{E}_{\mathcal{A}\sim\mathcal{D}}
    [\dot\mu(\langle x^*,\theta^*\rangle)]
    },
    \\
    R_{\dot\mu}
    \coloneqq\!\!
    \sup_{\mathcal{A}\in\operatorname{supp}(\mathcal{D})}\!
    \dot\mu(\langle x^*,\theta^*\rangle).
\end{gather*}
Here, $\kappa$ is the instance-specific worst-case inverse curvature over
all feasible arms and can become large when the corresponding natural
parameters approach a saturation region. The quantity $\hat\kappa$ is the
reciprocal of the average curvature at an optimal arm, while
$R_{\dot\mu}$ denotes the largest curvature attained at an optimal arm.

These quantities do not impose additional assumptions, but simply summarize
instance-dependent properties under the standing conditions. Since
$\|x\|_2\leq 1$ and $\|\theta^*\|_2\leq S$, all feasible natural parameters
lie in $[-S,S]$. For any nondegenerate link, $\dot\mu(z)>0$ on this interval,
so $\kappa$, $\hat\kappa$, and $R_{\dot\mu}$ are all well-defined and finite.
We now state the regret guarantee for \texttt{BGLE}.
\vspace{0.2cm}

\begin{theorem} [Regret of  \texttt{BGLE}] \label{thm:bgle}
    Suppose that
Assumptions~\ref{assum:GLM_norm} and \ref{assum:GLM_diff} (stated in Appendix~\ref{sec:GLM_theorem_proof}) hold, that the context
sets $\{\mathcal{A}_t\}_{t=1}^T$ are drawn i.i.d.\ from an
unknown distribution $\mathcal{D}$, and that rewards are
almost surely supported on $[0,R]$.
Assume without loss of generality that the MLE computed at each interval boundary
satisfies $\|\hat{\theta}_\ell\|_2\leq S$ (otherwise, use the
projected estimator described in \cref{problem_setting_GLB}).
Run \texttt{BGLE} for $T$ rounds, using the known bounds $R$ and $S$,
$\lambda=R^2(d+\log T)$, and any fixed constant
$c\in(0,1)$, with the static schedule specified in
Algorithm~\ref{bgle}. The worst-case cumulative regret satisfies
    {\small
    \begin{align*}
        &\mathcal{R}(T) = \Ocal \Big(RS\sqrt{d(d+\log T)T\log T\log\log T / \hat\kappa}\Big)  
        \\
        & + \Ocal \bigg(\!\!\Big(\!R^2Se^{8RS}d(d+\log T)\log T\log\log T +  \frac{R}{\log\log T} \!\Big)T^{\frac{1}{3}} \!\bigg)  
        \\
        &=\widetilde{\Ocal}\left(RSd\sqrt{T}/\sqrt{\hat\kappa}\right)+\widetilde\Ocal \left((R^{2}Se^{8RS}d^2+R)T^{1/3} \right) \,.
    \end{align*}
    }
\end{theorem}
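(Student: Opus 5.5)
The plan is to follow the interval-by-interval decomposition used for \texttt{BLCE} (\cref{thm:BLCE}), replacing ridge-regression concentration with a self-concordant MLE confidence bound, and handling the link nonlinearity through the weights $\alpha_{t,k}(\lambda)\dot\mu(\cdot)$. Write $\mathcal{R}(T)=\sum_{\ell=1}^{B}\mathcal{R}_\ell$. Intervals $\ell=1,2$ do no elimination; since rewards lie in $[0,R]$, their regret is at most $R(\mathcal{T}_2)$. The cost of the first interval is about $R\sqrt{T}/\log\log T$. For the $T^{1/3}$ term, I would argue that the dominant low-order cost comes from the exploration fractions and from a ``burn-in'' before the weighted Gram matrices are well conditioned. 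This is where the $e^{8RS}$ factor, the reciprocal of the crude curvature lower bound $\alpha\dot\mu\gtrsim e^{-O(RS)}$, enters. I would bound it by a $T^{1/3}$-scale count of rounds, times the per-round regret, times the $e^{8RS}d(d+\log T)$ factor from the elliptical potential. The precise exponent bookkeeping I would match to the schedule in Algorithm~\ref{bgle}.

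\textbf{Step 1 (good event).} For each $k\ge 2$, I would show that with probability $1-\Ocal(1/T)$, for all $x$,
\[
|\langle x,\hat\theta_k-\theta^*\rangle|\le \|x\|_{V_k^{-1}}\beta(\lambda).
\]
Here $V_k$ is the $\alpha\dot\mu(\langle\cdot,\hat\theta_{k-1}\rangle)$-weighted Gram matrix. The argument has three parts.
\begin{itemize}
\item Self-concordance $|\ddot\mu|\le R\dot\mu$ (Assumption~\ref{assum:GLM_diff}) lets the $\dot\mu$ evaluated at $\hat\theta_{k-1}$ be compared with the $\dot\mu$ at any point on the segment to $\theta^*$, up to $\exp(-R\cdot\text{distance})$.
\item The factor $\alpha_{t,k}=\exp(-R(2S\wedge\|x\|_{V_{k-1}^{-1}}\beta))$ is chosen exactly so that the weighted Gram matrix is dominated by the true Hessian of the log-loss. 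The crude factor $e^{-2RS}$ plays the same role when $k=1$.
\item A Bernstein-type martingale bound for the gradient, of order $\sqrt{d+\log T}+R(d+\log T)/\sqrt\lambda$, together with the choice $\lambda=R^2(d+\log T)$, gives $\beta=\Ocal(RS\sqrt{d+\log T})$. This matches $\varepsilon'_{t,k}$.
\end{itemize}
Because the Gram matrix within an interval depends only on contexts, actions and the previous estimate, the design is predictable given $\mathcal{F}_{\mathcal{T}_{k-1}}$. This is the same reason the reward-free within-interval updates are harmless in the linear case. A union bound over $k\le B$ then completes the step.

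\textbf{Step 2 (elimination).} On the good event, the optimal arm survives every elimination. In addition, every $x\in\mathcal{A}_t^{(\ell-1)}$ has gap at most $4\varepsilon'_{t,\ell-1}$. Both facts follow from the same argument as in the linear case, using monotonicity of $\mu$ to translate from the natural parameter to the reward scale. To get the $\sqrt{1/\hat\kappa}$ scaling rather than $\kappa$, I would bound the reward gap $\mu(\langle x^*,\theta^*\rangle)-\mu(\langle x,\theta^*\rangle)$ by $\dot\mu(\langle x^*,\theta^*\rangle)\Delta+R\dot\mu\Delta^2$-type terms via self-concordance. The first-order part carries $\dot\mu$ at the optimum, and the second-order part is absorbed into the $T^{1/3}$ term.

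\textbf{Step 3 (regret per interval).} This is the main obstacle. The regret in interval $\ell$ is about $(\mathcal{T}_\ell-\mathcal{T}_{\ell-1})\,\mathbb{E}[\dot\mu(x^*)\max_{y\in\mathcal{A}^{(\ell-2)}}\|y\|_{V_{\ell-1}^{-1}}]\beta$. I would use conditional i.i.d.\ contexts to relate this expectation to the average, over interval $\ell-1$, of the uncertainty of the arm pulled by the max-uncertainty exploration rule. The weights must be arranged so that $\dot\mu$ at the optimum pairs with the weighting in $V_{\ell-1}$. The elliptical potential lemma then gives $\sum\|x\|^2_{V^{-1}}\lesssim d\log T$ over the $c$-fraction. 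With Cauchy–Schwarz and Jensen, pulling $\mathbb{E}[\dot\mu(x^*)]=1/\hat\kappa$ out of the square root, this yields
\[
\mathcal{R}_\ell\lesssim \frac{(\mathcal{T}_\ell-\mathcal{T}_{\ell-1})\,\beta\sqrt{d\log T}}{\sqrt{\hat\kappa\,\mathcal{T}_{\ell-1}}}.
\]
The hard part is the mismatch: weights use $\dot\mu$ at $\hat\theta_{\ell-1}$ on pulled arms, while the regret involves $\dot\mu$ at $\theta^*$ on $x^*$. I would handle it by applying self-concordance once more within the shrinking elimination set. The resulting multiplicative error $\exp(R\|x\|\beta)$ is $\Ocal(1)$ once $\|x\|_{V^{-1}}\beta\le 1/R$, and the rounds where it is not are counted by the potential and charged to the $T^{1/3}$ term.

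\textbf{Step 4 (summing).} Summing over $B=\Ocal(\log\log T)$ intervals with $T^{1-2^{-\ell}}/\sqrt{T^{1-2^{-(\ell-1)}}}=\sqrt{T}$ gives the leading term $RS\sqrt{d(d+\log T)T\log T\log\log T/\hat\kappa}$.
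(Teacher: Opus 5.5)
\textbf{There is a gap in the second ($T^{1/3}$) term.} Your handling of the leading $\sqrt{T/\hat\kappa}$ term follows the paper. That covers the MLE confidence via \cref{lem:GLM_ellipsoid} with $V_\ell\preceq V_\ell^*$ from the $\alpha$-weights (\cref{lem:GLM_empirical_matrix}), and survival of $x_t^*$. It also covers self-concordance putting $\dot\mu(\langle x_t^*,\theta^*\rangle)$ on the first-order term, the i.i.d.\ transfer to the previous interval's max-uncertainty pulls, and Cauchy--Schwarz with $\mathbb{E}[\dot\mu(\langle x^*,\theta^*\rangle)]=1/\hat\kappa$.

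You attribute the second term to a burn-in, to the exploration fractions, and to rounds ``counted by the potential.'' None of these produces it:
\begin{itemize}
\item Exploration rounds are charged $4\varepsilon'_{t,\ell-1}$ like every other round.
\item The unanalyzed intervals are $1$--$3$, each of length $\lceil T^{1/3}/\log_2\log_2T\rceil$, and they contribute only $RT^{1/3}/\log\log T$. You are using the \texttt{BLCE} schedule ($\sqrt T$-scale first interval), under which they would cost far more.
\end{itemize}

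In the paper, the $e^{8RS}T^{1/3}$ term comes from second-order remainders present in \emph{every} interval $\ell\ge4$. Write $n_k:=\mathcal{T}_k-\mathcal{T}_{k-1}$.
\begin{itemize}
\item Using $e^{4R(S\wedge\varepsilon')}\le 1+e^{4RS}\varepsilon'/S$ (\cref{lem:GLM_exponential_convex}), the remainder is $\frac{e^{4RS}}{S}\dot\mu(\langle x_t^*,\theta^*\rangle)\,\varepsilon'^2_{t,\ell-1}$.
\item This is controlled by a \emph{second moment}, $\mathbb{E}\bigl[\max_y\|\sqrt{\dot\mu(\langle x^*,\theta^*\rangle)}\,y\|^2_{V_{\ell-1}^{-1}}\bigr]\le 2e^{4RS}d\log(2T)/n_{\ell-1}$. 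It is obtained by transferring to interval $\ell-1$'s exploration and paying $e^{4RS}$ to compare $\dot\mu(\langle x^*,\theta^*\rangle)$ with the weight $\alpha\,\dot\mu(\langle x,\hat\theta\rangle)$.
\item This yields $\frac{n_\ell}{n_{\ell-1}}\cdot\frac{e^{8RS}\beta^2d\log(2T)}{S}$.
\item The curvature mismatch inside the first-order term is treated the same way and yields $\frac{n_\ell}{\sqrt{n_{\ell-1}n_{\ell-2}}}\cdot\frac{e^{5RS}\beta^2d\log(2T)}{S}$.
\end{itemize}
The schedule $n_1=n_2=n_3\asymp T^{1/3}$, $n_\ell\asymp T^{1-1/(3\cdot2^{\ell-4})}$ is chosen so that $n_\ell/\sqrt{n_{\ell-1}}\le\sqrt T$ while both ratios above are at most $T^{1/3}$. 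So $T^{1/3}$ is the balance between the burn-in length $n_3$ and the ratio $n_4/n_3$. Your Step~2 remainder likewise needs this second-moment bound; no count of rounds gives it.

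\textbf{Your mechanism for the mismatch fails as stated.} In interval $\ell-1$ the distortion factor is $\exp(O(R\,\varepsilon'_{s,\ell-2}))$, so it is governed by $\|\cdot\|_{V_{\ell-2}^{-1}}$. That matrix is frozen during interval $\ell-1$ and evaluated on fresh contexts, so no elliptical potential counts the rounds with $\varepsilon'_{s,\ell-2}>1/R$. Their frequency can only be bounded by Markov's inequality after transferring to interval $\ell-2$.

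If you then charge each such round a crude per-round bound, you get a factor $n_\ell/n_{\ell-2}$. At $\ell=5$ this is $T^{5/6}/T^{1/3}=T^{1/2}$, an $e^{O(RS)}\sqrt T$ term the theorem does not allow.

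You need to keep the remainder in product form, $\varepsilon'_{s,\ell-2}\cdot\|\sqrt{\dot\mu(\langle x_s^*,\theta^*\rangle)\,\alpha_{s,\ell-2}\,\dot\mu(\langle x_{s,a_s},\hat\theta_{\ell-2}\rangle)}\,x_{s,a_s}\|_{H_{s-1}^{-1}}$. Then split it by Cauchy--Schwarz into the $V_{\ell-2}$ second moment and the potential of interval $\ell-1$, as the paper does with its $F_s$ and $G_s$ terms.
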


\vspace{-0.2cm}

\paragraph{Discussion of Theorem~3.} 

In terms of the leading-in-$T$ term in \cref{thm:bgle}, the regret bound of
\texttt{BGLE} has the same dependence on the key problem parameters $d$ and
$T$ as the regret bounds of its linear counterparts in
Theorems~\ref{thm:BLCE_g} and~\ref{thm:BLCE}.
For any nondegenerate link considered here, $\dot\mu(0)>0$.
Moreover, by Assumption~\ref{assum:GLM_norm} and
Lemma~\ref{lem:GLM_concordance},
$\kappa \leq e^{RS}/\dot\mu(0)$.
Thus, the exponential dependence on $RS$ used in the second term of
\cref{thm:bgle} corresponds to a worst-case uniform control of the inverse curvature over all feasible natural parameters. 
An instance-dependent version of the analysis can instead retain the
curvature quantities explicitly, resulting in a regret bound with
$\kappa$-dependence in the second term. 
Note that the leading term in \cref{thm:bgle} scales as
$1/\sqrt{\hat\kappa}$, where $\hat\kappa$ is the instance-dependent
average optimal-arm inverse curvature. Thus, the bound improves as
$\hat\kappa$ increases, while it does not depend on $\kappa$.

Moreover, \texttt{BGLE} uses only
$\Ocal(\log\log T)$ parameter updates and is, to our knowledge,
the first generalized linear contextual bandit algorithm with
rare parameter updates to combine near-optimal guarantees
with no reliance on G-optimal design. Finally, \texttt{BGLE}
inherits substantially lower computational complexity by
building on the \texttt{BLCE} framework.

\begin{remark} \label{rmk:time_complexity}
    The total time complexity of \texttt{BGLE} is $\Ocal(Kd^2T\log\log T + \mathcal{C}_{\mathrm{opt}}^{\mathrm{tot}})$, where the first term comes from the \texttt{BLCE}, and the second from computing the MLE at interval boundaries (see~\cref{table:GLM_comparison}).
\end{remark}

\section{Numerical Experiments} \label{sec:numerical_experiments}
We evaluate the performance of \texttt{BLCE-G} and \texttt{BLCE} over a horizon of $T=10{,}000$ across $10$ independent runs. At each round, $K$ arms are sampled i.i.d.\ from a \(d\)-dimensional uniform distribution, and the parameter \(\theta^*\) is drawn from a $d$-dimensional normal distribution. We consider four $(K,d)$ pairs: $(1000,5)$ and $(5000,10)$, representing the large-$K$ regime, and $(50,20)$ and $(100,30)$, representing the small-$K$ regime. For comparison, we benchmark against state-of-the-art algorithms: \texttt{RS-OFUL} \citep{abbasi2011improved}, 
\texttt{BatchLinUCB-DG} \citep{ruan2021linear}, \texttt{SoftBatch} \citep{hanna2023contexts}, and \texttt{BatchLearning} \citep{zhang2025almost}. Hyperparameters are set consistently with theory, ensuring all choices satisfy the required conditions: \texttt{BLCE-G} and \texttt{BLCE} use within-interval allocation rate $c = 0.5$; \texttt{RS-OFUL} uses switching parameter $C=1$; and \texttt{SoftBatch} employs discretization parameter $q = 1/(8\sqrt{d})$. Algorithms requiring G-optimal design are implemented using the same near G-optimal routine. Due to the substantial computational overhead reported in \cref{table:comparison}, we omit regret plots for the method of \citet{hanna2023contexts}. 

\begin{figure*}[!ht]
\centering
\begin{subfigure}{\textwidth}
\includegraphics[width=\textwidth]{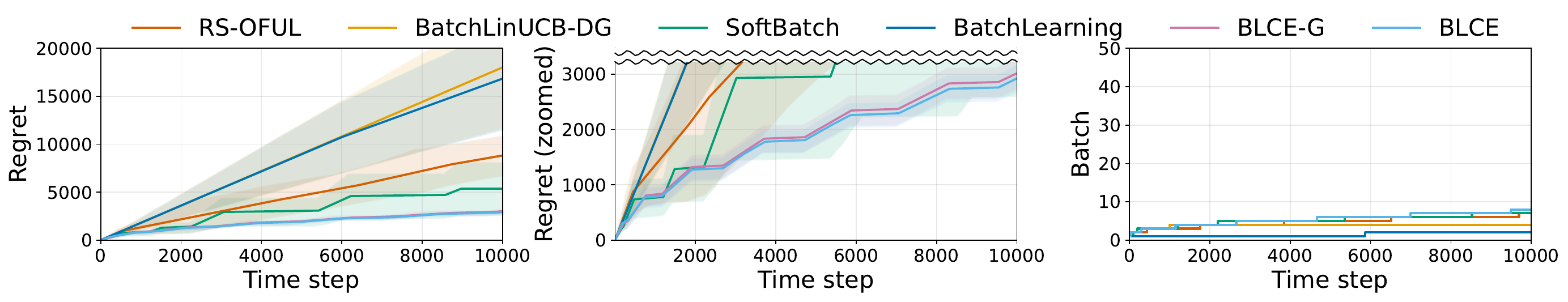}
\centering
{\small \raisebox{0.5cm}{(a) $K=1000, d=5$} \par}
\end{subfigure}\vspace{-1em}
\begin{subfigure}{\textwidth}
\includegraphics[width=\textwidth]{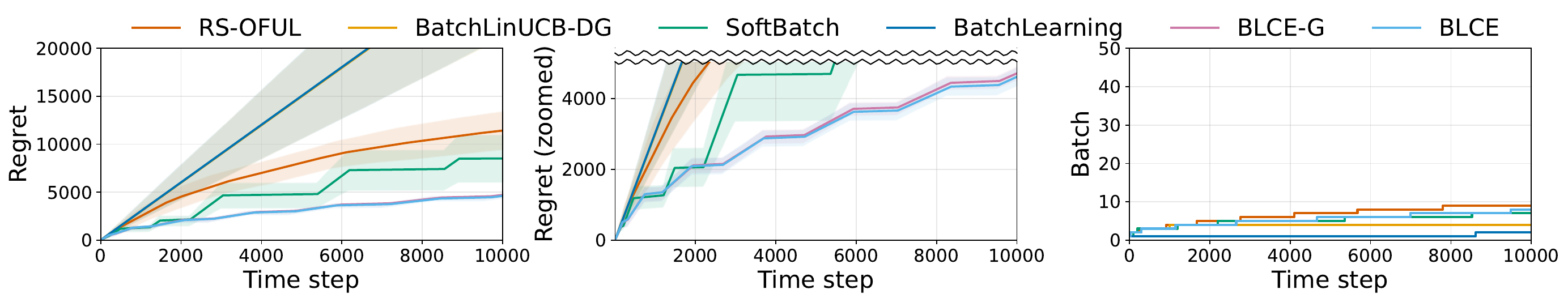}
\centering
{\small \raisebox{0.5cm}{(b) $K=5000, d=10$} \par}
\end{subfigure}\vspace{-1em}
\begin{subfigure}{\textwidth}
\includegraphics[width=\textwidth]{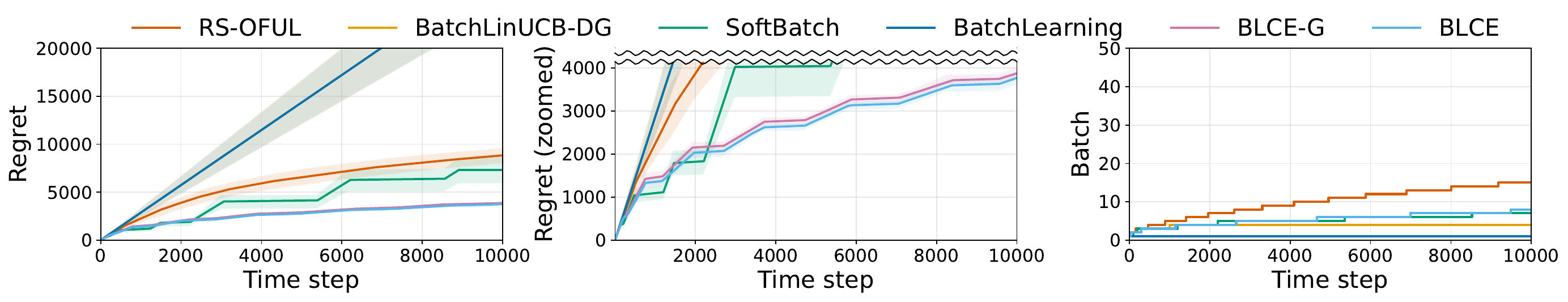}
\centering
{\small \raisebox{0.5cm}{(c) $K=50, d=20$} \par}
\end{subfigure}\vspace{-1em}
\begin{subfigure}{\textwidth}
\includegraphics[width=\textwidth]{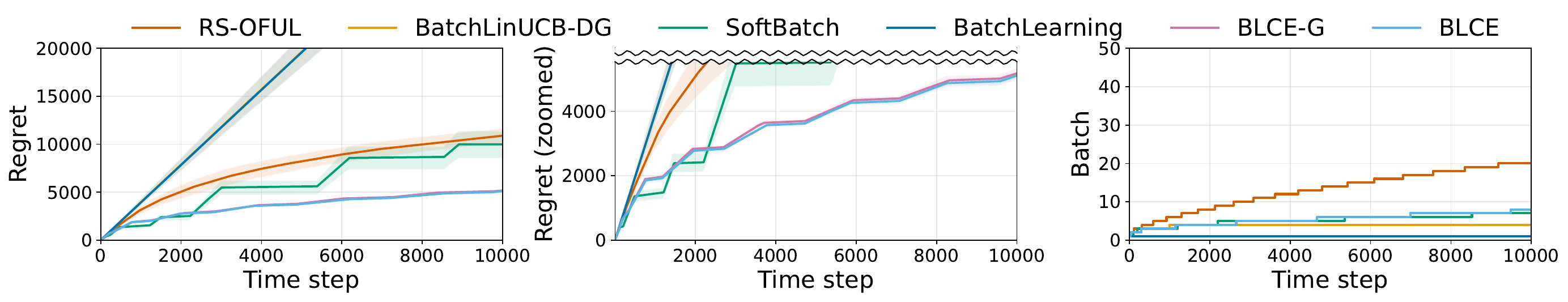}
\centering
{\small \raisebox{0.5cm}{(d) $K=100, d=30$} \par}
\end{subfigure}\vspace{-2em}
\setcounter{figure}{0}
\captionof{figure}{Regret, zoomed-in regret, and update (batch) complexity over time for different values of $K$ and $d$.}
\label{uniform figure}
\end{figure*}

\begin{table*}[!ht]  
\caption{Average runtime (seconds) over 10 runs.}
\label{table:runtime_results}
\centering
\begin{footnotesize}
\begin{tabular}{lrrrrrrr}
\toprule
& \multicolumn{2}{c}{\textbf{Suboptimal algorithms}} 
& \multicolumn{5}{c}{\textbf{Optimal algorithms}} \\
\cmidrule(lr){2-3}\cmidrule(lr){4-8}
\textbf{$(K,d)$} 
& \texttt{RS-OFUL} & \texttt{SoftBatch} 
& \texttt{BatchLinUCB-DG} & \citet{hanna2023contexts} & \texttt{BatchLearning} & \texttt{BLCE-G} & \texttt{BLCE} \\
\midrule
$(1000,5)$   & $0.85$ & $4.18$ & $290.87$ & Exponential & $166.17$ & $23.40$ & $5.91$ \\
$(5000,10)$  & $4.15$ & $15.17$ & $1300.01$ & Exponential & $621.09$ & $40.27$ & $12.83$ \\
$(50,20)$    & $0.42$ & $3.74$ & $1031.66$ & Exponential & $45.85$ & $2.26$ & $1.06$ \\
$(100,30)$   & $0.61$ & $5.50$ & $2987.07$ & Exponential & $77.01$ & $3.70$ & $1.62$ \\
\bottomrule
\end{tabular}
\end{footnotesize}
\end{table*}

We present three types of figures: (i) the average cumulative regret (solid line) with its standard deviation (shaded region) over 10 runs, (ii) zoomed-in views of regret curves to highlight differences between \texttt{BLCE-G} and \texttt{BLCE}, and (iii) the average parameter-update complexity across $10$ runs, illustrating the frequency of parameter updates (for consistency with prior work, we label the horizontal axis using the term ``batch''). As shown in Figure~\ref{uniform figure}, both \texttt{BLCE-G} and \texttt{BLCE} consistently outperform all baselines in both large-$K$ and small-$K$ regimes, achieving the lowest regret with greater stability. Runtime comparisons in \cref{table:runtime_results} further show that our methods incur substantially lower computational cost; in particular, \texttt{BLCE}, which eliminates G-optimal design entirely, achieves the fastest runtime among optimal algorithms, comparable even to suboptimal baselines. Overall, these results demonstrate that \texttt{BLCE-G} and \texttt{BLCE} combine minimax-optimal regret with practical efficiency. For generalized linear contextual bandits, \texttt{BGLE} similarly outperforms the baseline, achieving lowest regret, stable performance, and reduced computation. See \cref{exp:GLM} for details.

\section{Conclusion}
We studied contextual bandits under rare parameter updates, separating the cost of reward-dependent retraining from lightweight reward-free processing of contexts within an interval.
For linear contextual bandits, \texttt{BLCE-G} achieves the tightest known rare-update regret guarantees across both small-$K$ and large-$K$ regimes, while \texttt{BLCE} removes the G-optimal design step and preserves minimax-optimality with substantially lower computational cost. 
For generalized linear contextual bandits, \texttt{BGLE} extends the same design principle, achieving near-optimal regret with only
$\Ocal(\log\log T)$ parameter updates. 
Together, these results suggest that limited reward adaptivity need not force computationally expensive pre-planned exploration, provided that within-interval context information can be utilized.



\section*{Acknowledgments}
This work was supported by the National Research Foundation of Korea~(NRF) grant and the Institute of Information \& communications Technology Planning \& Evaluation~(IITP) grant both funded by the Korea government~(MSIT) (No. RS-2022-NR071853, RS-2023-00222663, RS-2025-25418056, RS-2025-25463302, RS-2026-25507282).

\section*{Impact Statement}

This paper presents work whose goal is to advance the field of Machine
Learning. There are many potential societal consequences of our work, none
which we feel must be specifically highlighted here.

\bibliography{references}
\bibliographystyle{icml2026}

\newpage
\appendix
\onecolumn
\textbf{\LARGE Appendix}
\section{Proof of Theorem 1}
\begin{lemma} \label{lem:concentration} \citep{yu2025optimal}
Let $V_\ell$ be the Gram matrix and $\hat{\theta}_\ell$ be the least squares estimator obtained from the contexts in the $\ell$-th interval $(\ell\geq 1)$. Then, for any $x \in \mathbb{R}^d$ and $0<\delta<1$, the following inequality holds with probability at least $1-\delta$
\begin{equation*}
    |\langle x, \hat{\theta}_{\ell} - \theta^{*} \rangle | \leq \left(\sqrt{2 \log \bigg(\frac{1}{\delta}\bigg) } +\sqrt{\lambda}\right)\|x\|_{V_{\ell}^{-1}}\,.
\end{equation*}
\end{lemma}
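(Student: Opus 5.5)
The plan is to exploit the fact that $\hat\theta_\ell$ is computed only from the rounds in the $\ell$-th interval, with the Gram matrix $V_\ell=\lambda I+\sum_{t\in(\mathcal{T}_{\ell-1},\mathcal{T}_\ell]} x_{t,a_t}x_{t,a_t}^\top$ reinitialized at each boundary. Within the interval, the chosen actions depend only on contexts, on past estimates $\hat\theta_1,\dots,\hat\theta_{\ell-1}$ (which use rewards from earlier intervals only), and on the reward-free Gram updates. They never depend on rewards $r_t$ from the current interval, because the algorithm cannot see them until $\mathcal{T}_\ell$. Consequently, conditional on $\mathcal{G}_\ell:=\sigma(\mathcal{F}_{\mathcal{T}_{\ell-1}},\{\mathcal{A}_t,x_{t,a_t}\}_{t\in(\mathcal{T}_{\ell-1},\mathcal{T}_\ell]})$, the design $X_\ell$ (rows $x_{t,a_t}$) is fixed and the noises $\eta_t$ in the interval are still independent $1$-subgaussian. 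Randomization in the G-optimal sampling step is included in $\mathcal{G}_\ell$ and is independent of the noise, so it causes no difficulty. This conditional fixed-design structure is the main point to verify carefully. It is the only place where the batched-feedback protocol is used, and it is what allows us to avoid a self-normalized martingale bound and its $\sqrt{d}$ loss.

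Given that, we decompose. Writing $\hat\theta_\ell=V_\ell^{-1}X_\ell^\top r$ with $r=X_\ell\theta^*+\eta$, we get
\[
\hat\theta_\ell-\theta^*=V_\ell^{-1}X_\ell^\top\eta-\lambda V_\ell^{-1}\theta^*,
\]
so that
\[
\langle x,\hat\theta_\ell-\theta^*\rangle=\sum_t \bigl(x^\top V_\ell^{-1}x_{t,a_t}\bigr)\eta_t-\lambda x^\top V_\ell^{-1}\theta^*.
\]

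For the noise term, conditional on $\mathcal{G}_\ell$ it is a fixed linear combination of independent $1$-subgaussian variables. Its variance proxy is
\[
\sum_t (x^\top V_\ell^{-1}x_{t,a_t})^2=x^\top V_\ell^{-1}X_\ell^\top X_\ell V_\ell^{-1}x\le x^\top V_\ell^{-1}x=\|x\|_{V_\ell^{-1}}^2,
\]
using $X_\ell^\top X_\ell\preceq V_\ell$. The subgaussian tail then gives a deviation of at most $\sqrt{2\log(1/\delta)}\,\|x\|_{V_\ell^{-1}}$ with conditional probability $1-\delta$. For a two-sided statement we either accept $2/\delta$ or note that the cited form absorbs the factor, matching \citep{yu2025optimal}. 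Integrating over $\mathcal{G}_\ell$ preserves the probability bound.

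For the bias term, Cauchy--Schwarz in the $V_\ell^{-1}$ geometry gives
\[
\lambda|x^\top V_\ell^{-1}\theta^*|\le \lambda\|x\|_{V_\ell^{-1}}\|\theta^*\|_{V_\ell^{-1}}\le \lambda\|x\|_{V_\ell^{-1}}\,\|\theta^*\|_2/\sqrt{\lambda}\le\sqrt{\lambda}\,\|x\|_{V_\ell^{-1}},
\]
since $V_\ell\succeq\lambda I$ and $\|\theta^*\|_2\le1$ by \cref{assum:norm}. Adding the two bounds yields the lemma.
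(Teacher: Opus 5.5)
Your proof is correct and is the standard conditional fixed-design argument. The paper does not reprove this lemma; it imports it from \citep{yu2025optimal} and justifies the transfer to the contextual setting only by remarking that the original proof never uses fixed arm features. Your observation is precisely what makes that transfer legitimate: every within-interval action (elimination sets, $H_{t-1}$-driven exploration, G-design sampling, greedy choice) is determined without that interval's rewards, so given the design the noises remain independent and $1$-subgaussian.

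The one loose end is the constant. With the absolute value, your tail bound gives $\sqrt{2\log(2/\delta)}$; you get $\sqrt{2\log(1/\delta)}$ only for the one-sided deviation. Saying ``the cited form absorbs the factor'' is therefore not an argument. The extra factor $2$ does no real harm, though: it only changes the constant inside the logarithm of $\beta^{(1)}_{t,\ell}$ in Lemma~\ref{lem:good_event} and affects no rate.
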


\begin{lemma} \label{lem:estimate_difference} \citep{yu2025optimal}
Let $V_\ell$ be the Gram matrix and $\hat{\theta}_\ell$ be the least squares estimator obtained from the contexts in the $\ell$-th interval $(\ell\geq 1)$. Then, for any $0 < \zeta, \delta < 1$ and $d \geq 2$, the following inequality holds with probability at least $1-\delta$
\[
\|\hat{\theta}_{{\ell}}-\theta^{*} \|_{V_{{\ell}}} \leq \frac{\sqrt{(d-1)\log(\frac{4}{4\zeta^2-\zeta^4})+2\log(\frac{\sqrt{2\pi d}}{\delta})} +\sqrt{\lambda}}{1-\zeta}\; .
\]
\end{lemma}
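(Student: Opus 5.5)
The statement to prove is \cref{lem:estimate_difference}. The plan is to bound $\|\hat\theta_\ell-\theta^*\|_{V_\ell}$ by a net argument on the unit sphere in the $V_\ell$-geometry, using the fixed-direction bound of \cref{lem:concentration} at each net point.

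First decompose the error. Since $\hat\theta_\ell=V_\ell^{-1}\sum_{t} r_t x_{t,a_t}$ with $V_\ell=\lambda I+\sum_t x_{t,a_t}x_{t,a_t}^\top$ over the interval,
\[
\hat\theta_\ell-\theta^*=V_\ell^{-1}\Big(\sum_t \eta_t x_{t,a_t}\Big)-\lambda V_\ell^{-1}\theta^*.
\]
Because $V_\ell\succeq\lambda I$ and $\|\theta^*\|_2\le 1$, the bias satisfies $\|\lambda V_\ell^{-1}\theta^*\|_{V_\ell}\le\sqrt\lambda$. Write $Z\coloneqq V_\ell^{1/2}(\hat\theta_\ell-\theta^*)$.

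Second, use the variational form $\|Z\|_2=\sup_{\|u\|_2=1}\langle u,Z\rangle$. For a fixed unit $u$, set $x=V_\ell^{1/2}u$. Then $\|x\|_{V_\ell^{-1}}=1$, and $\langle u,Z\rangle=\langle x,\hat\theta_\ell-\theta^*\rangle$ has the form handled by \cref{lem:concentration}.

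The subtlety is that $V_\ell$ is random and depends on within-interval actions. This is fine here because rewards from the interval are not used until its end. Conditional on the design, the noise term is therefore a sum of independent subgaussians. This is exactly the setting of \cref{lem:concentration}, which I take as given for any direction $x$ chosen as a measurable function of the design. So for each fixed $u$, with probability $1-\delta'$, the noise part of $\langle u,Z\rangle$ is at most $\sqrt{2\log(1/\delta')}$.

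Third comes the covering step, which I expect to be the main obstacle because the constants must match exactly. Take a $\zeta$-net $\mathcal N$ of the sphere $S^{d-1}$. Standard reasoning gives
\[
\|Z\|_2\le\frac{1}{1-\zeta}\max_{u\in\mathcal N}\langle u,Z\rangle,
\]
since for the maximizing direction $u^\star$ there is $u\in\mathcal N$ with $\langle u^\star-u,Z\rangle\le\zeta\|Z\|_2$.

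To recover the specific factor $\big(4/(4\zeta^2-\zeta^4)\big)^{(d-1)/2}\sqrt{2\pi d}$, I would bound the net size by spherical-cap volume rather than the crude $(3/\zeta)^d$. A cap of chordal radius $\zeta$ has half-angle with $\sin^2\phi=\zeta^2(1-\zeta^2/4)=(4\zeta^2-\zeta^4)/4$. Its normalized area is at least roughly $\sin^{d-1}\phi/\sqrt{2\pi d}$, using the Gamma-function ratio $\Gamma(d/2+1/2)/\Gamma(d/2)\le\sqrt{d/2}$. This gives
\[
|\mathcal N|\le\sqrt{2\pi d}\,\Big(\tfrac{4}{4\zeta^2-\zeta^4}\Big)^{(d-1)/2}.
\]
This is where the assumption $d\ge2$ enters.

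Finally, take a union bound with $\delta'=\delta/|\mathcal N|$. This yields, with probability $1-\delta$,
\[
\max_{u\in\mathcal N}\langle u,Z\rangle\le\sqrt{(d-1)\log\tfrac{4}{4\zeta^2-\zeta^4}+2\log\tfrac{\sqrt{2\pi d}}{\delta}}+\sqrt\lambda,
\]
where the $\sqrt\lambda$ term comes from the bias. Dividing by $1-\zeta$ gives the claimed bound.
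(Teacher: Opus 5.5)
There is a genuine gap in your covering step. The cap computation itself is fine. Chordal radius $\zeta$ corresponds to $\sin^2\phi=(4\zeta^2-\zeta^4)/4$. Combining $\int_0^\phi\sin^{d-2}t\,dt\ge \sin^{d-1}\phi/(d-1)$ with $\Gamma(\frac{d+1}{2})/\Gamma(\frac d2)\le\sqrt{d/2}$ gives normalized cap area $A(\phi)\ge \sin^{d-1}\phi/\sqrt{2\pi d}$.

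The problem is that cap area controls the size of a $\zeta$-net only from \emph{below}. The caps around a $\zeta$-net cover $S^{d-1}$, so $|\mathcal N|\ge 1/A(\phi)$. A lower bound on $A(\phi)$ therefore says nothing about the existence of a net with at most $\sqrt{2\pi d}\,\sin^{-(d-1)}\phi$ points.

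The volumetric \emph{upper} bound comes from packing. A maximal $\zeta$-separated set is a $\zeta$-net, and the caps of chordal radius $\zeta/2$ around its points are disjoint. This gives $|\mathcal N|\le\sqrt{2\pi d}\,\bigl(\tfrac{64}{16\zeta^2-\zeta^4}\bigr)^{(d-1)/2}$. With your $1/(1-\zeta)$ reduction, that yields the lemma only with a worse constant.

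The bound you assert is in fact false. Take $d=3$ and $\zeta=1/2$, the value used in the subsequent corollary. Your bound promises a $\zeta$-net of $S^2$ with at most $\sqrt{6\pi}\cdot\tfrac{64}{15}\approx 18.5$ points. Fejes T\'oth's covering inequality $\cos r\le\frac{1}{\sqrt3}\cot\frac{N\pi}{6(N-2)}$ says that $18$ or fewer caps cover $S^2$ only if their angular radius is at least about $30.2^\circ$. Chordal radius $1/2$ corresponds to the angle $\arccos(7/8)\approx 28.96^\circ$, so no such net exists.

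To get exactly the stated constant, replace the finite net by an average over a uniformly random direction. This uses the cap-area lower bound in the direction in which it actually holds. Let $u\sim\mathrm{Unif}(S^{d-1})$ be independent of all data, and set $w\coloneqq Z/\|Z\|_2$. Your conditioning argument gives $\mathbb{P}(\langle u,Z\rangle>s+\sqrt\lambda)\le e^{-s^2/2}$ for every fixed unit $u$. Every $u$ with $\|u-w\|_2\le\zeta$ satisfies $\langle u,Z\rangle\ge(1-\zeta)\|Z\|_2$. Hence, by Fubini,
\[
\frac{\sin^{d-1}\phi}{\sqrt{2\pi d}}\,\mathbb{P}\bigl((1-\zeta)\|Z\|_2>s+\sqrt\lambda\bigr)
\le\mathbb{E}\Bigl[\mathbb{P}_u\bigl(\langle u,Z\rangle>s+\sqrt\lambda\bigr)\Bigr]
=\mathbb{E}_u\Bigl[\mathbb{P}\bigl(\langle u,Z\rangle>s+\sqrt\lambda\bigr)\Bigr]
\le e^{-s^2/2}.
\]
Choosing $s^2=(d-1)\log\frac{4}{4\zeta^2-\zeta^4}+2\log\frac{\sqrt{2\pi d}}{\delta}$ makes the failure probability exactly $\delta$, which is the lemma. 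The rest of your proposal goes through unchanged: the $\sqrt\lambda$ bias bound, the conditioning on the reward-free within-interval design, and the $1/(1-\zeta)$ reduction.
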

\noindent
\cref{lem:concentration} and \cref{lem:estimate_difference} were originally proved in the linear bandit setting. However, the proofs do not rely on the non-contextual assumption that the feature vectors of arms remain fixed. Therefore, the results can be directly applied to the linear contextual bandit setting as well.

\begin{corollary} \label{cor:estimate_difference}
Let $V_\ell$ be the Gram matrix and $\hat{\theta}_\ell$ be the least squares estimator obtained from the contexts in the $\ell$-th interval $(\ell\geq 1)$. Then, for any $0 < \delta < 1$ and $d \geq 2$, the following inequality holds with probability at least $1-\delta$
\[
\|\hat{\theta}_{{\ell}}-\theta^{*} \|_{V_{{\ell}}} \leq 2\sqrt{\log\bigg(\frac{2^{6d-5}\pi d}{15^{d-1}\delta^2}\bigg)} +2\sqrt{\lambda}\; .
\]
\end{corollary}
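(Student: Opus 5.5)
The corollary follows from Lemma~\ref{lem:estimate_difference} by fixing the free parameter $\zeta$ and simplifying. We take $\zeta = 1/2$, so that $1/(1-\zeta) = 2$, which produces the factor $2$ in front of both the square root and $\sqrt{\lambda}$.

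With $\zeta = 1/2$ we have $4\zeta^2 - \zeta^4 = 1 - 1/16 = 15/16$. Hence $4/(4\zeta^2-\zeta^4) = 64/15$, and
\[
(d-1)\log\Big(\frac{64}{15}\Big) = \log\Big(\frac{2^{6(d-1)}}{15^{d-1}}\Big).
\]
For the second term,
\[
2\log\Big(\frac{\sqrt{2\pi d}}{\delta}\Big) = \log\Big(\frac{2\pi d}{\delta^2}\Big).
\]
Adding the two logarithms gives
\[
\log\Big(\frac{2^{6d-6}\cdot 2\pi d}{15^{d-1}\delta^2}\Big) = \log\Big(\frac{2^{6d-5}\pi d}{15^{d-1}\delta^2}\Big),
\]
which is exactly the argument in the claimed bound.

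Substituting this into Lemma~\ref{lem:estimate_difference} yields, with probability at least $1-\delta$,
\[
\|\hat\theta_\ell - \theta^*\|_{V_\ell} \le 2\sqrt{\log\Big(\frac{2^{6d-5}\pi d}{15^{d-1}\delta^2}\Big)} + 2\sqrt{\lambda}.
\]
The hypotheses $0<\zeta<1$ and $d\ge 2$ are satisfied, so the lemma applies directly. No step poses a real obstacle; the only thing to check is the exponent arithmetic $6(d-1)+1 = 6d-5$.
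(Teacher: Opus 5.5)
Your proposal is correct and is essentially the paper's proof, which simply substitutes $\zeta = 1/2$ into Lemma~\ref{lem:estimate_difference}. Your explicit arithmetic ($4/(4\zeta^2-\zeta^4) = 64/15$, $2^{6(d-1)}\cdot 2 = 2^{6d-5}$, and $1/(1-\zeta) = 2$) fills in the simplification that the paper leaves implicit.
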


\begin{proof}
Substituting $\zeta = 0.5$ into \cref{lem:estimate_difference} yields the desired result.
\end{proof}

\begin{lemma}[Good event] \label{lem:good_event}
Define the following quantities:
\begin{align*}
\beta_{t, \ell}^{(1)}(\delta) &\coloneqq \sqrt{2\log\bigg(\frac{2|\mathcal{A}_{t}^{(\ell-1)}|(B-1)T}{\delta}\bigg)} + \sqrt{\lambda} \,,\\
\beta_{t,\ell}^{(2)}(\delta) &\coloneqq 2\sqrt{\log\bigg(\frac{2^{6d-3}\pi d(B-1)^2}{15^{d-1}\delta^2}\bigg)} +2\sqrt{\lambda}\,,\\
\varepsilon_{t,\ell}(\delta) &\coloneqq \underset{y \in \mathcal{A}_{t}^{(\ell-1)}}{\max}\|y\|_{V_\ell^{-1}} \cdot \Big(\beta_{t,\ell}^{(1)}(\delta) \wedge \beta_{t, \ell}^{(2)}(\delta) \Big) \; .
\end{align*}
Here, $\delta$ is a constant in the interval $(0,1)$. Then, the following event $E$ holds with probability at least $1-\delta$
$$ E \coloneqq \bigcap\limits_{\ell = 1}^{B-1}\bigcap\limits_{t = \mathcal{T}_{\ell}+1}^{T}\left\{|\langle x,\hat{\theta}_\ell - \theta^* \rangle | \leq \varepsilon_{t,\ell}(\delta), \; \forall x \in \mathcal{A}_{t}^{(\ell-1)}\right\} \; .$$
\end{lemma}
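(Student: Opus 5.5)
The plan is to prove that $E$ holds with probability at least $1-\delta$ by a union bound. We split $\delta$ into two halves, one for the per-arm bound of \cref{lem:concentration} and one for the uniform bound of \cref{cor:estimate_difference}, and then spread each half evenly over the $B-1$ estimates $\hat\theta_1,\dots,\hat\theta_{B-1}$. The estimate $\hat\theta_\ell$ uses only the data of interval $\ell$, with $V_\ell$ reinitialized to $\lambda I$. For every later round $t>\mathcal{T}_\ell$, the candidate set $\mathcal{A}_t^{(\ell-1)}$ depends only on $\mathcal{A}_t$ and on $\hat\theta_1,\dots,\hat\theta_{\ell-1}$ and $V_1,\dots,V_{\ell-1}$. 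Hence, conditioned on the data that determine the arms, the noise in interval $\ell$ is what drives $\hat\theta_\ell-\theta^*$. The design within interval $\ell$ depends only on contexts and past actions, not on the rewards of interval $\ell$, so the fixed-design concentration results of \cref{lem:concentration,cor:estimate_difference} apply conditionally on the contexts and actions of that interval.

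\textbf{Step 1 (uniform branch).} Fix $\ell\in[B-1]$ and apply \cref{cor:estimate_difference} with confidence $\delta/(2(B-1))$. With that probability, $\|\hat\theta_\ell-\theta^*\|_{V_\ell}\le 2\sqrt{\log(2^{6d-5}\pi d\cdot 4(B-1)^2/(15^{d-1}\delta^2))}+2\sqrt\lambda=\beta^{(2)}_{t,\ell}(\delta)$. On this event, Cauchy--Schwarz gives $|\langle x,\hat\theta_\ell-\theta^*\rangle|\le\|x\|_{V_\ell^{-1}}\beta^{(2)}_{t,\ell}(\delta)$ simultaneously for all $x\in\mathbb{R}^d$, so for every $t$ and every arm, with no further union bound. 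A union bound over $\ell$ costs total probability $\delta/2$.

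\textbf{Step 2 (per-arm branch).} For each $\ell$, each $t\in(\mathcal{T}_\ell,T]$ and each $x\in\mathcal{A}_t^{(\ell-1)}$, apply \cref{lem:concentration} with confidence $\delta/(2|\mathcal{A}_t^{(\ell-1)}|(B-1)T)$. This gives $|\langle x,\hat\theta_\ell-\theta^*\rangle|\le\beta^{(1)}_{t,\ell}(\delta)\|x\|_{V_\ell^{-1}}$. Summing the failure probabilities over $x$, over at most $T$ rounds $t$, and over $\ell$ gives at most $\delta/2$.

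\textbf{Step 3 (combine).} On the intersection of the two good events, which has probability at least $1-\delta$, each $x\in\mathcal{A}_t^{(\ell-1)}$ satisfies
\[
|\langle x,\hat\theta_\ell-\theta^*\rangle|\le\|x\|_{V_\ell^{-1}}\bigl(\beta^{(1)}_{t,\ell}(\delta)\wedge\beta^{(2)}_{t,\ell}(\delta)\bigr)\le\varepsilon_{t,\ell}(\delta),
\]
where the last inequality bounds $\|x\|_{V_\ell^{-1}}$ by its maximum over $\mathcal{A}_t^{(\ell-1)}$.

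The main obstacle is justifying Step 2, because the arm $x$ and the set $\mathcal{A}_t^{(\ell-1)}$ are random. I would condition on the $\sigma$-algebra generated by all contexts and actions in interval $\ell$, by all data from intervals before $\ell$, and by the future context $\mathcal{A}_t$, but not by the rewards of interval $\ell$. Under this conditioning, $x$ and the set are fixed, since they depend on $\hat\theta_{1:\ell-1}$ and not on $\hat\theta_\ell$. The noise terms of interval $\ell$ remain independent subgaussian, because actions in interval $\ell$ are reward-free with respect to the current interval's rewards. The conditional bound then holds, and integrating out the conditioning gives the unconditional statement. To make the count of $|\mathcal{A}_t^{(\ell-1)}|$ legitimate, I would apply the bound to each of the $K$ underlying arm indices and then restrict to the survivors; this is also why the number of arms enters $\beta^{(1)}$ inside the logarithm.
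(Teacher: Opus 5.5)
Your proposal is correct and follows the paper's proof: the same split of $\delta$ into two halves, \cref{lem:concentration} with a union bound over surviving arms, rounds and intervals for the $\beta^{(1)}$ branch, \cref{cor:estimate_difference} with Cauchy--Schwarz and a union bound over intervals only for the $\beta^{(2)}$ branch, and then intersecting the two events. Your conditioning argument is a justification the paper leaves implicit: you condition on past data, the interval-$\ell$ contexts and actions, and $\mathcal{A}_t$, so that $\mathcal{A}_t^{(\ell-1)}$ and its cardinality are fixed while the interval-$\ell$ noise stays independent subgaussian. That conditioning already makes the random count $|\mathcal{A}_t^{(\ell-1)}|$ legitimate, so the detour through all $K$ arm indices is unnecessary.
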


\begin{proof}
Fix arbitrary $\ell$ and $t$. By \cref{lem:concentration}, the following inequality holds for all $x \in \mathcal{A}_{t}^{(\ell-1)}$ with probability at least $1- \frac{\delta}{2(B-1)T}$
\begin{equation*}
    |\langle x, \hat{\theta}_{\ell} - \theta^{*} \rangle | \leq \underset{y \in \mathcal{A}_{t}^{(\ell-1)}}{\max}\|y\|_{V_\ell^{-1}} \cdot \left(\sqrt{2 \log \bigg(\frac{2|\mathcal{A}_{t}^{(\ell-1)}|(B-1)T}{\delta}\bigg) } +\sqrt{\lambda}\right) \; .
\end{equation*}
Applying a union bound over all $\ell$ and $t$, the following event holds with probability at least $1-\frac{\delta}{2}$
\begin{equation}  \label{eq:first_good_event}
\bigcap\limits_{\ell = 1}^{B-1}\bigcap\limits_{t = \mathcal{T}_{\ell}+1}^{T}\left\{|\langle x,\hat{\theta}_\ell - \theta^* \rangle | \leq \underset{y \in \mathcal{A}_{t}^{(\ell-1)}}{\max}\|y\|_{V_\ell^{-1}} \cdot \beta_{t,\ell}^{(1)}(\delta) , \; \forall x \in \mathcal{A}_{t}^{(\ell-1)}\right\} \; .
\end{equation}
Next, for fixed $\ell$, by the Cauchy-Schwarz inequality and \cref{cor:estimate_difference}, the following inequality holds for all $t$ and $x \in \mathcal{A}_{t}^{(\ell-1)}$ with probability at least $1-\frac{\delta}{2(B-1)}$
\[
| \langle x, \hat{\theta}_{\ell} - \theta^* \rangle | \leq \|x\|_{V_{{\ell}}^{-1}}\|\hat{\theta}_{\ell} - \theta^*\|_{V_{{\ell}}} \leq \underset{y \in \mathcal{A}_{t}^{(\ell-1)}}{\max}\|y\|_{V_\ell^{-1}} \cdot \left( 2\sqrt{\log\bigg(\frac{2^{6d-3}\pi d(B-1)^2}{15^{d-1}\delta^2}\bigg)} +2\sqrt{\lambda} \right) \; . 
\]
A union bound over $\ell $ yields the following event with probability at least $1-\frac{\delta}{2}$
\begin{equation} \label{eq:second_good_event}
\bigcap\limits_{\ell = 1}^{B-1}\bigcap\limits_{t = \mathcal{T}_{\ell}+1}^{T}\left\{|\langle x,\hat{\theta}_\ell - \theta^* \rangle | \leq \underset{y \in \mathcal{A}_{t}^{(\ell-1)}}{\max}\|y\|_{V_\ell^{-1}} \cdot \beta_{t,\ell}^{(2)}(\delta) , \; \forall x \in \mathcal{A}_{t}^{(\ell-1)}\right\} \; .
\end{equation}
Combining event~\eqref{eq:first_good_event} and event~\eqref{eq:second_good_event}, we conclude that the following event holds with probability at least $1-\delta$
\begin{equation*}
\bigcap\limits_{\ell = 1}^{B-1}\bigcap\limits_{t = \mathcal{T}_{\ell}+1}^{T}\left\{|\langle x,\hat{\theta}_\ell - \theta^* \rangle | \leq \underset{y \in \mathcal{A}_{t}^{(\ell-1)}}{\max}\|y\|_{V_\ell^{-1}} \cdot \Big(\beta_{t,\ell}^{(1)}(\delta) \wedge \beta_{t, \ell}^{(2)}(\delta) \Big)  , \; \forall x \in \mathcal{A}_{t}^{(\ell-1)} 
\right\} \; .
\end{equation*}
\end{proof}
\begin{lemma} \label{lem:optimal_arm}
Let $E$ be the good event defined in \cref{lem:good_event} with $\delta = \frac{2}{T}$. Conditioned on $E$, the optimal arm $x_{t}^* \in  \arg\max_{x \in \mathcal{A}_t} \langle x, \theta^* \rangle$ is never eliminated at any round $t$. In particular,
\begin{equation*}
x_t^* \in \mathcal{A}_t^{(\ell)}, \quad \text{for all } \, 1 \leq \ell \leq B-1 \, \text{ and } \, \mathcal{T}_\ell+1 \leq t \leq T \; .
\end{equation*}
\end{lemma}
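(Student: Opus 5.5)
We argue by induction on the elimination index $k$, for a fixed round $t$. Fix $t$ and assume $E$ holds. The base case is $\mathcal{A}_t^{(0)} = \mathcal{A}_t$, which trivially contains $x_t^*$. Suppose that $x_t^* \in \mathcal{A}_t^{(k-1)}$ for some $k$ with $1 \le k \le \ell$ and $t \ge \mathcal{T}_k + 1$. Note that $\mathcal{A}_t^{(k)}$ is only formed at rounds $t > \mathcal{T}_k$, since elimination with $\hat\theta_k$ occurs only in intervals $\ell' \ge k+1$. This is exactly the index range $\mathcal{T}_k+1 \le t \le T$ covered by $E$.

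The key step is the standard elimination argument. Let $x_t^{(k)} \in \arg\max_{x\in\mathcal{A}_t^{(k-1)}}\langle x,\hat\theta_k\rangle$. Since both $x_t^{(k)}$ and $x_t^*$ lie in $\mathcal{A}_t^{(k-1)}$, the event $E$ (at index $k$ and round $t$) gives
\begin{align*}
\langle \hat\theta_k, x_t^{(k)} - x_t^*\rangle
&\le \langle \theta^*, x_t^{(k)}\rangle + \varepsilon_{t,k}(\delta) - \langle \theta^*, x_t^*\rangle + \varepsilon_{t,k}(\delta) \le 2\varepsilon_{t,k}(\delta).
\end{align*}
The last inequality uses the optimality of $x_t^*$ over all of $\mathcal{A}_t \supseteq \mathcal{A}_t^{(k-1)}$, so that $\langle\theta^*, x_t^{(k)} - x_t^*\rangle \le 0$. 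Therefore $x_t^*$ satisfies the membership condition in Line 13 and belongs to $\mathcal{A}_t^{(k)}$, which completes the induction.

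The one step that needs care is matching the threshold used by the algorithm, Eq.~\eqref{eq:epsilon-tk}, with $\varepsilon_{t,k}(\delta)$ from \cref{lem:good_event} at $\delta = 2/T$. For this choice of $\delta$:
\begin{itemize}
\item $\beta^{(1)}_{t,k}(2/T)=\sqrt{2\log(|\mathcal{A}_t^{(k-1)}|(B-1)T^2)}+\sqrt\lambda$;
\item $\beta^{(2)}_{t,k}(2/T)=2\sqrt{\log(2^{6d-3}\pi d(B-1)^2T^2/(4\cdot 15^{d-1}))}+2\sqrt\lambda = 2\sqrt{\log(2^{6d-5}\pi d(B-1)^2/(15^{d-1}/T^2))}+2\sqrt\lambda$.
\end{itemize}
These are exactly the two branches of the minimum in \eqref{eq:epsilon-tk}, so $\varepsilon_{t,k}=\varepsilon_{t,k}(2/T)$. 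Once this identification is checked, the elimination threshold in the algorithm coincides with the confidence width guaranteed by $E$.

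The claimed statement then follows because the sets are nested: $x_t^* \in \mathcal{A}_t^{(\ell)}$ for every $1\le \ell\le B-1$ and every $t\ge \mathcal{T}_\ell+1$.
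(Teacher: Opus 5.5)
Your proof is correct and follows the paper's argument. You fix $t$, induct on the elimination index, and on $E$ bound $\langle \hat\theta_k, x_t^{(k)}-x_t^*\rangle$ by $|\langle \hat\theta_k-\theta^*, x_t^{(k)}\rangle| + |\langle \hat\theta_k-\theta^*, x_t^*\rangle| \le 2\varepsilon_{t,k}$, using $\langle\theta^*, x_t^{(k)}-x_t^*\rangle\le 0$. Your explicit check that Eq.~\eqref{eq:epsilon-tk} coincides with $\varepsilon_{t,k}(2/T)$ from \cref{lem:good_event} is a step the paper leaves implicit, and your computation of both branches is right.
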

\begin{proof}
Fix $t\in[\mathcal{T}_{s}+1,\mathcal{T}_{s+1}]$ for some $s\in[B-1]$. We show by induction on $\ell$ that $x_t^*\in\mathcal{A}_t^{(\ell)}$ for all $\ell\in[s]$.  
\\\\
\textbf{Base case ($\ell = 1$).} Since both $x_t^*$ and $x_{t}^{(1)}$ belong to $\mathcal{A}_t^{(0)}(=\mathcal{A}_t)$, we have
\begin{align*}
\langle \hat{\theta}_{1}, x_{t}^{(1)}-x_t^* \rangle
&= \langle \hat{\theta}_{1} - \theta^*, x_{t}^{(1)} - x_t^* \rangle + \langle \theta^*, x_{t}^{(1)} - x_t^* \rangle \\
&\leq \langle \hat{\theta}_{1} - \theta^*, x_{t}^{(1)} - x_t^* \rangle \\
&\leq |\langle \hat{\theta}_{1} - \theta^*, x_{t}^{(1)} \rangle| +  |\langle \hat{\theta}_{1} - \theta^*, x_t^* \rangle|  \\
&\leq 2\varepsilon_{t,1} \; ,
\end{align*}
where the first inequality follows from the optimality of $x_t^*$ and the last from the definition of the good event $E$. Hence $x_t^*\in \mathcal{A}_t^{(1)}$.
\\\\
\textbf{Inductive step.} Assume $x_t^* \in \mathcal{A}_t^{(\ell-1)}$ for some $\ell \in \{2, \dots, s\}$. Since $x_{t}^{(\ell)} \in \mathcal{A}_t^{(\ell-1)}$, we similarly obtain
\begin{align*}
\langle \hat{\theta}_\ell, x_{t}^{(\ell)} - x_t^* \rangle
&= \langle \hat{\theta}_\ell - \theta^*, x_{t}^{(\ell)} - x_t^* \rangle + \langle \theta^*, x_{t}^{(\ell)} - x_t^* \rangle \\
&\leq \langle \hat{\theta}_\ell - \theta^*, x_{t}^{(\ell)} - x_t^* \rangle \\
&\leq |\langle \hat{\theta}_\ell - \theta^*, x_{t}^{(\ell)} \rangle| + |\langle \hat{\theta}_\ell - \theta^*, x_t^* \rangle| \\
&\leq 2\varepsilon_{t,\ell} \; ,
\end{align*}
which shows that $x_t^* \in \mathcal{A}_t^{(\ell)}$. By induction, the claim holds for all $\ell \in [s]$, completing the proof.
\end{proof}

\begin{lemma} \label{lem:elliptical_potential} \citep{abbasi2011improved}
Let $\{x_1,\dots,x_n\} \subset \mathbb{R}^d$ be a sequence of vectors such that $\|x_i\|_2 \leq 1$ for all $i \in [n]$. Let $H_0 \in \mathbb{R}^{d \times d}$ be a positive definite matrix, and define $H_t \coloneqq H_0 + \sum_{i=1}^{t} x_ix_i^\top$. Then, the following inequality holds
\[
\sum_{t=1}^{n}\min\left\{1, \|x_t\|_{H_{t-1}^{-1}}^2\right\} \leq 2 \log\left(\frac{\det(H_n)}{\det(H_0)}\right) \; .
\]
\end{lemma}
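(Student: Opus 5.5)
The plan is the standard determinant-telescoping argument. Write $u_t \coloneqq \|x_t\|_{H_{t-1}^{-1}}^2 \geq 0$. The proof has two steps: first show that $\log\det H_t$ grows by exactly $\log(1+u_t)$ at each step, and then compare $\min\{1,u_t\}$ with $2\log(1+u_t)$ term by term.

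\textbf{Step 1 (rank-one determinant update).} Since $H_0 \succ 0$ and each $x_ix_i^\top \succeq 0$, every $H_{t-1}$ is positive definite, so $H_{t-1}^{1/2}$ and its inverse exist. Factor
\[
H_t = H_{t-1} + x_tx_t^\top = H_{t-1}^{1/2}\bigl(I + H_{t-1}^{-1/2}x_t x_t^\top H_{t-1}^{-1/2}\bigr)H_{t-1}^{1/2}.
\]
The matrix $I + vv^\top$ with $v = H_{t-1}^{-1/2}x_t$ has eigenvalues $1+\|v\|_2^2$ (once) and $1$ (with multiplicity $d-1$). Hence
\[
\det(H_t) = \det(H_{t-1})\,(1+u_t).
\]
This is the matrix determinant lemma. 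Taking logarithms and telescoping over $t=1,\dots,n$ gives
\[
\log\frac{\det(H_n)}{\det(H_0)} = \sum_{t=1}^n \log(1+u_t).
\]

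\textbf{Step 2 (scalar inequality).} It remains to show $\min\{1,u\} \leq 2\log(1+u)$ for all $u \geq 0$, and then sum over $t$.
\begin{itemize}
\item For $u \in [0,1]$: the function $u\mapsto\log(1+u)$ is concave, so it lies above its chord on $[0,1]$. This gives $\log(1+u) \geq u\log 2 \geq u/2$, i.e., $u \leq 2\log(1+u)$.
\item For $u > 1$: monotonicity gives $2\log(1+u) \geq 2\log 2 > 1$.
\end{itemize}
Summing these per-round bounds and using Step 1 yields the claim.

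The only point needing care is the truncation by $\min\{1,\cdot\}$. Because $H_0$ is an arbitrary positive definite matrix rather than, say, $\lambda I$ with $\lambda \geq 1$, the quantity $u_t$ is not guaranteed to be at most $1$. The case split in Step 2 is exactly what handles this. The hypothesis $\|x_i\|_2\leq 1$ is not needed for this inequality itself; it matters only later, when $\log\det$ is bounded through the trace. Beyond that, I expect no real obstacle.
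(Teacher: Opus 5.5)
Your proof is correct and is the standard argument from the cited source (\citet{abbasi2011improved}); the paper only cites this lemma and does not reprove it. That argument is the same as yours: the matrix determinant lemma gives $\det(H_t)=\det(H_{t-1})\bigl(1+\|x_t\|_{H_{t-1}^{-1}}^2\bigr)$, and the scalar bound $\min\{1,u\}\le 2\log(1+u)$ finishes it; your remark that $\|x_i\|_2\le 1$ is not needed for this inequality is also accurate.
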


\begin{corollary} \label{cor:elliptical_potential}
   Let $\{x_1,\dots,x_n\} \subset \mathbb{R}^d$ be a sequence of vectors such that $\|x_i\|_2 \leq 1$ for all $i \in [n]$. Suppose $\lambda \geq 1$, and define $H_0 \coloneqq \lambda I$ and $H_t \coloneqq \lambda I + \sum_{i=1}^{t} x_ix_i^\top$ for each $t \in \{1,\dots,n\}$. Then, for any $1 \leq m \leq n$,
\[
\sum_{t=m}^{n} \|x_t\|_{H_{t-1}^{-1}}^2 \leq 2 \log\left(\frac{\det(H_n)}{\det(H_{m-1})}\right) \; .
\] 
\end{corollary}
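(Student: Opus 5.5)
We derive Corollary~\ref{cor:elliptical_potential} from Lemma~\ref{lem:elliptical_potential} by applying it to the tail sequence with a shifted initial matrix. The only obstacle is removing the $\min\{1,\cdot\}$ truncation, and the hypothesis $\lambda\ge 1$ handles that directly.

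\textbf{Step 1 (removing the truncation).} For every $t\ge m\ge 1$ we have $H_{t-1}\succeq \lambda I\succeq I$, so $H_{t-1}^{-1}\preceq I$. Hence
\[
\|x_t\|_{H_{t-1}^{-1}}^2 \le \|x_t\|_2^2 \le 1 .
\]
It follows that $\min\{1,\|x_t\|_{H_{t-1}^{-1}}^2\}=\|x_t\|_{H_{t-1}^{-1}}^2$ for each term in the sum.

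\textbf{Step 2 (reindexing).} Define the shifted sequence $x'_i\coloneqq x_{m-1+i}$ for $i=1,\dots,n-m+1$, and take as initial matrix $H'_0\coloneqq H_{m-1}$. This matrix is positive definite, since $H_{m-1}\succeq\lambda I\succ 0$; when $m=1$ it is simply $\lambda I$. The partial sums then satisfy $H'_i=H'_0+\sum_{j\le i}x'_j x_j'^\top=H_{m-1+i}$. In particular $H'_{i-1}=H_{m-2+i}$, which is exactly the matrix paired with $x_{m-1+i}$ in the original sum.

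\textbf{Step 3 (applying the lemma).} Lemma~\ref{lem:elliptical_potential} applied to $\{x'_i\}$ with $H'_0$ gives
\[
\sum_{t=m}^n \min\{1,\|x_t\|^2_{H_{t-1}^{-1}}\}\le 2\log\frac{\det H_n}{\det H_{m-1}} .
\]
Substituting the identity from Step 1 into the left-hand side yields the claimed bound, which completes the proof.
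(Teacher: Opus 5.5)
Your proposal is correct and matches the paper's proof: both use $\lambda\ge 1$ to get $H_{t-1}^{-1}\preceq I$, so each term is at most $1$ and the $\min\{1,\cdot\}$ truncation is inactive, and then apply Lemma~\ref{lem:elliptical_potential} on $[m,n]$. Your explicit reindexing with initial matrix $H_{m-1}$ (which is positive definite, as the lemma requires) simply spells out what the paper leaves implicit in ``applying the lemma over the interval $[m,n]$''.
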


\begin{proof}
    Since $H_{t-1}^{-1} \preceq \lambda^{-1}I$ for all $t$, it follows that $H_{t-1}^{-1} \preceq \lambda^{-1}I \preceq I$. Consequently, $\|x_t\|_{H_{t-1}^{-1}}^2 \leq \|x_t\|_2^2 \leq 1$ for each $t \in [m,n]$. Applying~\cref{lem:elliptical_potential} over the interval $[m,n]$ yields the stated bound.
\end{proof}

\begin{lemma} \label{lem:variance_to_matrix}
    Let $H$ be a positive definite matrix. Suppose $x \in \mathbb{R}^d$ satisfies $\|x\|_{H^{-1}}^2 \leq c$. Then $cH \succeq xx^\top$.
\end{lemma}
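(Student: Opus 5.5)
The key observation is that $c H - xx^\top$ is positive semidefinite precisely when $c\,v^\top H v \ge (x^\top v)^2$ for every $v\in\mathbb{R}^d$. So the whole statement reduces to a weighted Cauchy--Schwarz inequality. Since $H$ is positive definite, $H^{1/2}$ and $H^{-1/2}$ exist, and I will factor the inner product through them.

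For an arbitrary $v\in\mathbb{R}^d$ I write $x^\top v = (H^{-1/2}x)^\top (H^{1/2}v)$. The standard Cauchy--Schwarz inequality then gives
\[
(x^\top v)^2 \le \|H^{-1/2}x\|_2^2\,\|H^{1/2}v\|_2^2 = \|x\|_{H^{-1}}^2\,\|v\|_H^2 .
\]
Applying the hypothesis $\|x\|_{H^{-1}}^2\le c$, I obtain
\[
v^\top (xx^\top) v = (x^\top v)^2 \le c\, v^\top H v .
\]
Rearranged, this reads $v^\top(cH - xx^\top)v \ge 0$.

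Because $v$ was arbitrary, $cH - xx^\top \succeq 0$, which is exactly $cH\succeq xx^\top$ in the paper's notation.

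No step is a genuine obstacle. The one point to check is the edge case: $c\ge 0$ holds automatically, since it bounds a squared norm. If $c=0$, the hypothesis forces $\|x\|_{H^{-1}}=0$, hence $x=0$, and the claim is trivially $0\succeq 0$. An equivalent route would be to congruence-transform by $H^{-1/2}$, reducing the claim to $cI\succeq yy^\top$ with $y=H^{-1/2}x$ and $\|y\|_2^2\le c$. That version follows because the largest eigenvalue of $yy^\top$ is $\|y\|_2^2$.
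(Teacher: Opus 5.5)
Your proof is correct and is essentially the paper's argument: both write $\langle x, z\rangle = (H^{-1/2}x)^\top (H^{1/2}z)$ and apply Cauchy--Schwarz to get $\langle x, z\rangle^2 \le \|x\|_{H^{-1}}^2 \|z\|_H^2 \le c\, z^\top H z$ for every $z$, which gives $cH \succeq xx^\top$. The edge-case remark and the congruence-transform alternative are fine but not needed.
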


\begin{proof}
    For any $z \in \mathbb{R}^d$, we have
    \[
    z^\top(cH)z \geq \|x\|_{H^{-1}}^2 \cdot \|z\|_{H}^2 = \|H^{-\frac{1}{2}}x\|^2 \cdot \|H^{\frac{1}{2}}z\|^2 \geq \langle x, z \rangle^2 = z^\top(xx^\top)z \; .
    \]
    The first inequality follows from the assumption $\|x\|_{H^{-1}}^2 \leq c$. The second inequality follows from the Cauchy–Schwarz inequality. Since this bound holds for all $z \in \mathbb{R}^d$, the matrix inequality $cH \succeq xx^\top$ follows.
\end{proof}

\begin{lemma}[Pseudoinverse variance-to-matrix]
\label{lem:pseudoinverse_variance_to_matrix}
Let $M\succeq 0$ and suppose $x\in\operatorname{range}(M)$.
If $x^\top M^+x\le c$, then
\[
xx^\top \preceq cM.
\]
\end{lemma}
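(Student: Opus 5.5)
The plan is to reduce to the positive definite case already handled in \cref{lem:variance_to_matrix}, by restricting everything to $\operatorname{range}(M)$. Write the spectral decomposition $M=U\Lambda U^\top$, where $U\in\mathbb{R}^{d\times r}$ has orthonormal columns spanning $\operatorname{range}(M)$ and $\Lambda\succ 0$ is $r\times r$ diagonal. Then $M^+=U\Lambda^{-1}U^\top$. Since $x\in\operatorname{range}(M)$, we can write $x=Uu$ with $u=U^\top x\in\mathbb{R}^r$. The hypothesis then becomes $u^\top\Lambda^{-1}u=x^\top M^+x\le c$, which says exactly that $\|u\|_{\Lambda^{-1}}^2\le c$ for the positive definite matrix $\Lambda$.

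\textbf{Step 1.} Apply \cref{lem:variance_to_matrix} in $\mathbb{R}^r$ with $H=\Lambda$. This gives $uu^\top\preceq c\Lambda$.

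\textbf{Step 2.} Conjugate by $U$, which preserves the Loewner order because $z^\top U A U^\top z=(U^\top z)^\top A(U^\top z)$. This yields
\[
xx^\top=Uuu^\top U^\top\preceq cU\Lambda U^\top=cM .
\]

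Alternatively, the direct Cauchy--Schwarz argument of \cref{lem:variance_to_matrix} can be repeated. For any $z$, note that $MM^+x=x$ because $x\in\operatorname{range}(M)$. Hence
\[
\langle x,z\rangle=\langle (M^+)^{1/2}x,\,M^{1/2}z\rangle,
\]
using $(M^+)^{1/2}M^{1/2}=MM^+$, the projector onto $\operatorname{range}(M)$. Cauchy--Schwarz then gives $\langle x,z\rangle^2\le (x^\top M^+x)(z^\top Mz)\le c\,z^\top Mz$.

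The only delicate point is the identity $x=MM^+x$, together with the correct handling of square roots on the range. This is exactly where the hypothesis $x\in\operatorname{range}(M)$ is used, and it is settled by the spectral decomposition.

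If $M=0$, then $x=0$ and the claim is trivial.
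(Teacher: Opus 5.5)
Your proposal is correct. Your second (``alternative'') argument is exactly the paper's proof. The paper writes $(z^\top x)^2=\bigl((M^{1/2}z)^\top M^{+/2}x\bigr)^2\le (z^\top Mz)(x^\top M^+x)\le c\,z^\top Mz$ in one line. You additionally justify the first equality via $M^{1/2}(M^+)^{1/2}=MM^+$ and $MM^+x=x$. That is the only place the hypothesis $x\in\operatorname{range}(M)$ enters, and the paper leaves it implicit.

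Your primary route is a genuinely different packaging. You compress to $\operatorname{range}(M)$ via $M=U\Lambda U^\top$, apply \cref{lem:variance_to_matrix} to $u=U^\top x$ with the positive definite $\Lambda$, and conjugate back by $U$. This reuses the earlier positive definite lemma instead of redoing Cauchy--Schwarz, and it makes the role of the range condition fully transparent through $x=Uu$. The price is introducing the spectral decomposition. Both arguments are equally short and rigorous, and your treatment of the degenerate case $M=0$ covers the only remaining corner.
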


\begin{proof}
For any $z\in\mathbb R^d$,
\[
(z^\top x)^2
=
\bigl((M^{1/2}z)^\top M^{+/2}x\bigr)^2
\le
(z^\top Mz)(x^\top M^+x)
\le
c\,z^\top Mz.
\]
Since this holds for every $z\in\mathbb R^d$, we obtain
$xx^\top\preceq cM$.
\end{proof}

\begin{lemma} \label{lem:empirical_Gram_matrix} \citep{ruan2021linear}
    Let $x_1,\dots,x_n$ be independent and identically distributed (i.i.d.) random vectors drawn from a distribution $\mathcal{D}$ such that $\|x_i\|_2 \leq 1$ almost surely. For any cutoff level $\lambda > 0$, the following inequality holds with probability at least $1-2d\exp(-\frac{\lambda n}{8})$
    \[
    3\lambda I + \frac{1}{n}\sum_{i=1}^{n}x_ix_i^\top \succeq \frac{1}{8} \mathbb{E}_{x \sim \mathcal{D}}\left[xx^\top\right]
    \]
\end{lemma}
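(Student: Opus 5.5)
The plan is to reduce the claim to a lower-tail matrix Chernoff bound applied to suitably whitened, regularized rank-one matrices. I will not split into large- and small-eigenvalue subspaces, because cross terms between the two subspaces spoil the PSD ordering. Instead, I will whiten by $\Sigma+\lambda I$ with $\Sigma\coloneqq\mathbb{E}_{x\sim\mathcal D}[xx^\top]$, so that a single minimum-eigenvalue statement gives the full PSD inequality.

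\textbf{Trivial regime.} First dispose of $\lambda>1$. Since $\|x\|_2\le 1$, we have $\Sigma\preceq I$. Hence $3\lambda I\succeq 3I\succeq \tfrac18\Sigma$, and the inequality holds deterministically. From now on assume $0<\lambda\le 1$.

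\textbf{Whitening and Chernoff.} Let $M\coloneqq(\Sigma+\lambda I)^{-1/2}$, which is well defined because $\Sigma+\lambda I\succ 0$. Define
\[
W_i\coloneqq M\,(x_ix_i^\top+\lambda I)\,M .
\]
The $W_i$ are i.i.d.\ and PSD, with $\mathbb{E}W_i=I$. Each satisfies
\[
\|W_i\|\le \|M\|^2(\|x_i\|_2^2+\lambda)\le \frac{1+\lambda}{\lambda}\le \frac{2}{\lambda}\eqqcolon R .
\]
The matrix Chernoff lower tail (Tropp) gives, for $\mu_{\min}=\lambda_{\min}\bigl(\sum_i\mathbb{E}W_i\bigr)=n$ and $\delta\in(0,1)$,
\[
\Pr\Bigl[\lambda_{\min}\Bigl(\textstyle\sum_i W_i\Bigr)\le(1-\delta)n\Bigr]\le d\exp\!\Bigl(-\frac{\delta^2 n}{2R}\Bigr)=d\exp\!\Bigl(-\frac{\delta^2\lambda n}{4}\Bigr).
\]
Taking $\delta=3/4$ gives failure probability at most $d\exp(-9\lambda n/64)\le 2d\exp(-\lambda n/8)$.

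\textbf{Unwhitening.} On the complementary event, $\sum_iW_i\succeq \tfrac n4 I$. Conjugating by $M^{-1}=(\Sigma+\lambda I)^{1/2}$, which preserves the Loewner order, gives
\[
\frac1n\sum_{i=1}^n x_ix_i^\top+\lambda I\succeq \frac14(\Sigma+\lambda I)\succeq\frac14\Sigma .
\]
Adding $2\lambda I$ to the left-hand side and weakening $\tfrac14$ to $\tfrac18$ on the right yields the claim.

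\textbf{Main obstacle.} The only delicate step is getting the exponent right. The uniform bound $R$ on $\|W_i\|$ must scale like $1/\lambda$, which is what makes the failure probability decay in $\lambda n$. This requires that $\lambda$ enter both the whitening matrix and the summand. The case split on $\lambda\le 1$ is what keeps $R=2/\lambda$ rather than $(1+\lambda)/\lambda$ in general. The numerical constants then leave slack relative to the stated $1/8$, $3\lambda$ and $2d$. If one prefers a Bernstein-type inequality instead of Chernoff, the same whitened $W_i-I$ works, with variance proxy bounded by $nR$.
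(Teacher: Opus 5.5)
Your proof is correct, and it takes a different route from the one behind the cited result. The paper gives no proof of its own; it imports the lemma from \citet{ruan2021linear}. The shape of that statement (a ``cutoff level'', the factor $2d$, and the slack constants $3\lambda$ and $1/8$) points to a two-event argument, though I am inferring this from the statement rather than quoting the source. That argument splits $\mathbb{R}^d$ into the eigenspaces of $\Sigma$ with eigenvalues above and below $\lambda$. It applies a lower-tail matrix Chernoff bound to the $\Sigma$-whitened projections onto the top eigenspace, whose rank-one summands have norm at most $1/\lambda$, so $\delta=1/2$ gives exactly $d\exp(-\lambda n/8)$. It applies an upper-tail bound to the projections onto the bottom eigenspace, whose second moment is at most $\lambda I$. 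Finally it absorbs the cross terms via $(a+b)(a+b)^\top \succeq \tfrac12 aa^\top - bb^\top$.

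So your remark that cross terms ``spoil the PSD ordering'' is overstated: they cost only constant factors. Your regularized whitening by $\Sigma+\lambda I$ folds the small-eigenvalue block into the $\lambda I$ added to each summand. A single lower-tail Chernoff application then suffices, with no upper-tail bound and no cross-term inequality. This buys a shorter proof and a strictly stronger conclusion: $\frac1n\sum_i x_ix_i^\top+\lambda I\succeq\frac14(\Sigma+\lambda I)$ with failure probability $d\exp(-9\lambda n/64)$. That would, for instance, improve the constant $192$ in the paper's design-transfer lemma to $32$. The split argument's merit is that it never regularizes $\Sigma$ and makes the cutoff interpretation explicit.

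Your individual steps check out:
\begin{itemize}
\item $\Sigma\preceq I$ makes the case $\lambda>1$ trivial.
\item $\|W_i\|\le(1+\lambda)/\lambda\le 2/\lambda$ for $\lambda\le 1$.
\item $\mu_{\min}=n$.
\item $\delta=3/4$ gives exponent $9\lambda n/64\ge\lambda n/8$.
\item Conjugation by $(\Sigma+\lambda I)^{1/2}$ preserves the Loewner order.
\end{itemize}
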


\begin{corollary} \label{cor:empirical_Gram_matrix}
    Let $x_1,\dots,x_n$ be independent and identically distributed (i.i.d.) random vectors drawn from a distribution $\mathcal{D}$ such that $\|x_i\|_2 \leq 1$ almost surely. Then, the following inequality holds with probability at least $1-\frac{2}{T}$
    \[
    \frac{24\log(dT)}{n} I + \frac{1}{n}\sum_{i=1}^{n}x_ix_i^\top \succeq \frac{1}{8} \mathbb{E}_{x \sim \mathcal{D}}\left[xx^\top\right]
    \]
\end{corollary}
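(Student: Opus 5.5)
This is a direct specialization of \cref{lem:empirical_Gram_matrix}: we choose the cutoff level $\lambda$ so that the failure probability $2d\exp(-\lambda n/8)$ is at most $2/T$. Setting $\lambda = 8\log(dT)/n$ gives $\lambda n/8 = \log(dT)$. Hence $2d\exp(-\lambda n/8) = 2d/(dT) = 2/T$, so the conclusion of \cref{lem:empirical_Gram_matrix} holds with probability at least $1-2/T$.

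With this choice, $3\lambda I = \frac{24\log(dT)}{n} I$. Substituting into the inequality of \cref{lem:empirical_Gram_matrix} yields exactly
\[
\frac{24\log(dT)}{n} I + \frac{1}{n}\sum_{i=1}^{n}x_ix_i^\top \succeq \frac{1}{8} \mathbb{E}_{x \sim \mathcal{D}}\left[xx^\top\right].
\]

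The only point to check is that the cutoff is admissible, i.e.\ $\lambda>0$. This holds whenever $dT>1$, which is automatic in the nontrivial regime $T\ge 2$. The i.i.d.\ and norm hypotheses pass through unchanged from \cref{lem:empirical_Gram_matrix}, so there is no real obstacle.
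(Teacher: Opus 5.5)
Your proposal is correct and is the same argument as the paper's proof: substitute the cutoff $\lambda = 8\log(dT)/n$ into \cref{lem:empirical_Gram_matrix}, which gives failure probability $2d\exp(-\lambda n/8) = 2/T$ and $3\lambda = 24\log(dT)/n$. Your check that $\lambda>0$ requires $dT>1$ is a harmless addition that the paper leaves implicit.
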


\begin{proof}
    The proof is a direct application of \cref{lem:empirical_Gram_matrix}. We achieve the desired inequality by setting the cutoff level $\lambda$ to $\lambda = \frac{8\log(dT)}{n}$.
\end{proof}

\begin{lemma}\label{lem:gopt_mvee} \citep{todd2007khachiyan}
Let $X=\{x_1,\dots,x_K\}\subset\mathbb{R}^d$ be a set of $K$ points that spans $\mathbb{R}^d$, and fix $\varepsilon\in(0,1]$. Then Khachiyan’s barycentric coordinate descent algorithm computes a $(1+\varepsilon)$-approximation to the minimum-volume enclosing ellipsoid of $X$ in
\[
O\!\Big( K d^2 \,\big( [(1+\varepsilon)^{2/(d+1)} - 1]^{-1} + \log d \big) \Big)
\]
arithmetic operations. In particular, since $[(1+\varepsilon)^{2/(d+1)} - 1]^{-1}=\Theta(d/\varepsilon)$ for $\varepsilon\in(0,1]$, the total time complexity simplifies to
\[
O\!\left( \tfrac{K d^3}{\varepsilon} \right)\; .
\]
\end{lemma}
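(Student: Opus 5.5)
Since this is a cited result, the plan is to reconstruct Todd's analysis of Khachiyan's barycentric coordinate descent for the minimum-volume enclosing ellipsoid (MVEE). The analysis has two parts: a per-iteration cost bound and an iteration-count bound. Their product gives the first displayed complexity, and an elementary estimate then reduces it to $O(Kd^3/\varepsilon)$.

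\emph{Setup and per-iteration cost.} Lift each point to $q_i=(x_i,1)\in\mathbb{R}^{d+1}$ and work with the dual weight vector $u$ on the simplex. Let $M(u)=\sum_i u_i q_iq_i^\top$ and $\omega_i(u)=q_i^\top M(u)^{-1}q_i$. One iteration does the following:
\begin{itemize}
\item pick $j\in\arg\max_i\omega_i(u)$ and set $\kappa=\omega_j$;
\item stop if $\kappa\le(1+\varepsilon')(d+1)$, where $(1+\varepsilon')^{(d+1)/2}=1+\varepsilon$ is the tolerance converting weak $\varepsilon'$-feasibility into a $(1+\varepsilon)$-volume approximation;
\item otherwise update $u\leftarrow(1-\tau)u+\tau e_j$ with the exact line-search step $\tau=(\kappa-(d+1))/((d+1)(\kappa-1))$.
\end{itemize}
Forming $M^{-1}$ and all $\omega_i$ initially costs $O(Kd^2)$. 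Afterwards $M$ changes by a scaling plus a rank-one term, so Sherman--Morrison updates $M^{-1}$ in $O(d^2)$. Given the vector $M^{-1}q_j$, each $\omega_i$ is updated from the inner product $q_i^\top M^{-1}q_j$, which costs $O(Kd)$ per iteration in total. Since $K\ge d$ (the points span $\mathbb{R}^d$), the per-iteration cost is $O(Kd)$.

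\emph{Iteration count.} Use $g(u)=\log\det M(u)$ as the potential, and let $g^*$ be its maximum, which corresponds to the MVEE.
\begin{enumerate}
\item With the exact step $\tau$, one iteration increases $g$ by at least $\phi(\kappa/(d+1)-1)$, where $\phi$ is an explicit function that is quadratic for small excess and logarithmic for large excess.
\item By duality, $g^*-g(u)\le(d+1)\log(\kappa/(d+1))$, so a large excess $\kappa/(d+1)-1$ forces a large gain.
\item Initialize with the Kumar--Yildirim procedure: $2d$ points found by alternating directional extremes (cost $O(Kd^2)$). This gives an initial gap $g^*-g(u_0)=O(d\log d)$.
\item Phase 1 runs while the excess is at least $1$. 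Each step then gains $\Omega(1)$, so this phase takes $O(d\log d)$ iterations.
\item Phase 2 halves the excess repeatedly. While the excess lies in $[\eta,2\eta]$, the gap is $O(d\eta)$ and each step gains $\Omega(\eta^2)$, so this stage takes $O(d/\eta)$ steps. Summing the geometric series down to $\eta=\varepsilon'$ gives $O(d/\varepsilon')$ iterations.
\end{enumerate}
Phase 2 is the main obstacle: matching the per-step gain $\phi$ against the duality-gap bound requires careful calculus, and I would follow Todd's lemmas directly. The total is $O(d\log d+d/\varepsilon')$ iterations. Multiplying by the $O(Kd)$ per-iteration cost, and noting $\varepsilon'=(1+\varepsilon)^{2/(d+1)}-1$, yields $O(Kd^2([(1+\varepsilon)^{2/(d+1)}-1]^{-1}+\log d))$.

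\emph{Simplification.} Write $(1+\varepsilon)^{2/(d+1)}-1=e^{y}-1$ with $y=2\log(1+\varepsilon)/(d+1)$. For $\varepsilon\in(0,1]$ we have $\log(1+\varepsilon)\in[\varepsilon/2,\varepsilon]$, so $y\in[\varepsilon/(d+1),2\varepsilon/(d+1)]\subset(0,1]$. On this range $e^y-1\in[y,2y]$, hence the bracket is $\Theta(\varepsilon/d)$ and its inverse is $\Theta(d/\varepsilon)$. Finally $\log d\le d\le d/\varepsilon$, so the $\log d$ term is absorbed. This gives $O(Kd^3/\varepsilon)$, which is the bound used in \cref{cor:near_G_optimal} with constant $\varepsilon$ (via the Kiefer--Wolfowitz equivalence between MVEE and G-optimal design).
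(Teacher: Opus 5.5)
Your reconstruction is correct and follows the argument the paper imports by citing Todd and Yildirim, since the paper itself gives no proof: Kumar--Yildirim initialization with an $O(d\log d)$ initial gap, $O(Kd)$ rank-one updates per iteration, and potential-gain versus duality-gap counting for $O(d\log d + d/\varepsilon')$ iterations, then the elementary estimate $[(1+\varepsilon)^{2/(d+1)}-1]^{-1}=\Theta(d/\varepsilon)$. One hypothesis needs tightening: the lifting $q_i=(x_i,1)$ requires $X$ to affinely span $\mathbb{R}^d$ (the assumption in the cited work) for $M(u)$ to be invertible, which is slightly stronger than the linear spanning written in the statement.
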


\begin{corollary}[Near G-optimal design] \label{cor:near_G_optimal}
Let $X=\{x_1,\dots,x_K\}\subset\mathbb{R}^d$ and let $r=\mathrm{rank}(X)\le d$.
Let $U\in\mathbb{R}^{d\times r}$ have orthonormal columns spanning $\mathrm{span}\{x_i\}$, and define $x_i':=U^\top x_i\in\mathbb{R}^r$ together with
$\Sigma'(w):=\sum_{i=1}^K w_i\,x_i' x_i'^{\!\top}$ for $w\in\Delta_K$ ($K$-dimensional probability simplex).
Applying the algorithm of \cref{lem:gopt_mvee} to $\{x_i'\}_{i=1}^K$ with accuracy parameter fixed to $\varepsilon=1$ returns weights $w_\circ\in\Delta_K$ such that
\[
\max_{j\in[K]} x_j'^{\!\top}\Big(\Sigma'(w_\circ)\Big)^{-1}x_j' \;\le\; 2\,r \; ,
\]
in
\[
O\!\Big(Kr^2\big([(1+1)^{2/(r+1)}-1]^{-1}+\log r\big)\Big) \;=\; O(Kr^3)
\]
arithmetic operations.
Equivalently, in the original space we have
\[
\max_{j\in[K]} x_j^{\!\top}\big(\Pi\,\Sigma(w_\circ)^{+}\Pi\big)\,x_j \;\le\; 2\,r \; ,
\qquad
\text{where }\ \Sigma(w):=\sum_{i=1}^K w_i\,x_i x_i^\top,\ \Pi:=UU^\top \; ,
\]
with ${}^+$ denoting the Moore--Penrose pseudoinverse.
Hence $w_\circ$ is a constant-factor near-optimal design in the effective dimension $r$, computed in polynomial time $O(Kr^3)$ by the method in \cref{lem:gopt_mvee}.
\end{corollary}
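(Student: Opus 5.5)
The plan has three steps: run the algorithm of \cref{lem:gopt_mvee} on the projected points $\{x_i'\}$ and read the design guarantee off its stopping criterion; substitute $\varepsilon=1$ and dimension $r$ into the operation count; and lift the inequality back to $\mathbb{R}^d$ using the factorization $\Sigma(w)=U\Sigma'(w)U^\top$.

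\textbf{Step 1: reduction to a full-rank problem.} Since the columns of $U$ are an orthonormal basis of $\mathrm{span}\{x_i\}$, every $x_i$ satisfies $x_i=UU^\top x_i=Ux_i'$. The points $\{x_i'\}_{i=1}^K$ therefore span $\mathbb{R}^r$, so the hypothesis of \cref{lem:gopt_mvee} holds in dimension $r$. I would use the standard duality between the (origin-centred) minimum-volume enclosing ellipsoid problem and $D$-optimal design. Khachiyan's barycentric coordinate descent keeps weights $u\in\Delta_K$ with strictly positive entries, starting from uniform weights. This makes $\Sigma'(u)$ invertible throughout. The algorithm stops exactly when
\[
\max_j x_j'^{\top}\Sigma'(u)^{-1}x_j'\le (1+\varepsilon)\,r,
\]
and this stopping certificate is what makes the returned ellipsoid a $(1+\varepsilon)$-approximation in Todd's sense. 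With $\varepsilon=1$, the returned weights $w_\circ$ satisfy the stated bound $2r$.

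\textbf{Step 2: complexity.} Substituting $\varepsilon=1$ and dimension $r$ into the bound of \cref{lem:gopt_mvee} gives $[2^{2/(r+1)}-1]^{-1}$. Since $2^{2/(r+1)}-1=\Theta(1/r)$, this term is $\Theta(r)$, so the total is $O(Kr^2(r+\log r))=O(Kr^3)$. I would add a remark that forming $U$ by an SVD or QR of the data matrix is a preprocessing step. This step costs $O(Kd\min(K,d))$ and is counted separately; it reduces to the stated $O(Kd^3)$ in the regime $r=d$ used in the main text.

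\textbf{Step 3: translation to the original space.} For any weights, $\Sigma(w)=\sum_i w_iUx_i'x_i'^\top U^\top=U\Sigma'(w)U^\top$. When $\Sigma'(w)$ is invertible, the Moore--Penrose pseudoinverse is $\Sigma(w)^+=U\Sigma'(w)^{-1}U^\top$. This candidate satisfies all four Penrose conditions, because $U^\top U=I_r$. It also follows that $\Pi\Sigma^+\Pi=\Sigma^+$. Hence
\[
x_j^\top \Pi\Sigma(w_\circ)^+\Pi x_j = x_j'^{\top}U^\top U\,\Sigma'(w_\circ)^{-1}U^\top U\,x_j' = x_j'^{\top}\Sigma'(w_\circ)^{-1}x_j'\le 2r .
\]

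\textbf{Main obstacle.} The only delicate point is Step 1. One must check that the form of \cref{lem:gopt_mvee} being used concerns origin-centred ellipsoids, so that the design bound constant is $r$ and not $r+1$ as in the lifted affine case. One must also check that its stopping rule certifies the design inequality itself, not only the volume ratio. If only the lifted version is available, I would apply it to the symmetrized set $\{\pm x_i'\}$. The MVEE of a centrally symmetric set is centred at the origin, so the design constant is again $r$. Doubling $K$ does not change the order of the complexity.
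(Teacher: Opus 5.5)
Your proposal is correct and follows essentially the same route as the paper's proof. Like the paper, you project onto $\mathrm{span}\{x_i\}$, read $\max_j x_j'^{\top}\Sigma'(w_\circ)^{-1}x_j'\le(1+\varepsilon)r$ off the MVEE/design duality and the stopping rule, substitute $\varepsilon=1$ into the operation count, and lift back through $\Sigma(w)=U\Sigma'(w)U^\top$. Your Penrose-condition check and your origin-centred versus lifted ($r$ versus $r+1$) caveat are more careful than the paper, which simply asserts both.

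One small slip, which the paper shares: a certificate $\max_j x_j'^{\top}\Sigma'(u)^{-1}x_j'\le(1+\varepsilon)r$ gives a volume ratio of $(1+\varepsilon)^{r/2}$, not $1+\varepsilon$. The term $[(1+\varepsilon)^{2/(r+1)}-1]^{-1}$ also shows that the $\varepsilon$ in \cref{lem:gopt_mvee} is a volume accuracy. So the clean reading of your Step~1 is to stop the origin-centred iteration at design accuracy $1$, which certifies $2r$ directly and still costs at most $O(Kr^3)$.
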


\begin{proof}
Consider the subspace spanned by $\{x_i\}$ of dimension $r$, let $U$ be an orthonormal basis, and project $x_i' := U^\top x_i \in \mathbb{R}^r$. 
The MVEE and $D/G$-optimal design problems are equivalently posed in $\mathbb{R}^r$, and guarantees for $x_i'$ and $\Sigma'(w):=\sum_i w_i x_i' x_i'^{\!\top}$ transfer back to the original space through the projector $\Pi=UU^\top$ and the Moore--Penrose pseudoinverse. 
By \cref{lem:gopt_mvee}, Khachiyan’s barycentric coordinate descent computes an $\varepsilon$-approximate MVEE of $\{x_i'\}$ in 
\[
O\!\Big(Kr^2\big([(1+\varepsilon)^{2/(r+1)}-1]^{-1}+\log r\big)\Big)
\]
arithmetic operations; fixing $\varepsilon=1$ gives $O(Kr^3)$.

Moreover, the standard MVEE $\leftrightarrow D/G$-optimal duality and the algorithm’s stopping rule imply that the returned weights $w_\circ\in\Delta_K$ satisfy the \emph{constraint-violation guarantee}
\[
\max_{j\in[K]} x_j'^{\!\top}\Big(\Sigma'(w_\circ)\Big)^{-1}x_j' \;\le\; (1+\varepsilon)\,r \; .
\]
With $\varepsilon=1$ this yields
\[
\max_{j\in[K]} x_j'^{\!\top}\Big(\Sigma'(w_\circ)\Big)^{-1}x_j' \;\le\; 2\,r \; .
\]
Finally, lifting back to the original space uses the identity
\[
x_j'^{\!\top}\Big(\Sigma'(w_\circ)\Big)^{-1}x_j'
\;=\;
x_j^{\!\top}\big(\Pi\,\Sigma(w_\circ)^{+}\Pi\big)\,x_j\;,
\qquad 
\Sigma(w):=\sum_{i=1}^K w_i\,x_i x_i^\top,\ \Pi:=UU^\top,
\]
so that
\[
\max_{j\in[K]} x_j^{\!\top}\big(\Pi\,\Sigma(w_\circ)^{+}\Pi\big)\,x_j \;\le\; 2\,r\;.
\]
Hence $w_\circ$ is a constant-factor near-$G$-optimal design in effective dimension $r$, computed in polynomial time $O(Kr^3)$ by the method in \cref{lem:gopt_mvee}.
\end{proof}

\begin{lemma}[Design transfer for the G-design prefix of \texttt{BLCE-G}]
\label{lem:blceg_design_transfer}
Fix an interval $j\in[B]$.
Let
\[
n_{j,1}\coloneqq \mathcal T_{j-1}'-\mathcal T_{j-1},
\qquad
n_{j,2}\coloneqq \mathcal T_{j-1}''-\mathcal T_{j-1}'.
\]
For a fresh context draw in interval $j$, let $\mathcal A^{(j-1)}$
denote the post-elimination feasible set obtained by applying the first
$j-1$ elimination steps using the estimates available at the beginning of
interval $j$. For $j=1$, set $\mathcal A^{(0)}=\mathcal A$.

Define the conditional mean G-design covariance
\[
\bar\Sigma_j
\coloneqq
\mathbb E\!\left[
\mathbb E_{z\sim \pi_{G'}(\mathcal A^{(j-1)})}[zz^\top]
\,\middle|\,
\mathcal F_{\mathcal T_{j-1}}
\right].
\]
Let
\[
G_j
\coloneqq
\left\{
\bar\Sigma_j
\preceq
\frac{192}{n_{j,1}}H_{\mathcal T_{j-1}'}
\right\}.
\]
Then
\[
\mathbb P(G_j\mid \mathcal F_{\mathcal T_{j-1}})\ge 1-\frac{2}{T}.
\]
Moreover, on $G_j$,
\[
\mathbb E\!\left[
2\log\frac{\det(H_{\mathcal T_{j-1}''})}
{\det(H_{\mathcal T_{j-1}'})}
\,\middle|\,
\mathcal F_{\mathcal T_{j-1}'}
\right]
\le
2d\log\!\left(
1+\frac{384d\,n_{j,2}}{n_{j,1}}
\right).
\]
\end{lemma}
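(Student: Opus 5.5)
Here $\mathcal T_{j-1}'$ is the end of the G-design prefix of interval $j$ and $\mathcal T_{j-1}''$ the end of the subsequent max-uncertainty phase. The proof has two parts.

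\textbf{Part 1: the high-probability matrix inequality.} Condition on $\mathcal F_{\mathcal T_{j-1}}$. The estimates $\hat\theta_1,\dots,\hat\theta_{j-1}$, the matrices $V_1,\dots,V_{j-1}$, and hence the elimination map $\mathcal A\mapsto\mathcal A^{(j-1)}$ are then fixed. The contexts in the prefix rounds $t\in(\mathcal T_{j-1},\mathcal T_{j-1}']$ are i.i.d.\ and independent of this history. Each pulled arm is drawn as $x_{t,a_t}\sim\pi_{G'}(\mathcal A_t^{(j-1)})$ with independent randomization. So the $n_{j,1}$ pulled vectors are i.i.d.\ given $\mathcal F_{\mathcal T_{j-1}}$, with second moment exactly $\bar\Sigma_j$ and norm at most $1$.

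Apply \cref{cor:empirical_Gram_matrix} conditionally. With probability at least $1-2/T$,
\[
\frac{1}{8}\bar\Sigma_j\preceq \frac{24\log(dT)}{n_{j,1}}I+\frac1{n_{j,1}}\sum_t x_{t,a_t}x_{t,a_t}^\top .
\]
With $\lambda=\log(dT)$, the right-hand side is at most $\frac{24}{n_{j,1}}\bigl(\lambda I+\sum_t x_tx_t^\top\bigr)=\frac{24}{n_{j,1}}H_{\mathcal T_{j-1}'}$, using $24\ge1$ and that $H$ is reset to $\lambda I$ at the interval start. This gives $\bar\Sigma_j\preceq\frac{192}{n_{j,1}}H_{\mathcal T_{j-1}'}$.

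\textbf{Part 2: the log-det bound.} On $G_j$, I bound the determinant growth deterministically given $\mathcal F_{\mathcal T_{j-1}'}$, then pass to the expectation. A cheap route is
\[
H_{\mathcal T''}=H_{\mathcal T'}+\sum_{t\in(\mathcal T',\mathcal T'']}x_tx_t^\top,
\qquad
\log\det(I+M)\le d\log\!\bigl(1+\tr(M)/d\bigr),
\]
with $M=H_{\mathcal T'}^{-1/2}\bigl(\sum_t x_tx_t^\top\bigr)H_{\mathcal T'}^{-1/2}$. By Jensen (concavity of $\log$), it suffices to bound $\mathbb E\,\tr(M)\le n_{j,2}\max_t\mathbb E\|x_t\|_{H_{\mathcal T'}^{-1}}^2$.

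\textbf{Main obstacle.} I need $\mathbb E\|x_t\|_{H_{\mathcal T'}^{-1}}^2\lesssim d/n_{j,1}$ for arms chosen adaptively in the max-uncertainty phase from a fresh $\mathcal A_t^{(j-1)}$. The plan is to use the near G-optimal property pointwise. For every $x\in\mathcal A^{(j-1)}$, writing $\Sigma_{\mathcal A}=\mathbb E_{z\sim\pi_{G'}(\mathcal A^{(j-1)})}[zz^\top]$, \cref{cor:near_G_optimal} gives $x^\top\Sigma_{\mathcal A}^{+}x\le 2d$, and $x\in\operatorname{range}(\Sigma_{\mathcal A})$. \Cref{lem:pseudoinverse_variance_to_matrix} then yields $xx^\top\preceq 2d\,\Sigma_{\mathcal A}$. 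Taking this for the maximizer of $\|x\|_{H^{-1}}$ over the fresh feasible set and averaging over the fresh context (independent of $\mathcal F_{\mathcal T'}$, with the same elimination map) gives
\[
\mathbb E\Bigl[\max_{x\in\mathcal A^{(j-1)}}\|x\|^2_{H_{\mathcal T'}^{-1}}\Bigr]\le 2d\,\tr\bigl(H_{\mathcal T'}^{-1}\bar\Sigma_j\bigr)\le 2d\cdot\frac{192 d}{n_{j,1}},
\]
using $G_j$ in the last step.

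Two points need care. First, the maximizer is against the evolving $H_{t-1}\succeq H_{\mathcal T'}$ rather than $H_{\mathcal T'}$; this only helps, since $\|x_t\|_{H_{\mathcal T'}^{-1}}\le\max_{x}\|x\|_{H_{\mathcal T'}^{-1}}$ for any chosen $x_t$. Second, the expectation and the max must be ordered correctly. Summing over $n_{j,2}$ rounds gives $\mathbb E\,\tr(M)\le 384d^2n_{j,2}/n_{j,1}$, so
\[
2\,\mathbb E\log\det\le 2d\log\bigl(1+384\,d\,n_{j,2}/n_{j,1}\bigr),
\]
which is exactly the claimed bound.
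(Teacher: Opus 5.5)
Your proof is correct, and Part 1 is the paper's argument verbatim in substance: conditional i.i.d.\ draws from the design mixture, then \cref{cor:empirical_Gram_matrix}, then $24\lambda I+\sum g_sg_s^\top\preceq 24H_{\mathcal T_{j-1}'}$.

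In Part 2 you take a slightly different route.

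\textbf{The paper's route.} It stays at the matrix level throughout:
\begin{itemize}
\item it places $x_{s,a_s}x_{s,a_s}^\top\preceq 2d\,\Sigma_s$ inside the determinant, using monotonicity of $\log\det$ in the Loewner order;
\item it uses concavity of $\log\det$ on the positive definite cone to replace $\sum_s\Sigma_s$ by $n_{j,2}\bar\Sigma_j$;
\item only then does it apply $G_j$, in Loewner form.
\end{itemize}

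\textbf{Your route.} You collapse to a scalar first via AM--GM, $\log\det(I+M)\le d\log(1+\operatorname{tr}(M)/d)$. After that you need only scalar concavity of $\log$, the per-round bound $\|x_t\|^2_{H^{-1}}\le 2d\operatorname{tr}(H^{-1}\Sigma_{\mathcal A_t})$, and $G_j$ in trace form.

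\textbf{What each buys.} Your argument is more elementary, since it needs no matrix Jensen step. The paper's intermediate bound $2\log\det(I+2d\,n_{j,2}H^{-1/2}\bar\Sigma_jH^{-1/2})$ is never larger than the corresponding quantity $2d\log(1+2n_{j,2}\operatorname{tr}(H^{-1}\bar\Sigma_j))$ from your route. It could therefore exploit anisotropy of $\bar\Sigma_j$ relative to $H$. However, $G_j$ only supplies the isotropic bound $H^{-1/2}\bar\Sigma_jH^{-1/2}\preceq(192/n_{j,1})I$, and AM--GM is tight for multiples of the identity. So both routes end at exactly the same constant $2d\log(1+384d\,n_{j,2}/n_{j,1})$.

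One small remark: your worry about ordering the max and the expectation is unnecessary. The domination $x_tx_t^\top\preceq 2d\,\Sigma_{\mathcal A_t}$ holds pointwise for whichever arm is pulled from $\mathcal A_t^{(j-1)}$. You therefore never need the maximum over the fresh set. You only need that $\Sigma_{\mathcal A_t}$ is a function of the fresh context and $\mathcal F_{\mathcal T_{j-1}}$-measurable quantities, so its conditional mean given $\mathcal F_{\mathcal T_{j-1}'}$ is $\bar\Sigma_j$.
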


\begin{proof}
Condition on $\mathcal F_{\mathcal T_{j-1}}$.
During the G-design prefix of interval $j$, the selected arms
\[
g_s\coloneqq x_{s,a_s},
\qquad
s=\mathcal T_{j-1}+1,\ldots,\mathcal T_{j-1}',
\]
are i.i.d. draws from the mixture distribution induced by first drawing a
fresh context set and then sampling from
$\pi_{G'}(\mathcal A_s^{(j-1)})$.
The second moment of this mixture distribution is exactly $\bar\Sigma_j$.
Therefore, by \cref{cor:empirical_Gram_matrix}, with conditional
probability at least $1-2/T$,
\[
\frac{24\log(dT)}{n_{j,1}}I
+
\frac{1}{n_{j,1}}
\sum_{s=\mathcal T_{j-1}+1}^{\mathcal T_{j-1}'}
g_sg_s^\top
\succeq
\frac18\bar\Sigma_j.
\]
Equivalently,
\[
\bar\Sigma_j
\preceq
\frac{8}{n_{j,1}}
\left(
24\log(dT)I
+
\sum_{s=\mathcal T_{j-1}+1}^{\mathcal T_{j-1}'}
g_sg_s^\top
\right).
\]
Since $\lambda=\log(dT)$ and
\[
H_{\mathcal T_{j-1}'}
=
\lambda I+
\sum_{s=\mathcal T_{j-1}+1}^{\mathcal T_{j-1}'}
g_sg_s^\top,
\]
we have
\[
24\log(dT)I
+
\sum_{s=\mathcal T_{j-1}+1}^{\mathcal T_{j-1}'}
g_sg_s^\top
\preceq
24H_{\mathcal T_{j-1}'}.
\]
Thus
\[
\bar\Sigma_j
\preceq
\frac{192}{n_{j,1}}H_{\mathcal T_{j-1}'},
\]
which proves the first claim.

Now consider Phase 2 of interval $j$.
For
\[
s=\mathcal T_{j-1}'+1,\ldots,\mathcal T_{j-1}'',
\]
write
\[
\Sigma_s
\coloneqq
\mathbb E_{z\sim \pi_{G'}(\mathcal A_s^{(j-1)})}[zz^\top].
\]
Since $x_{s,a_s}\in\mathcal A_s^{(j-1)}$, the near G-optimal design
guarantee and \cref{lem:pseudoinverse_variance_to_matrix} imply
\[
x_{s,a_s}x_{s,a_s}^\top
\preceq
2d\,\Sigma_s.
\]
Let $H\coloneqq H_{\mathcal T_{j-1}'}$.
Then
\begin{align*}
2\log\frac{\det(H_{\mathcal T_{j-1}''})}{\det(H)}
&=
2\log\det\!\left(
I+
H^{-1/2}
\left(
\sum_{s=\mathcal T_{j-1}'+1}^{\mathcal T_{j-1}''}
x_{s,a_s}x_{s,a_s}^\top
\right)
H^{-1/2}
\right)\\
&\le
2\log\det\!\left(
I+
2d
\sum_{s=\mathcal T_{j-1}'+1}^{\mathcal T_{j-1}''}
H^{-1/2}\Sigma_sH^{-1/2}
\right).
\end{align*}
Conditional on $\mathcal F_{\mathcal T_{j-1}'}$, the Phase 2 context
sets are fresh i.i.d. draws, and hence
\[
\mathbb E[\Sigma_s\mid \mathcal F_{\mathcal T_{j-1}'}]
=
\bar\Sigma_j.
\]
Using the concavity of $\log\det(\cdot)$ on the positive definite cone,
we obtain
\begin{align*}
&\mathbb E\!\left[
2\log\frac{\det(H_{\mathcal T_{j-1}''})}{\det(H)}
\,\middle|\,
\mathcal F_{\mathcal T_{j-1}'}
\right]\\
&\quad\le
2\log\det\!\left(
I+
2d\,n_{j,2}\,
H^{-1/2}\bar\Sigma_jH^{-1/2}
\right).
\end{align*}
On the event $G_j$,
\[
H^{-1/2}\bar\Sigma_jH^{-1/2}
\preceq
\frac{192}{n_{j,1}}I.
\]
Therefore,
\[
\mathbb E\!\left[
2\log\frac{\det(H_{\mathcal T_{j-1}''})}{\det(H)}
\,\middle|\,
\mathcal F_{\mathcal T_{j-1}'}
\right]
\le
2d\log\!\left(
1+\frac{384d\,n_{j,2}}{n_{j,1}}
\right).
\]
This completes the proof.
\end{proof}

\begin{proof}[Proof of Theorem 1]
To establish the claimed interval complexity bound, we analyze the lower bound on the length of the $\ell$-th interval, where $\ell = \lceil \log_2\log_2 T \rceil$. By definition of the schedule, the length of this interval is at least
\[
\Bigg\lceil \frac{T^{1-2^{-\ell}}}{\log_2\log_2T}\Bigg\rceil +2 \geq \frac{T^{1-2^{-\lceil \log_2\log_2 T\rceil}}}{\log_2\log_2T} \geq \frac{T^{1-2^{- \log_2\log_2 T}}}{\log_2\log_2T} = \frac{T^{1-\frac{1}{\log_2T}}}{\log_2\log_2T} = \frac{T}{2\log_2\log_2T} \; .
\]
Since the interval length is non-decreasing in $\ell$, every interval with index $\ell \geq \lceil \log_2\log_2 T \rceil$ has length at least $\frac{T}{2\log_2\log_2T}$. Given the total time horizon $T$, the number of such intervals is therefore at most $\lceil 2 \log_2\log_2T \rceil$. Including the initial $\lceil  \log_2\log_2T \rceil - 1$ intervals, the total number of intervals $B$ is bounded by
\[ 
B \leq \lceil 2 \log_2\log_2T \rceil + \lceil  \log_2\log_2T \rceil - 1 \leq 3 \log_2\log_2T + 1 \; ,
\]
which implies that the interval complexity is $\Ocal(\log\log T)$.
\\\\
Next, we begin by decomposing the cumulative expected regret based on the good event $E$. The regret can be written as
\begin{align*}
\mathcal{R}(T) &= \sum_{k=1}^{B}\sum_{t=\mathcal{T}_{k-1}+1}^{\mathcal{T}_k} \mathbb{E}\bigg[\max_{x\in \mathcal{A}_t} \langle x,\theta^*\rangle - \langle x_{t,a_t},\theta^*\rangle \bigg] \\
&= \sum_{k=1}^{B}\sum_{t=\mathcal{T}_{k-1}+1}^{\mathcal{T}_k} \mathbb{E}\bigg[\langle x_t^*-x_{t,a_t}, \theta^* \rangle\bigg| E^c \bigg] \cdot \mathbb{P}(E^c) + \mathbb{E}\bigg[\langle x_t^*-x_{t,a_t}, \theta^* \rangle\bigg| E \bigg] \cdot \mathbb{P}(E) \; .
\end{align*}
Using the triangle inequality followed by the Cauchy-Schwarz inequality, we have
\begin{align*}
\langle x_t^*-x_{t,a_t}, \theta^* \rangle &\leq |\langle x_t^*-x_{t,a_t}, \theta^* \rangle| \\
&\leq |\langle x_t^*, \theta^* \rangle| + |\langle x_{t,a_t}, \theta^* \rangle| \\
&\leq \|x_t^*\|\cdot\|\theta^*\| + \|x_{t,a_t}\|\cdot\|\theta^*\| \\
&\leq 2 \; ,
\end{align*}
where the last inequality follows from Assumption~\ref{assum:norm}.
Hence, the cumulative expected regret can be bounded as
\begin{align*}
\mathcal{R}(T) &\leq 2T\cdot \mathbb{P}(E^c)+ \sum_{k=1}^{B}\sum_{t=\mathcal{T}_{k-1}+1}^{\mathcal{T}_k} \mathbb{E}\bigg[\langle x_t^*-x_{t,a_t}, \theta^* \rangle\bigg| E \bigg] \cdot \mathbb{P}(E) \\
&\leq \Ocal(1) + \sum_{k=1}^{B}\sum_{t=\mathcal{T}_{k-1}+1}^{\mathcal{T}_k} \mathbb{E}\bigg[\langle x_t^*-x_{t,a_t}, \theta^* \rangle\bigg| E \bigg] \; ,
\end{align*}
where the $\Ocal(1)$ term follows from the high-probability guarantee $\mathbb{P}(E^c) = \Ocal(1/T)$.
Throughout the remainder of the analysis, we therefore condition on the good event \(E\). Let $\text{Regret}_{\ell}$ denote the cumulative expected regret incurred during interval $\ell$. We analyze the regret separately for the case \(\ell = 1\) and for all subsequent intervals \(\ell \ge 2\).
\paragraph{Case 1: $\ell=1$.} In the first interval, the number of rounds is $\mathcal{T}_1 = \left\lceil \frac{\sqrt{T}}{\log_2\log_2 T}\right\rceil + 1$, and the instantaneous regret is bounded by 2. Therefore,
\[
\text{Regret}_{1} = \sum_{t=1}^{\mathcal{T}_1} \mathbb{E}[\langle x_t^*-x_{t,a_t}, \theta^* \rangle] \leq 2 \mathcal{T}_1 \leq 2 \bigg( \frac{\sqrt{T}}{\log_2\log_2 T} + 2 \bigg) = \Ocal\big(\sqrt{T}\big) \; .
\]
\paragraph{Case 2: $\ell \geq 2$.} For each interval $\ell \geq 2$, the rounds following the arm elimination steps are divided into three phases:
\\
\textbf{Phases 1 \& 2.} In the first two phases of length $\Big\lceil \frac{c^2T^{1-2^{-\ell}}}{\log_2\log_2T}\Big\rceil + \Big\lceil \frac{c(1-c)T^{1-2^{-\ell}}}{\log_2\log_2T}\Big\rceil$, the algorithm selects arm according to a near G-optimal design and the most informative direction with respect to the current Gram matrix. For any round $t$ in these phases, the instantaneous regret satisfies
\begin{align*}
\langle x_t^* - x_{t,a_t}, \theta^* \rangle &= \langle x_t^* - x_{t,a_t}, \theta^* - \hat{\theta}_{\ell-1} \rangle + \langle x_t^* - x_{t,a_t}, \hat{\theta}_{\ell-1} \rangle \\
&\leq \langle x_t^* - x_{t,a_t}, \theta^* - \hat{\theta}_{\ell-1} \rangle + \langle x_{t}^{(\ell-1)} - x_{t,a_t}, \hat{\theta}_{\ell-1} \rangle \\
&\leq |\langle x_t^* , \hat{\theta}_{\ell-1} - \theta^* \rangle| + |\langle x_{t,a_t}, \hat{\theta}_{\ell-1}- \theta^* \rangle| + 2\varepsilon_{t,\ell-1} \\
&\leq 4\varepsilon_{t,\ell-1} \; . \tag{3} \label{eq:regret_upper_bound_1,2}
\end{align*}
The first inequality uses the fact that $x_{t}^{(\ell-1)}$ is optimal with respect to $\hat{\theta}_{\ell-1}$ and that $x_t^*$ belongs to $\mathcal{A}_t^{(\ell-2)}$ by \cref{lem:optimal_arm}. The second inequality follows since $x_{t,a_t} \in \mathcal{A}_{t}^{(\ell-1)}$ by construction of the arm elimination step. The final inequality follows from the definition of the good event $E$ and again from \cref{lem:optimal_arm}, which guarantees $x_t^* \in \mathcal{A}_t^{(\ell-2)}$.  
\\\\
\textbf{Phase 3.} In the last phase, the algorithm selects arms greedily with respect to the estimated parameter, i.e., $x_{t,a_t} \in \arg\max_{x \in \mathcal{A}_t^{(\ell-1)}}\langle x, \hat{\theta}_{\ell-1}\rangle$. For any round $t$ in this phase, the instantaneous regret satisfies
\begin{align*}
\langle x_t^* - x_{t,a_t}, \theta^* \rangle &= \langle x_t^* - x_{t,a_t}, \theta^* - \hat{\theta}_{\ell-1} \rangle + \langle x_t^* - x_{t,a_t}, \hat{\theta}_{\ell-1} \rangle \\
&\leq \langle x_t^* - x_{t,a_t}, \theta^* - \hat{\theta}_{\ell-1} \rangle \\
&\leq |\langle x_t^* , \hat{\theta}_{\ell-1} - \theta^* \rangle| + |\langle x_{t,a_t}, \hat{\theta}_{\ell-1}- \theta^* \rangle| \\
&\leq 2\varepsilon_{t,\ell-1} \; . \tag{4} \label{eq:regret_upper_bound_3}
\end{align*}
The first inequality follows from the fact that $x_{t,a_t}$ is optimal with respect to $\hat{\theta}_{\ell-1}$ and that $x_t^*$ belongs to $\mathcal{A}_t^{(\ell-1)}$ by \cref{lem:optimal_arm}. The final inequality follows directly from the definition of the good event $E$ and from \cref{lem:optimal_arm}, as in Phases 1 \& 2.
\\\\
Therefore, it suffices to upper bound the quantity $\sum_{t=\mathcal{T}_\ell+1}^{\mathcal{T}_{\ell+1}}\varepsilon_{t,\ell}$ for each $\ell \in [B-1]$, which reduces to bounding the sum  $\sum_{t=\mathcal{T}_\ell+1}^{\mathcal{T}_{\ell+1}} \max_{{y \in \mathcal{A}_{t}^{(\ell-1)}}}\|y\|_{V_\ell^{-1}}$. Since both $V_\ell$ and the arm elimination rule---determined by $\hat{\theta}_1, \dots, \hat{\theta}_{\ell-1}$---are measurable with respect to $\mathcal{F}_{\mathcal{T}_{\ell}}$, they can be treated as fixed quantities conditional on this filtration. Given that the contexts are drawn independently and identically, it follows by the tower property that for any $t,v \in [\mathcal{T}_\ell+1, \mathcal{T}_{\ell+1}]$, we have
\begin{align*}
    \mathbb{E}\bigg[ \max_{{y \in \mathcal{A}_{t}^{(\ell-1)}}} y^\top V_\ell^{-1}y \bigg] 
    &= \mathbb{E}\bigg[\mathbb{E}\bigg[ \max_{{y \in \mathcal{A}_t^{(\ell-1)}}} y^\top V_\ell^{-1}y \, \bigg| \mathcal{F}_{\mathcal{T}_{\ell}} \bigg] \bigg] \\
    &= \mathbb{E}\bigg[\mathbb{E}\bigg[ \max_{{y \in \mathcal{A}_v^{(\ell-1)}}} y^\top V_\ell^{-1}y \, \bigg| \mathcal{F}_{\mathcal{T}_{\ell}} \bigg] \bigg] \\
    &= 
    \mathbb{E}\bigg[ \max_{{y \in \mathcal{A}_{v}^{(\ell-1)}}} y^\top V_\ell^{-1}y \bigg] \; . \tag{5} \label{eq:maximal_quadratic}
\end{align*}
Define $\mathcal{T}_{\ell-1}' \coloneqq \mathcal{T}_{\ell-1} + \Big\lceil \frac{c^2T^{1-2^{-\ell}}}{\log_2\log_2T} \Big\rceil$ and $\mathcal{T}_{\ell-1}'' \coloneqq \mathcal{T}_{\ell-1}' + \Big\lceil \frac{c(1-c)T^{1-2^{-\ell}}}{\log_2\log_2T} \Big\rceil$ for all $\ell \geq 2$,
whereas for the first interval ($\ell = 1$), we set $\mathcal{T}_{0}' \coloneqq \mathcal{T}_{0} + \Big\lceil \frac{c\sqrt{T}}{\log_2\log_2T} \Big\rceil$ and $\mathcal{T}_{0}'' \coloneqq \mathcal{T}_{1}$. Now, fix any $t \in [\mathcal{T}_\ell+1, \mathcal{T}_{\ell+1}]$ and consider the interval $s \in [\mathcal{T}_{\ell-1}'+1, \mathcal{T}_{\ell-1}'']$. Then, we have the following result
\begin{align*}
    \sum_{s=\mathcal{T}_{\ell-1}'+1}^{\mathcal{T}_{\ell-1}''} \mathbb{E}\bigg[ \max_{y \in \mathcal{A}_{t}^{(\ell-1)}} y^\top V_\ell^{-1}y \bigg] 
    &\leq  \sum_{s=\mathcal{T}_{\ell-1}'+1}^{\mathcal{T}_{\ell-1}''} \mathbb{E} \bigg[ \max_{y \in \mathcal{A}_{t}^{(\ell-1)}} y^\top H_{s-1}^{-1}y \bigg] \\
    &= \sum_{s=\mathcal{T}_{\ell-1}'+1}^{\mathcal{T}_{\ell-1}''} \mathbb{E} \bigg[\mathbb{E} \bigg[ \max_{y \in \mathcal{A}_{t}^{(\ell-1)}} y^\top H_{s-1}^{-1}y \, \bigg| \mathcal{F}_{s-1} \bigg] \bigg] \\
    &= \sum_{s=\mathcal{T}_{\ell-1}'+1}^{\mathcal{T}_{\ell-1}''} \mathbb{E} \bigg[\mathbb{E} \bigg[ \max_{y \in \mathcal{A}_{s}^{(\ell-1)}} y^\top H_{s-1}^{-1}y \, \bigg| \mathcal{F}_{s-1} \bigg] \bigg] \\
    &= \sum_{s=\mathcal{T}_{\ell-1}'+1}^{\mathcal{T}_{\ell-1}''} \mathbb{E} \bigg[ \max_{y \in \mathcal{A}_{s}^{(\ell-1)}} y^\top H_{s-1}^{-1}y  \bigg]
    \; . \tag{6} \label{eq:shift_uncertainty}
\end{align*}
The first inequality follows from the monotonicity of the matrices, as $H_{s-1} \preceq V_\ell$ for all $s$ in the interval. The first and last equalities follow from the tower property. The second equality uses the fact that, conditional on $\mathcal{F}_{s-1}$, both $H_{s-1}$ and the arm elimination rule (determined by $\hat{\theta}_1, \dots, \hat{\theta}_{\ell-1}$) are fixed, and the distribution of the contexts remains unchanged due to their i.i.d. nature.
\\\\
In Phase 2, the algorithm proceeds by selecting the most informative direction at each step with respect to the current Gram matrix. For all $\ell \geq 1$, we obtain
\begin{align*}
    \sum_{s=\mathcal{T}_{\ell-1}'+1}^{\mathcal{T}_{\ell-1}''} \mathbb{E} \bigg[ \max_{y \in \mathcal{A}_{s}^{(\ell-1)}} y^\top H_{s-1}^{-1}y  \bigg] 
    &= \mathbb{E} \left[ \sum_{s=\mathcal{T}_{\ell-1}'+1}^{\mathcal{T}_{\ell-1}''} \, \max_{y \in \mathcal{A}_{s}^{(\ell-1)}} y^\top H_{s-1}^{-1}y  \right] \\
    &= \mathbb{E} \left[ \sum_{s=\mathcal{T}_{\ell-1}'+1}^{\mathcal{T}_{\ell-1}''} x_{s,a_s}^\top H_{s-1}^{-1}x_{s,a_s}  \right] \\
    &\leq \mathbb{E} \left[ 2\log \left( \frac{\det(H_{\mathcal{T}_{\ell-1}''})}{\det(H_{\mathcal{T}_{\ell-1}'})} \right) \right] \; , \tag{7} \label{eq:elliptical_potential}
\end{align*}
where the second equality follows from the arm-selection strategy of Phase 2, and the first inequality follows from \cref{cor:elliptical_potential}.
\\\\
Next, we bound the expected determinant growth during Phase 2.
The previous argument reduces the problem to controlling
\[
\mathbb E\!\left[
2\log\left(
\frac{\det(H_{\mathcal T_{\ell-1}''})}
{\det(H_{\mathcal T_{\ell-1}'})}
\right)
\right].
\]
We apply \cref{lem:blceg_design_transfer} with $j=\ell$.
Let $G_\ell$ be the event defined in that lemma. Since
\[
\mathbb P(G_\ell^c\mid \mathcal F_{\mathcal T_{\ell-1}})\le \frac{2}{T},
\]
we have $\mathbb P(G_\ell^c)\le 2/T$.

Define
\[
Z_\ell
\coloneqq
2\log\left(
\frac{\det(H_{\mathcal T_{\ell-1}''})}
{\det(H_{\mathcal T_{\ell-1}'})}
\right).
\]
By the tower property and \cref{lem:blceg_design_transfer},
\[
\mathbb E[Z_\ell\mathbbm 1_{G_\ell}]
=
\mathbb E\!\left[
\mathbbm 1_{G_\ell}
\mathbb E[Z_\ell\mid \mathcal F_{\mathcal T_{\ell-1}'}]
\right]
\le
2d\log\!\left(
1+\frac{384d\,n_{\ell,2}}{n_{\ell,1}}
\right),
\]
where
\[
n_{\ell,1}
\coloneqq
\mathcal T_{\ell-1}'-\mathcal T_{\ell-1},
\qquad
n_{\ell,2}
\coloneqq
\mathcal T_{\ell-1}''-\mathcal T_{\ell-1}'.
\]
On the complement $G_\ell^c$, we use the crude determinant bound.
Since $H_{\mathcal T_{\ell-1}'}\succeq \lambda I$ and
$\operatorname{tr}(H_{\mathcal T_{\ell-1}''})\le d\lambda+T$, the AM--GM
inequality gives
\[
Z_\ell
\le
2\log\left(
\frac{(\operatorname{tr}(H_{\mathcal T_{\ell-1}''})/d)^d}
{\lambda^d}
\right)
\le
2d\log\left(1+\frac{T}{d\lambda}\right)
\le
2d\log(2T).
\]
Therefore,
\begin{align}
\mathbb E[Z_\ell]
&=
\mathbb E[Z_\ell\mathbbm 1_{G_\ell}]
+
\mathbb E[Z_\ell\mathbbm 1_{G_\ell^c}]
\nonumber\\
&\le
2d\log\!\left(
1+\frac{384d\,n_{\ell,2}}{n_{\ell,1}}
\right)
+
\frac{4d\log(2T)}{T}.
\label{eq:elliptical_dlogd_new}
\end{align}
For $\texttt{BLCE-G}$, $c\in(0,1)$ is fixed. By the definitions of
$\mathcal T_{\ell-1}'$ and $\mathcal T_{\ell-1}''$, the ratio
$n_{\ell,2}/n_{\ell,1}$ is bounded by a constant depending only on $c$.
For instance, for all sufficiently large $T$,
\[
\frac{n_{\ell,2}}{n_{\ell,1}}
\le
\frac{4}{c^2}.
\]
Absorbing constants into the $c$-dependent constant, we get
\begin{equation}
\mathbb E[Z_\ell]
\le
2d\log\left(\frac{1537d}{c^2}\right)
+
\frac{4d\log(2T)}{T}.
\tag{8}
\label{eq:elliptical_dlogd}
\end{equation}
\\\\
Now, we establish an upper bound on the cumulative expected regret. By combining the bounds derived in  (\ref{eq:regret_upper_bound_1,2}) and (\ref{eq:regret_upper_bound_3}), we can bound the cumulative expected regret for interval $\ell$, denoted by $\text{Regret}_\ell$, for any $\ell \geq 2$ as follows
\begin{align*}
    \text{Regret}_\ell &= \sum_{t=\mathcal{T}_{\ell-1}+1}^{\mathcal{T}_\ell} \mathbb{E}[\langle x_t^*-x_{t,a_t}, \theta^* \rangle] \\
    &\leq 4\sum_{t=\mathcal{T}_{\ell-1}+1}^{\mathcal{T}_\ell} \mathbb{E}[\varepsilon_{t,\ell-1}] \\
    &= 4\sum_{t=\mathcal{T}_{\ell-1}+1}^{\mathcal{T}_\ell} \mathbb{E}\left[\underset{y \in \mathcal{A}_{t}^{(\ell-2)}}{\max}\|y\|_{V_{\ell-1}^{-1}} \cdot \left(\beta_{t,\ell-1}^{(1)}\left(\frac{2}{T}\right) \wedge \beta_{t, \ell-1}^{(2)}\left(\frac{2}{T}\right) \right)\right] \\
    &\leq 4\!\!\sum_{t=\mathcal{T}_{\ell-1}+1}^{\mathcal{T}_\ell}\! \mathbb{E}\left[\underset{y \in \mathcal{A}_{t}^{(\ell-2)}}{\max}\|y\|_{V_{\ell-1}^{-1}} \right] \!\cdot \!\left(\!\sqrt{2\log\left({K(B-1)T^2}\right)} + \sqrt{\lambda}\bigwedge 2\sqrt{\log\bigg(\frac{2^{6d-5}\pi d(B-1)^2T^2}{15^{d-1}}\bigg)} +2\sqrt{\lambda}\!\right)\! .
\end{align*}
The last inequality follows from the fact that the size of the candidate arm set satisfies $|\mathcal{A}_t^{(\ell-2)}| \leq |\mathcal{A}_t| = K$ for all $t \in [\mathcal{T}_{\ell-1}+1, \mathcal{T}_\ell]$, allowing us to upper bound the confidence parameter $\beta_{t,\ell-1}^{(1)}$ uniformly over the action set.
\\\\
Using (\ref{eq:maximal_quadratic}), (\ref{eq:shift_uncertainty}), and (\ref{eq:elliptical_dlogd}), we can bound the summation $\sum_{t=\mathcal{T}_{\ell-1}+1}^{\mathcal{T}_\ell} \mathbb{E}\left[\max_{y \in \mathcal{A}_{t}^{(\ell-2)}}\|y\|_{V_{\ell-1}^{-1}} \right]$ for any $\ell \geq 2$ as
\begin{align*}
    \sum_{t=\mathcal{T}_{\ell-1}+1}^{\mathcal{T}_{\ell}} \mathbb{E}\left[\max_{y \in \mathcal{A}_{t}^{(\ell-2)}}\|y\|_{V_{\ell-1}^{-1}} \right]
    &\leq \sum_{t=\mathcal{T}_{\ell-1}+1}^{\mathcal{T}_\ell} \sqrt{ \mathbb{E}\left[\max_{y \in \mathcal{A}_{t}^{(\ell-2)}}y^\top V_{\ell-1}^{-1}y \right]} \\ 
    &= (\mathcal{T}_\ell - \mathcal{T}_{\ell-1}) \cdot \sqrt{\mathbb{E}\left[\max_{y \in \mathcal{A}_{t}^{(\ell-2)}}y^\top V_{\ell-1}^{-1}y \right]} \\
    &\leq \frac{\mathcal{T}_\ell - \mathcal{T}_{\ell-1}}{\sqrt{\mathcal{T}_{\ell-2}'' - \mathcal{T}_{\ell-2}'}} \cdot \sqrt{\sum_{s=\mathcal{T}_{\ell-2}'+1}^{\mathcal{T}_{\ell-2}''}  \mathbb{E} \bigg[ \max_{y \in \mathcal{A}_{s}^{(\ell-2)}} y^\top H_{s-1}^{-1}y  \bigg]} \\
    &\leq \frac{\mathcal{T}_\ell - \mathcal{T}_{\ell-1}}{\sqrt{\mathcal{T}''_{\ell-2} - \mathcal{T}'_{\ell-2}}} \cdot \sqrt{2d\log\left(\frac{1537d}{c^2}\right) + \frac{4d\log(2T)}{T}} \\
    &\leq \frac{\frac{T^{1-2^{-\ell}}}{\log_2\log_2T}+3}{\sqrt{\frac{c(1-c)T^{1-2^{1-\ell}}}{\log_2\log_2T}}} \cdot \sqrt{2d\log\left(\frac{1537d}{c^2}\right) + \frac{4d\log(2T)}{T}} \\
    &= \left(\!\sqrt{\frac{T}{c(1-c)\log_2\log_2T}} + 3\sqrt{\frac{\log_2\log_2T}{c(1-c)T^{1-2^{1-\ell}}}}\right)\! \sqrt{2d\log\left(\frac{1537d}{c^2}\right) + \frac{4d\log(2T)}{T}} \\
    &\leq \left(\sqrt{\frac{T}{c(1-c)\log_2\log_2T}} + \frac{3}{\sqrt{c(1-c)}}\right) \cdot \sqrt{2d\log d + 2\left(\log\left(\frac{1537}{c^2}\right)+2\right)d} \\ 
    &\leq c' \cdot \sqrt{\frac{dT\log d}{\log_2\log_2T}} \; ,
\end{align*}
where $t$ is arbitrary in the interval $[\mathcal{T}_{\ell-1}+1, \mathcal{T}_\ell]$, and we define  $c' \coloneqq 4\sqrt{\frac{2\left(\log\left(1537/c^2\right)+3\right)}{c(1-c)}}$. The first inequality applies Jensen’s inequality to move the square root outside the expectation. The first equality follows from (\ref{eq:maximal_quadratic}), while the second inequality uses the bound in (\ref{eq:shift_uncertainty}). The third inequality follows from (\ref{eq:elliptical_dlogd}). The fifth inequality follows from the facts that $\log_2\log_2T \leq T^{1-2^{1-\ell}}$ and $\log(2T) \leq T$ for all $T \geq 1$. The final inequality uses the bounds $1 \leq \frac{T}{\log_2\log_2T}$ and $d \leq d\log d$ for all $T \geq 1$ and $d \geq 3$.
\\\\
We now derive the cumulative expected regret after the first interval. Based on the previously derived results, the total regret from intervals $\ell = 2$ to $B$ can be bounded as follows
\begin{align*}
    \sum_{\ell=2}^{B} \text{Regret}_{\ell} &\leq  \sum_{\ell=2}^{B} 4c'\sqrt{\frac{dT\log d}{\log_2\log_2T}} \cdot \left(\sqrt{2\log\left({K(B-1)T^2}\right)} + \sqrt{\lambda} \bigwedge 2\sqrt{\log\bigg(\frac{2^{6d-5}\pi d(B-1)^2T^2}{15^{d-1}}\bigg)} +2\sqrt{\lambda}\right) \\
    &= \Ocal\left(\sqrt{dT\log d \log\log T} \cdot \left( \sqrt{\log(KT)} + \sqrt{\log{(dT)}}\wedge \sqrt{d\log\left(\frac{64}{15}\right) + \log(dT)} + \sqrt{\log(dT)} \right) \right) \\
    &= \Ocal \left(\sqrt{dT\log d \log\log T} \cdot \left( \left(\sqrt{\log(KT)} + \sqrt{\log{T}}\right) \wedge \left(\sqrt{d + \log T} + \sqrt{\log T} \right) \right)\right) \\ 
    &= \Ocal\left(\sqrt{dT\log d \log\log T} \cdot \left( \sqrt{\log(KT)} \wedge \sqrt{d + \log T}  \right) \right) \; .
\end{align*}
The first equality follows from substituting $B = \Ocal(\log\log T)$ and $\lambda = \log(dT)$. The second equality holds in the regime where $T \geq d$. This is a safe assumption, as for $T \leq d$, the total regret is trivially bounded by $\mathcal{R}(T) = \sum_{t=1}^{T}\mathbb{E}[\langle x_t^* - x_{t,a_t}, \theta^* \rangle] \leq 2T \leq 2\sqrt{dT} = \Ocal(\sqrt{dT})$, which is a much smaller bound.
\\\\
Thus, the total worst-case regret is bounded as
\begin{align*}
   \mathcal{R}(T) &\leq \sum_{k=1}^{B}\sum_{t=\mathcal{T}_{k-1}+1}^{\mathcal{T}_k} \mathbb{E}\bigg[\langle x_t^*-x_{t,a_t}, \theta^* \rangle\bigg| E \bigg] \\
   &= \text{Regret}_1 + \sum_{k=2}^{B} \text{Regret}_k \\
   &= \Ocal\big(\sqrt{T}\big) + \Ocal\left(\sqrt{dT\log d \log\log T} \cdot \left( \sqrt{\log(KT)} \wedge \sqrt{d + \log T}  \right) \right) \\
   &= \Ocal\left(\sqrt{dT} \left( \sqrt{\log(KT)} \wedge \sqrt{d + \log T}  \right) \sqrt{\log d \log\log T} \right) \; .
\end{align*}
Therefore, the worst-case regret for the algorithm is given by
\[
\mathcal{R}(T) = \Ocal\left(\sqrt{dT} \left( \sqrt{\log(KT)} \wedge \sqrt{d + \log T}  \right) \sqrt{\log d \log\log T} \right) = \tilde{\Ocal}\left(\sqrt{dT\log K} \wedge d\sqrt{T}\right) \; .
\]
\end{proof}
\clearpage
\section{Proof of Theorem 2}\label{proof:thm2}

\begin{proof}[Proof of Theorem 2]
As \cref{rmk:loosing_iid} encompasses the standard i.i.d.\ context assumption, we establish \cref{thm:BLCE} under the more general conditions specified therein.

To establish the claimed interval complexity bound, we analyze the lower bound on the length of the $\ell$-th interval, where $\ell = \lceil \log_2\log_2 T \rceil$. By definition of the schedule, the length of this interval is at least
\[
\Bigg\lceil \frac{T^{1-2^{-\ell}}}{\log_2\log_2T}\Bigg\rceil + 1  \geq \frac{T^{1-2^{-\lceil \log_2\log_2 T\rceil}}}{\log_2\log_2T} \geq \frac{T^{1-2^{- \log_2\log_2 T}}}{\log_2\log_2T} = \frac{T^{1-\frac{1}{\log_2T}}}{\log_2\log_2T} = \frac{T}{2\log_2\log_2T} \; .
\]
Since the interval length is non-decreasing in $\ell$, every interval with index $\ell \geq \lceil \log_2\log_2 T \rceil$ has length at least $\frac{T}{2\log_2\log_2T}$. Given the total time horizon $T$, the number of such intervals is therefore at most $\lceil 2 \log_2\log_2T \rceil$. Including the initial $\lceil  \log_2\log_2T \rceil - 1$ intervals, the total number of intervals $B$ is bounded by
\[ 
B \leq \lceil 2 \log_2\log_2T \rceil + \lceil  \log_2\log_2T \rceil - 1 \leq 3 \log_2\log_2T + 1 \; ,
\]
which implies that the interval complexity is $\Ocal(\log\log T)$.
\\\\
Next, we begin by decomposing the cumulative expected regret based on the good event $E$. The regret can be written as
\begin{align*}
\mathcal{R}(T) &= \sum_{k=1}^{B}\sum_{t=\mathcal{T}_{k-1}+1}^{\mathcal{T}_k} \mathbb{E}\bigg[\max_{x\in \mathcal{A}_t} \langle x,\theta^*\rangle - \langle x_{t,a_t},\theta^*\rangle \bigg] \\
&= \sum_{k=1}^{B}\sum_{t=\mathcal{T}_{k-1}+1}^{\mathcal{T}_k} \mathbb{E}\bigg[\langle x_t^*-x_{t,a_t}, \theta^* \rangle\bigg| E^c \bigg] \cdot \mathbb{P}(E^c) + \mathbb{E}\bigg[\langle x_t^*-x_{t,a_t}, \theta^* \rangle\bigg| E \bigg] \cdot \mathbb{P}(E) \; .
\end{align*}
Using the triangle inequality followed by the Cauchy-Schwarz inequality, we have
\begin{align*}
\langle x_t^*-x_{t,a_t}, \theta^* \rangle &\leq |\langle x_t^*-x_{t,a_t}, \theta^* \rangle| \\
&\leq |\langle x_t^*, \theta^* \rangle| + |\langle x_{t,a_t}, \theta^* \rangle| \\
&\leq \|x_t^*\|\cdot\|\theta^*\| + \|x_{t,a_t}\|\cdot\|\theta^*\| \\
&\leq 2 \; ,
\end{align*}
where the last inequality follows from Assumption~\ref{assum:norm}.
Hence, the cumulative expected regret can be bounded as
\begin{align*}
\mathcal{R}(T) &\leq 2T\cdot \mathbb{P}(E^c)+ \sum_{k=1}^{B}\sum_{t=\mathcal{T}_{k-1}+1}^{\mathcal{T}_k} \mathbb{E}\bigg[\langle x_t^*-x_{t,a_t}, \theta^* \rangle\bigg| E \bigg] \cdot \mathbb{P}(E) \\
&\leq \Ocal(1) + \sum_{k=1}^{B}\sum_{t=\mathcal{T}_{k-1}+1}^{\mathcal{T}_k} \mathbb{E}\bigg[\langle x_t^*-x_{t,a_t}, \theta^* \rangle\bigg| E \bigg] \; ,
\end{align*}
where the $\Ocal(1)$ term follows from the high-probability guarantee $\mathbb{P}(E^c) = \Ocal(1/T)$.
Throughout the remainder of the analysis, we therefore condition on the good event \(E\). Let $\text{Regret}_{\ell}$ denote the cumulative expected regret incurred during interval $\ell$. We analyze the regret separately for the case \(\ell = 1\) and for all subsequent intervals \(\ell \ge 2\).
\paragraph{Case 1: $\ell=1$.} In the first interval, the number of rounds is $\mathcal{T}_1 = \left\lceil \frac{\sqrt{T}}{\log_2\log_2 T}\right\rceil$, and the instantaneous regret is bounded by 2. Therefore,
\[
\text{Regret}_{1} = \sum_{t=1}^{\mathcal{T}_1} \mathbb{E}[\langle x_t^*-x_{t,a_t}, \theta^* \rangle] \leq 2 \mathcal{T}_1 \leq 2 \bigg( \frac{\sqrt{T}}{\log_2\log_2 T} + 1 \bigg) = \Ocal\big(\sqrt{T}\big) \; .
\]
\paragraph{Case 2: $\ell \geq 2$.} For each interval $\ell \geq 2$, the rounds following the arm elimination steps are divided into two phases:
\\
\textbf{Phase 1.} In the first phase of length $\Big\lceil \frac{cT^{1-2^{-\ell}}}{\log_2\log_2T}\Big\rceil$, the algorithm selects the most informative direction with respect to the current Gram matrix. For any round $t$ in these phases, the instantaneous regret satisfies
\begin{align*}
\langle x_t^* - x_{t,a_t}, \theta^* \rangle &= \langle x_t^* - x_{t,a_t}, \theta^* - \hat{\theta}_{\ell-1} \rangle + \langle x_t^* - x_{t,a_t}, \hat{\theta}_{\ell-1} \rangle \\
&\leq \langle x_t^* - x_{t,a_t}, \theta^* - \hat{\theta}_{\ell-1} \rangle + \langle x_{t}^{(\ell-1)} - x_{t,a_t}, \hat{\theta}_{\ell-1} \rangle \\
&\leq |\langle x_t^* , \hat{\theta}_{\ell-1} - \theta^* \rangle| + |\langle x_{t,a_t}, \hat{\theta}_{\ell-1}- \theta^* \rangle| + 2\varepsilon_{t,\ell-1} \\
&\leq 4\varepsilon_{t,\ell-1} \; . \tag{9} \label{eq:regret_upper_bound_4,5}
\end{align*}
The first inequality uses the fact that $x_{t}^{(\ell-1)}$ is optimal with respect to $\hat{\theta}_{\ell-1}$ and that $x_t^*$ belongs to $\mathcal{A}_t^{(\ell-2)}$ by \cref{lem:optimal_arm}. The second inequality follows since $x_{t,a_t} \in \mathcal{A}_{t}^{(\ell-1)}$ by construction of the arm elimination step. The final inequality follows from the definition of the good event $E$ and again from \cref{lem:optimal_arm}, which guarantees $x_t^* \in \mathcal{A}_t^{(\ell-2)}$.  
\\\\
\textbf{Phase 2.} In the last phase, the algorithm selects arms greedily with respect to the estimated parameter, i.e., $x_{t,a_t} \in \arg\max_{x \in \mathcal{A}_t^{(\ell-1)}}\langle x, \hat{\theta}_{\ell-1}\rangle$. For any round $t$ in this phase, the instantaneous regret satisfies
\begin{align*}
\langle x_t^* - x_{t,a_t}, \theta^* \rangle &= \langle x_t^* - x_{t,a_t}, \theta^* - \hat{\theta}_{\ell-1} \rangle + \langle x_t^* - x_{t,a_t}, \hat{\theta}_{\ell-1} \rangle \\
&\leq \langle x_t^* - x_{t,a_t}, \theta^* - \hat{\theta}_{\ell-1} \rangle \\
&\leq |\langle x_t^* , \hat{\theta}_{\ell-1} - \theta^* \rangle| + |\langle x_{t,a_t}, \hat{\theta}_{\ell-1}- \theta^* \rangle| \\
&\leq 2\varepsilon_{t,\ell-1} \; . \tag{10} \label{eq:regret_upper_bound_6}
\end{align*}
The first inequality follows from the fact that $x_{t,a_t}$ is optimal with respect to $\hat{\theta}_{\ell-1}$ and that $x_t^*$ belongs to $\mathcal{A}_t^{(\ell-1)}$ by \cref{lem:optimal_arm}. The final inequality follows directly from the definition of the good event $E$ and from \cref{lem:optimal_arm}, as in Phase 1.
\\\\
Therefore, it suffices to upper bound the quantity $\sum_{t=\mathcal{T}_\ell+1}^{\mathcal{T}_{\ell+1}}\varepsilon_{t,\ell}$ for each $\ell \in [B-1]$, which reduces to bounding the sum  $\sum_{t=\mathcal{T}_\ell+1}^{\mathcal{T}_{\ell+1}} \max_{{y \in \mathcal{A}_{t}^{(\ell-1)}}}\|y\|_{V_\ell^{-1}}$. Since both $V_\ell$ and the arm elimination rule---determined by $\hat{\theta}_1, \dots, \hat{\theta}_{\ell-1}$---are measurable with respect to $\mathcal{F}_{\mathcal{T}_{\ell}}$, they can be treated as fixed quantities conditional on this filtration. Moreover, conditional on 
$\mathcal{F}_{\mathcal{T}_\ell}$ the action sets $\{\mathcal{A}_s\}_{s = \mathcal{T}_\ell+1}^{\mathcal{T}_{\ell+2}}$ are \emph{identically distributed} with common law $\mathcal{D}_\ell$. Hence, by the tower property, for any $t,v \in [\mathcal{T}_\ell+1, \mathcal{T}_{\ell+1}]$ we have
\begin{align*}
    \mathbb{E}\bigg[ \max_{{y \in \mathcal{A}_{t}^{(\ell-1)}}} y^\top V_\ell^{-1}y \bigg] 
    &= \mathbb{E}\bigg[\mathbb{E}\bigg[ \max_{{y \in \mathcal{A}_t^{(\ell-1)}}} y^\top V_\ell^{-1}y \, \bigg| \mathcal{F}_{\mathcal{T}_{\ell}} \bigg] \bigg] \\
    &= \mathbb{E}\bigg[\mathbb{E}\bigg[ \max_{{y \in \mathcal{A}_v^{(\ell-1)}}} y^\top V_\ell^{-1}y \, \bigg| \mathcal{F}_{\mathcal{T}_{\ell}} \bigg] \bigg] \\
    &= 
    \mathbb{E}\bigg[ \max_{{y \in \mathcal{A}_{v}^{(\ell-1)}}} y^\top V_\ell^{-1}y \bigg] \; . \tag{11} \label{eq:maximal_quadratic_id}
\end{align*}
Define $\mathcal{T}_{\ell-1}' \coloneqq \mathcal{T}_{\ell-1} + \Big\lceil \frac{cT^{1-2^{-\ell}}}{\log_2\log_2T} \Big\rceil$ for all $\ell \geq 2$, while for the first interval we set $\mathcal{T}_{0}' \coloneqq \mathcal{T}_{1}$. Now fix any $t \in [\mathcal{T}_\ell+1, \mathcal{T}_{\ell+1}]$ and consider the interval $s \in [\mathcal{T}_{\ell-1}+1, \mathcal{T}_{\ell-1}']$. Then we obtain
\begin{align*}
    \sum_{s=\mathcal{T}_{\ell-1}+1}^{\mathcal{T}_{\ell-1}'} \mathbb{E}\bigg[ \max_{y \in \mathcal{A}_{t}^{(\ell-1)}} y^\top V_\ell^{-1}y \bigg] 
    &\leq  \sum_{s=\mathcal{T}_{\ell-1}+1}^{\mathcal{T}_{\ell-1}'} \mathbb{E} \bigg[ \max_{y \in \mathcal{A}_{t}^{(\ell-1)}} y^\top H_{s-1}^{-1}y \bigg] \\
    &= \sum_{s=\mathcal{T}_{\ell-1}+1}^{\mathcal{T}_{\ell-1}'} \mathbb{E} \bigg[\mathbb{E} \bigg[ \max_{y \in \mathcal{A}_{t}^{(\ell-1)}} y^\top H_{s-1}^{-1}y \, \bigg| \mathcal{F}_{\mathcal{T}_{\ell-1}} \vee \sigma(H_{s-1}) \bigg] \bigg] \\
    &= \sum_{s=\mathcal{T}_{\ell-1}+1}^{\mathcal{T}_{\ell-1}'} \mathbb{E} \bigg[\mathbb{E} \bigg[ \max_{y \in \mathcal{A}_{s}^{(\ell-1)}} y^\top H_{s-1}^{-1}y \, \bigg| \mathcal{F}_{\mathcal{T}_{\ell-1}} \vee \sigma(H_{s-1}) \bigg] \bigg] \\
    &= \sum_{s=\mathcal{T}_{\ell-1}+1}^{\mathcal{T}_{\ell-1}'} \mathbb{E} \bigg[ \max_{y \in \mathcal{A}_{s}^{(\ell-1)}} y^\top H_{s-1}^{-1}y  \bigg]
    \; . \tag{12} \label{eq:shift_uncertainty_id}
\end{align*}
The first inequality follows from monotonicity of the matrices, since $H_{s-1} \preceq V_\ell$ for all $s$ in the interval. The first and last equalities follow from the tower property. The second equality relies on the fact that, conditional on $\mathcal{F}_{\mathcal{T}_{\ell-1}} \vee \sigma(H_{s-1})$, both $H_{s-1}$ and the arm elimination rule (determined by $\hat{\theta}_1, \dots, \hat{\theta}_{\ell-1}$) are fixed. Moreover, conditional on $\mathcal{F}_{\mathcal{T}_{\ell-1}}$, the action set $\mathcal{A}_t$ is independent of $\{\mathcal{A}_v, x_{v,a_v}, r_v\}_{v=\mathcal{T}_{\ell-1}+1}^{\mathcal{T}_\ell}$ for $t \in [\mathcal{T}_{\ell}+1, \mathcal{T}_{\ell+1}]$, and  $\mathcal{A}_s$ is independent of $\{\mathcal{A}_u, x_{u,a_u}, r_u\}_{u=\mathcal{T}_{\ell-1}+1}^{s-1}$ for $s \in [\mathcal{T}_{\ell-1}+1, \mathcal{T}_{\ell}]$. Hence, conditional on $\mathcal{F}_{\mathcal{T}_{\ell-1}} \vee \sigma(H_{s-1})$, both $\mathcal{A}_t$ and $\mathcal{A}_s$ share the same conditional law $\mathcal{D}_{\ell-1}$, which justifies replacing $\mathcal{A}_t$ with $\mathcal{A}_s$ in the inner expectation.
\\\\
In Phase 1, the algorithm proceeds by selecting the most informative direction at each step with respect to the current Gram matrix. For all $\ell \geq 1$, we obtain
\begin{align*}
    \sum_{s=\mathcal{T}_{\ell-1}+1}^{\mathcal{T}_{\ell-1}'} \mathbb{E} \bigg[ \max_{y \in \mathcal{A}_{s}^{(\ell-1)}} y^\top H_{s-1}^{-1}y  \bigg] 
    &= \mathbb{E} \left[ \sum_{s=\mathcal{T}_{\ell-1}+1}^{\mathcal{T}_{\ell-1}'} \, \max_{y \in \mathcal{A}_{s}^{(\ell-1)}} y^\top H_{s-1}^{-1}y  \right] \\
    &= \mathbb{E} \left[ \sum_{s=\mathcal{T}_{\ell-1}+1}^{\mathcal{T}_{\ell-1}'} x_{s,a_s}^\top H_{s-1}^{-1}x_{s,a_s}  \right] \\
    &\leq \mathbb{E} \left[ 2\log \left( \frac{\det(H_{\mathcal{T}_{\ell-1}'})}{\det(\lambda I)} \right) \right] \\
    &\leq \mathbb{E} \left[ 2\log \left( \frac{(\tr(H_{\mathcal{T}_{\ell-1}'})/d)^d}{\lambda^d} \right) \right] \\
    &\leq 2d\log\left(1+ \frac{T}{d\lambda} \right) \\
    &\leq 2d\log(2T)
    \; , \tag{13} \label{eq:elliptical_potential_id}
\end{align*}
where the second equality follows from the arm-selection strategy of Phase 1, and the first inequality follows from \cref{cor:elliptical_potential}. The second inequality replaces the determinant by the bound $\det(H) \leq (\operatorname{tr}(H)/d)^d$, which is a consequence of the AM–GM inequality. The third inequality applies the fact that $\|x\|_2 \leq 1$ for all contexts, which implies $\operatorname{tr}(H_{\mathcal{T}_{\ell-1}'}) \leq d\lambda + T$.
\\\\
Now, we establish an upper bound on the cumulative expected regret. By combining the bounds derived in  (\ref{eq:regret_upper_bound_4,5}) and (\ref{eq:regret_upper_bound_6}), we can bound the cumulative expected regret for interval $\ell$, denoted by $\text{Regret}_\ell$, for any $\ell \geq 2$ as follows
\begin{align*}
    \text{Regret}_\ell &= \sum_{t=\mathcal{T}_{\ell-1}+1}^{\mathcal{T}_\ell} \mathbb{E}[\langle x_t^*-x_{t,a_t}, \theta^* \rangle] \\
    &\leq 4\sum_{t=\mathcal{T}_{\ell-1}+1}^{\mathcal{T}_\ell} \mathbb{E}[\varepsilon_{t,\ell-1}] \\
    &= 4\sum_{t=\mathcal{T}_{\ell-1}+1}^{\mathcal{T}_\ell} \mathbb{E}\left[\underset{y \in \mathcal{A}_{t}^{(\ell-2)}}{\max}\|y\|_{V_{\ell-1}^{-1}} \cdot \left(\beta_{t,\ell-1}^{(1)}\left(\frac{2}{T}\right) \wedge \beta_{t, \ell-1}^{(2)}\left(\frac{2}{T}\right) \right)\right] \\
    &\leq 4\!\!\sum_{t=\mathcal{T}_{\ell-1}+1}^{\mathcal{T}_\ell} \!\! \mathbb{E}\left[\underset{y \in \mathcal{A}_{t}^{(\ell-2)}}{\max}\|y\|_{V_{\ell-1}^{-1}} \right] \!\cdot \!\left(\!\sqrt{2\log\left({K(B-1)T^2}\right)} + \sqrt{\lambda} \bigwedge 2\sqrt{\log\bigg(\frac{2^{6d-5}\pi d(B-1)^2T^2}{15^{d-1}}\bigg)} +2\sqrt{\lambda}\right)\! .
\end{align*}
The last inequality follows from the fact that the size of the candidate arm set satisfies $|\mathcal{A}_t^{(\ell-2)}| \leq |\mathcal{A}_t| = K$ for all $t \in [\mathcal{T}_{\ell-1}+1, \mathcal{T}_\ell]$, allowing us to upper bound the confidence parameter $\beta_{t,\ell-1}^{(1)}$ uniformly over the action set.
\\\\
Using (\ref{eq:maximal_quadratic_id}), (\ref{eq:shift_uncertainty_id}), and (\ref{eq:elliptical_potential_id}), we can bound the summation $\sum_{t=\mathcal{T}_{\ell-1}+1}^{\mathcal{T}_\ell} \mathbb{E}\left[\max_{y \in \mathcal{A}_{t}^{(\ell-2)}}\|y\|_{V_{\ell-1}^{-1}} \right]$ for any $\ell \geq 2$ as
\begin{align*}
    \sum_{t=\mathcal{T}_{\ell-1}+1}^{\mathcal{T}_{\ell}} \mathbb{E}\left[\max_{y \in \mathcal{A}_{t}^{(\ell-2)}}\|y\|_{V_{\ell-1}^{-1}} \right]
    &\leq \sum_{t=\mathcal{T}_{\ell-1}+1}^{\mathcal{T}_\ell} \sqrt{ \mathbb{E}\left[\max_{y \in \mathcal{A}_{t}^{(\ell-2)}}y^\top V_{\ell-1}^{-1}y \right]} \\ 
    &= (\mathcal{T}_\ell - \mathcal{T}_{\ell-1}) \cdot \sqrt{\mathbb{E}\left[\max_{y \in \mathcal{A}_{t}^{(\ell-2)}}y^\top V_{\ell-1}^{-1}y \right]} \\
    &\leq \frac{\mathcal{T}_\ell - \mathcal{T}_{\ell-1}}{\sqrt{\mathcal{T}_{\ell-2}' - \mathcal{T}_{\ell-2}}} \cdot \sqrt{\sum_{s=\mathcal{T}_{\ell-2}+1}^{\mathcal{T}_{\ell-2}'}  \mathbb{E} \bigg[ \max_{y \in \mathcal{A}_{s}^{(\ell-2)}} y^\top H_{s-1}^{-1}y  \bigg]} \\
    &\leq \frac{\mathcal{T}_\ell - \mathcal{T}_{\ell-1}}{\sqrt{\mathcal{T}'_{\ell-2} - \mathcal{T}_{\ell-2}}} \cdot \sqrt{2d\log\left(2T\right)} \\
    &\leq \frac{\frac{T^{1-2^{-\ell}}}{\log_2\log_2T}+2}{\sqrt{\frac{cT^{1-2^{1-\ell}}}{\log_2\log_2T}}} \cdot \sqrt{2d\log\left(2T\right)} \\
    &= \left(\sqrt{\frac{T}{c\log_2\log_2T}} + 2\sqrt{\frac{\log_2\log_2T}{cT^{1-2^{1-\ell}}}}\right) \cdot \sqrt{2d\log\left(2T\right)}\\
    &\leq \left(\sqrt{\frac{T}{c\log_2\log_2T}} + \frac{2}{\sqrt{c}}\right) \cdot 2\sqrt{d\log T} \\ 
    &\leq c' \cdot \sqrt{\frac{dT\log T}{\log_2\log_2T}} \; ,
\end{align*}
where $t$ is arbitrary in the interval $[\mathcal{T}_{\ell-1}+1, \mathcal{T}_\ell]$, and we define  $c' \coloneqq \frac{6}{\sqrt{c}}$. The first inequality applies Jensen’s inequality to move the square root outside the expectation. The first equality follows from (\ref{eq:maximal_quadratic_id}), while the second inequality uses the bound in (\ref{eq:shift_uncertainty_id}). The third inequality follows from (\ref{eq:elliptical_potential_id}). The fifth inequality follows from the facts that $\log_2\log_2T \leq T^{1-2^{1-\ell}}$ and $\log(2T) \leq 2\log T$ for all $T \geq 2$. The final inequality uses the bounds $1 \leq \frac{T}{\log_2\log_2T}$ for all $T \geq 1$.
\\\\
We now derive the cumulative expected regret after the first interval. Building on the previously established results, the total regret incurred from intervals $\ell = 2$ to $B$ can be bounded as
\begin{align*}
    \sum_{\ell=2}^{B} \text{Regret}_{\ell} &\leq  \sum_{\ell=2}^{B} 4c'\sqrt{\frac{dT\log T}{\log_2\log_2T}} \cdot \left(\sqrt{2\log\left({K(B-1)T^2}\right)} + \sqrt{\lambda} \bigwedge 2\sqrt{\log\bigg(\frac{2^{6d-5}\pi d(B-1)^2T^2}{15^{d-1}}\bigg)} +2\sqrt{\lambda}\right) \\
    &= \Ocal\left(\sqrt{dT\log T \log\log T} \cdot \left( \sqrt{\log(KT)}  \wedge \sqrt{d\log\left(\frac{64}{15}\right) + \log(dT)} \, \right) \right) \\
    &= \Ocal \left(\sqrt{dT\log T \log\log T} \cdot \left( \sqrt{\log(KT)} \wedge \sqrt{d + \log T} \right) \right) \; .
\end{align*}
Here, the first equality follows from substituting $B = \Ocal(\log\log T)$ and $\lambda = \Ocal(1)$, while the second equality holds because the $\log(d)$ term is dominated by $d$.
\\\\
Thus, the total worst-case regret is bounded as
\begin{align*}
   \mathcal{R}(T) &\leq \sum_{k=1}^{B}\sum_{t=\mathcal{T}_{k-1}+1}^{\mathcal{T}_k} \mathbb{E}\bigg[\langle x_t^*-x_{t,a_t}, \theta^* \rangle\bigg| E \bigg] \\
   &= \text{Regret}_1 + \sum_{k=2}^{B} \text{Regret}_k \\
   &= \Ocal\big(\sqrt{T}\big) + \Ocal\left(\sqrt{dT\log T \log\log T} \cdot \left( \sqrt{\log(KT)} \wedge \sqrt{d + \log T}  \right) \right) \\
   &= \Ocal\left(\sqrt{dT} \left( \sqrt{\log(KT)} \wedge \sqrt{d + \log T}  \right) \sqrt{\log T \log\log T} \right) \; .
\end{align*}
Therefore, the worst-case regret for the algorithm is given by
\[
\mathcal{R}(T) = \Ocal\left(\sqrt{dT} \left( \sqrt{\log(KT)} \wedge \sqrt{d + \log T}  \right) \sqrt{\log T \log\log T} \right) = \tilde{\Ocal}\left(\sqrt{dT\log K} \wedge d\sqrt{T}\right) \; .
\]
\end{proof}

\clearpage
\section{Proof of Theorem 3}\label{sec:GLM_theorem_proof}

\begin{algorithm}[tb]
	\caption{\texttt{BGLE}}
        \label{bgle}
	\begin{algorithmic}[1]
    \Statex {\bfseries Input:} Horizon $T$;
regularization parameter $\lambda>0$;
known reward-support bound $R>0$;
known parameter-norm bound $S>0$;
within-interval allocation rate $c\in(0,1)$;
 
 \Statex \hrulefill

 \State \textbf{Set static schedule:} $\mathcal{T}_\ell
\leftarrow
\ell
\Big\lceil
\frac{T^{1/3}}{\log_2\log_2T}
\Big\rceil$
for $\ell\leq3$;\\
$\mathcal{T}_\ell
\leftarrow
\Bigg(
\mathcal{T}_{\ell-1}
+
\Bigg\lceil
\frac{
T^{1-\frac{1}{3\cdot2^{\ell-4}}}
}{
\log_2\log_2T
}
\Bigg\rceil
\Bigg)\wedge T$
for $\ell\geq4$;\\
$B\leftarrow
\min\{\ell\geq1:\mathcal{T}_\ell=T\}$;
 
 \State \textbf{Initialize:} $\lambda \gets R^2(d+\log T)$, $H_0 \gets  \lambda I$;

 \Statex \textbf{Auxiliary quantities:}

\Statex
$\displaystyle
\alpha_{t,k}(\lambda)
\coloneqq
\begin{cases}
\exp(-2RS), & k=1,\\[2pt]
\exp\!\left(
-R\left(
2S\wedge
\|x_{t,a_t}\|_{V_k^{-1}}\cdot\left(24RS\left(
\sqrt{d+\log T}
+
\frac{R(d+\log T)}{\sqrt{\lambda}}
\right)
+
2S\sqrt{\lambda}\right)
\right)
\right), & k\geq2;
\end{cases}
$

\Statex
$\displaystyle
\varepsilon'_{t,k}(\lambda)
\coloneqq
\max_{y\in\mathcal{A}_t^{(k-1)}}
\|y\|_{V_k^{-1}}\,\cdot
\left(24RS\left(
\sqrt{d+\log T}
+
\frac{R(d+\log T)}{\sqrt{\lambda}}
\right)
+
2S\sqrt{\lambda}\right),
\qquad k\geq2.
$

 \For{$t \gets 1,2,\dots,\mathcal{T}_1$}

 \State Pull arm $x_{t,a_t} \in \arg\max_{x \in \mathcal{A}_t}\|x\|_{H_{t-1}^{-1}}$;

 \State $H_t \gets H_{t-1} + x_{t,a_t}x_{t,a_t}^\top$;

 \EndFor

 \State $V_1 \gets H_{\mathcal{T}_1}$, $\hat{\theta}_1 \gets \arg\min_{\theta} \sum_{t=1}^{\mathcal{T}_1}\ell_{t}(\theta)$, $H_{\mathcal{T}_1} \gets \lambda I$;
 
 \For{$\ell \gets 2,\dots,B$}
 
 \For{$t \gets \mathcal{T}_{\ell-1}+1,\dots,\mathcal{T}_{\ell}$}

 \If{$\ell \geq 3$}
 
 \For{$k \gets 2,\dots,\ell-1 $}

 \State $x_t^{(k)} \gets \arg\max_{x \in \mathcal{A}_t^{(k-1)}}\langle x, \hat{\theta}_{k}\rangle$;

 \State $\mathcal{A}_t^{(k)} \gets \left\{x \in \mathcal{A}_t^{(k-1)} \,  \bigg| \, \langle \hat{\theta}_k, x_t^{(k)} - x \rangle \leq 2\varepsilon_{t,k}'(\lambda) \right\}$;
 
 \EndFor\EndIf
 


 \If{$t \leq \mathcal{T}_{\ell-1} + \big\lceil cT^{1-2^{{((4-\ell) \wedge 1)}}/3}/\log_2\log_2T \big\rceil$}

 \State Pull arm $x_{t,a_t} \in \arg\max_{x \in \mathcal{A}_t^{(\ell-1)}}\|x\|_{H_{t-1}^{-1}}$;

 \Else

 \State Pull arm $x_{t,a_t} \in \arg\max_{x \in \mathcal{A}_t^{(\ell-1)}}\langle x, \hat{\theta}_{\ell-1}\rangle$;

 \EndIf




 \State $H_t \gets H_{t-1} + \alpha_{t,\ell-1}(\lambda) \dot\mu(\langle x_{t,a_t}, \hat{\theta}_{\ell-1} \rangle)x_{t,a_t}x_{t,a_t}^\top$;
 
 \EndFor
 
 \State $V_\ell \gets H_{\mathcal{T}_\ell}$, $\hat{\theta}_{\ell} \gets \arg\min_{\theta} \sum_{t=\mathcal{T}_{\ell-1}+1}^{\mathcal{T}_\ell}\ell_{t}(\theta)$, $H_{\mathcal{T}_\ell} \gets \lambda I$;
 
 \EndFor
\end{algorithmic}
\end{algorithm}

\begin{table*}[t]
\centering
\begin{threeparttable}  
\scriptsize
\caption{
Worst-case regret, \emph{parameter-update} complexity, and time complexity comparison for generalized linear contextual bandits.
\textbf{\texttt{BGLE} uses only $\Ocal(\log\log T)$ parameter updates and has the lowest displayed time complexity among the listed methods.}
}
\label{table:GLM_comparison}
\begin{tabular}{lllll}
\toprule
\textbf{Paper} & \textbf{Worst-Case Regret} & \makecell[l]{\textbf{Parameter} \\ \textbf{Updates}} & \makecell[l]{\textbf{Context} \\ \textbf{Adaptive}} &\textbf{Time Complexity} \\
\midrule
\multirow{2}{*}{\citet{sawarni2024generalized}} 
  & $\Ocal\Big((RSd(\sqrt{d/\hat\kappa} \wedge \sqrt{R_{\dot\mu}\log d})\sqrt{\log T}\log\log T +R)\sqrt{T}\Big)$ & \multirow{2}{*}{$\Ocal(\log \log T)$} & \multirow{2}{*}{\texttt{No}} & $\Ocal(Kd^4T\log T + \mathcal{C}_{\mathrm{opt}}^{\mathrm{tot}})$ \\
  & $+ \, \Ocal\Big({(\kappa R_{\dot\mu}R^5S^2)}^{1/3}e^{2RS}d^2(\log T)^{2/3}\log\log T \cdot T^{\frac{1}{3}}\Big)$ & & & \makecell[l]{$+\, \Ocal\big(Kd^5(\kappa R_{\dot\mu})^{1/3}e^{2RS}$ \\ $(R^2S^2T\log^2 T)^{1/3}\big) $}\\
\midrule
\multirow{2}{*}{\cref{bgle}} 
  & $\Ocal \Big(RS\sqrt{d(d+\log T)\log T\log\log T / \hat\kappa} \cdot\sqrt{T}\Big)$ & \multirow{2}{*}{$\Ocal(\log \log T)$}& \multirow{2}{*}{\texttt{Yes}} & $\Ocal(Kd^2T\log\log T)$ \\
  & $+\, \Ocal \Big((R^2Se^{8RS}d(d+\log T)\log T\log\log T + \frac{R}{\log\log T})T^{\frac{1}{3}}\Big)$ & & & $+\,\Ocal(\mathcal{C}_{\mathrm{opt}}^{\mathrm{tot}})$\protect\footnotemark
  \\
\bottomrule
\end{tabular}
\end{threeparttable}
\end{table*}

\paragraph{Problem Setting: Generalized Linear Contextual Bandits with Batched Feedback} \label{problem_setting_GLB}
We next consider generalized linear contextual bandits, where rewards follow a one-parameter exponential family distribution.
Given an arm \(x\in\RR^d\) and an unknown parameter \(\theta^*\in\RR^d\), the reward \(r\) has density
$
p(r\,|\,x;\theta^*)
= \exp\!\bigl(r\langle x,\theta^*\rangle - m(\langle x,\theta^*\rangle) + h(r)\bigr)\,\nu(\mathrm{d}r),
$
with log-partition function \(m\), base measure \(\nu\), and link function \(\mu(z)\coloneqq m'(z)\).
We adopt the following standard assumptions.

\begin{assumption} \label{assum:GLM_norm}
\(\|x\|_2 \leq 1\) for all $x \in \mathcal{A}_t$, and $\|\theta^*\|_2 \leq S$ for a known constant $S>0$.
\end{assumption}

\begin{assumption} \label{assum:GLM_diff}
The log-partition function $m$ is convex and three times differentiable.
Equivalently, $\dot\mu \coloneqq m'' \geq 0$ and $m'''$ exists.
\end{assumption}

At each round $t \in [T]$, the learner observes an arm set
$\mathcal{A}_t = \{x_{t,1}, \ldots, x_{t,K}\} \subseteq \RR^d$
and selects $x_{t,a_t} \in \mathcal{A}_t$.
The reward $r_t$ is drawn from $p(r\,|\,x_{t,a_t};\theta^*)$ with natural parameter $\langle x_{t,a_t}, \theta^* \rangle$, and satisfies
$\mathbb{E}[r \,|\,x;\theta^*] = \mu(\langle x,\theta^* \rangle)$.
Performance is measured by the \emph{cumulative expected regret}
\[
\mathcal R(T)
= \mathbb{E}\!\left[\sum_{t=1}^T \bigl(\mu(\langle x_t^*,\theta^*\rangle)-\mu(\langle x_{t,a_t},\theta^*\rangle)\bigr)\right],
\]
where \(x_t^*\in\argmax_{x\in\mathcal A_t}\mu(\langle x,\theta^*\rangle)\) is an optimal arm.
Following \citet{sawarni2024generalized}, we assume i.i.d.\
contexts drawn from an unknown distribution $\mathcal D$
and rewards almost surely supported on $[0,R]$ for a known
constant $R>0$. This implies the self-concordance condition
$|\ddot\mu(z)|\leq R\dot\mu(z)$ for all $z\in\mathbb R$.
This property is crucial for our analysis.

We begin by assuming that the MLE estimator $\hat\theta$, obtained by minimizing the log-loss objective, always satisfies the boundedness condition $\|\hat\theta\|_2 \leq S$. 
If this condition does not hold, one may instead apply the non-convex projection technique of \citet{sawarni2024generalized}. 
The projected estimator preserves the same guarantees established in \citet{sawarni2024generalized}, up to a multiplicative factor of $2$. 
Therefore, the assumption $\|\hat\theta\|_2 \leq S$ can be made without loss of generality.

\begin{lemma} \label{lem:GLM_exponential_convex}
    For any $x \in [0,C]$, the following inequality holds
    \[
    e^x \leq \frac{x(e^C-1)}{C} + 1 \; .
    \]
\end{lemma}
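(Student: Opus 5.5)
The claim is that on $[0,C]$ the exponential lies below its chord through $(0,1)$ and $(C,e^C)$. I would prove it directly from the convexity of $x\mapsto e^x$. Implicitly $C>0$; for $C=0$ the right-hand side is read as its limit.

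Fix $x\in[0,C]$ and set $s\coloneqq x/C\in[0,1]$, so that $x=(1-s)\cdot 0+s\cdot C$ is a convex combination of the endpoints $0$ and $C$. Convexity of the exponential gives
\[
e^x=e^{(1-s)\cdot 0+sC}\le (1-s)e^0+s e^C = 1+s\,(e^C-1)=\frac{x(e^C-1)}{C}+1 ,
\]
which is exactly the stated inequality.

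Convexity itself follows because the second derivative of $e^x$ is $e^x>0$. Equivalently, one can argue with $g(x)\coloneqq \frac{x(e^C-1)}{C}+1-e^x$: it is concave and vanishes at $x=0$ and $x=C$, so a concave function is at least its chord, which here is identically zero, and hence $g\ge 0$ on $[0,C]$.

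The argument has no real obstacle. The only point to watch is that $C>0$, so the division is legitimate; this holds in the intended application, where $C$ will be a quantity such as $2RS>0$.
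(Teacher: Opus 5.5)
Your proof is correct and is the same as the paper's: the paper also applies the convexity inequality for $e^t$ with endpoints $0$ and $C$ and weight $\alpha = x/C\in[0,1]$, which gives exactly the chord bound. Your remark that $C>0$ is needed for the division is a fair clarification; the paper assumes this implicitly.
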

\begin{proof}
    Apply the definition of convexity, $f\big((1-\alpha)a+\alpha b\big)\le (1-\alpha)f(a)+\alpha f(b)$, to $f(t)=e^t$ with $a=0$, $b=C$, and $\alpha=x/C\in[0,1]$. This gives
\[
e^{(1-\alpha)0+\alpha C} \le (1-\alpha)e^0+\alpha e^C
\quad\Rightarrow\quad
e^x \le 1 + \tfrac{x}{C}(e^C-1) \; ,
\]
which is the claim.
\end{proof}

\begin{lemma} \label{lem:GLM_concordance}
    For an exponential family distribution with log-partition function $m(\cdot)$, let $\mu(z) \coloneqq m'(z)$. Then, for all $x_1,x_2 \in \mathbb{R}$, we have
    \[
    e^{-R|x_2-x_1|}\dot\mu(x_2) \leq \dot\mu(x_1) \leq e^{R|x_2-x_1|}\dot\mu(x_2) \; .
    \]
\end{lemma}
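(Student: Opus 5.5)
The plan is to derive the inequality from the self-concordance property stated just before the lemma, $|\ddot\mu(z)|\le R\,\dot\mu(z)$ for all $z\in\mathbb{R}$, which follows from rewards being supported on $[0,R]$. The natural tool is to integrate the logarithmic derivative of $\dot\mu$ along the segment between $x_1$ and $x_2$.

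First I will check that $\dot\mu(z)>0$ everywhere, so that $\log\dot\mu$ is well defined and differentiable. Here $\dot\mu=m''$ is the variance of the exponential-family distribution with natural parameter $z$, and it is strictly positive for any nondegenerate family. Even if one only assumes $\dot\mu\ge 0$, the argument still goes through. The differential inequality $|\ddot\mu|\le R\dot\mu$ together with Grönwall forces $\dot\mu$ to vanish identically once it vanishes at a single point. In that case both sides of the claim are $0$ and it holds trivially.

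Next, assuming positivity, set $g(z)\coloneqq\log\dot\mu(z)$, which is differentiable by Assumption~\ref{assum:GLM_diff}. Then
\[
g'(z)=\frac{\ddot\mu(z)}{\dot\mu(z)}\in[-R,R].
\]
By the mean value theorem, or equivalently the fundamental theorem of calculus on the segment between $x_1$ and $x_2$,
\[
|g(x_1)-g(x_2)|\le R\,|x_1-x_2|.
\]

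Finally, exponentiating $-R|x_2-x_1|\le g(x_1)-g(x_2)\le R|x_2-x_1|$ and multiplying through by $\dot\mu(x_2)$ gives
\[
e^{-R|x_2-x_1|}\,\dot\mu(x_2)\le\dot\mu(x_1)\le e^{R|x_2-x_1|}\,\dot\mu(x_2),
\]
which is exactly the statement.

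The only step needing care is justifying $|\ddot\mu|\le R\dot\mu$ from the support condition, if it is not simply taken from the text. I would do this using the cumulant identity $\ddot\mu(z)=\mathbb{E}_z[(r-\mu(z))^3]$. Since $r\in[0,R]$ and $\mu(z)\in[0,R]$, we have $|r-\mu(z)|\le R$, and therefore
\[
|\ddot\mu(z)|\le R\,\mathbb{E}_z[(r-\mu(z))^2]=R\,\dot\mu(z).
\]
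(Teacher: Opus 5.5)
Your proof is correct and is essentially the paper's argument. The paper shows from $|\ddot\mu|\le R\dot\mu$ that $\dot\mu(x)e^{Rx}$ is non-decreasing and $\dot\mu(x)e^{-Rx}$ is non-increasing. That is the same content as your bound $|(\log\dot\mu)'|\le R$, but it never divides by $\dot\mu$, so it needs neither your positivity check nor the separate Grönwall case; your Grönwall step is in fact exactly that monotonicity argument. Your cumulant derivation $|\ddot\mu(z)|=|\mathbb{E}_z[(r-\mu(z))^3]|\le R\dot\mu(z)$ is a correct addition, since the paper simply asserts self-concordance from the $[0,R]$ support assumption.
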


\begin{proof}
    Without loss of generality, assume that $x_2 \geq x_1$. Define $h_1(x) \coloneqq \dot\mu(x)e^{Rx}$ and $h_2(x) \coloneqq \dot\mu(x)e^{-Rx}$. Differentiating these functions yields
    $h_1'(x) = (\ddot\mu(x)+R\dot\mu(x))e^{Rx}$ and $h_2'(x) = (\ddot\mu(x)-R\dot\mu(x))e^{-Rx}$. 
    By the self-concordance property, we have $h_1'(x) \geq 0$ and $h_2'(x) \leq 0$, which implies that $h_1(x)$ is non-decreasing and $h_2(x)$ is non-increasing. Consequently, $h_1(x_2) \geq h_1(x_1)$ and $h_2(x_2) \leq h_2(x_1)$, which together establish the desired inequality.
\end{proof}

\begin{lemma} \label{lem:GLM_ellipsoid} \citep{sawarni2024generalized}
    For each interval $\ell\geq 1$, let $r_{\mathcal{T}_{\ell-1}+1},\dots,r_{\mathcal{T}_\ell}$ denote independent random variables drawn from the canonical exponential family such that $\mathbb{E}[r_s] = \mu(\langle x_{s,a_s}, \theta^* \rangle)$ for some $\theta^* \in \mathbb{R}^d$. Define the maximum likelihood estimator by $\hat{\theta}_\ell = \arg\min_{\theta} \sum_{t=\mathcal{T}_{\ell-1}+1}^{\mathcal{T}_\ell}\ell_{t}(\theta)$, and let
    $
    V_{\ell}^* \coloneqq \lambda I + \sum_{s=\mathcal{T}_{\ell-1}+1}^{\mathcal{T}_\ell} \dot\mu(\langle x_{s,a_s}, \theta^* \rangle) x_{s,a_s}x_{s,a_s}^\top.
    $
    Then, with probability at least $1 - \frac{1}{T^2}$, the following inequality holds
    \[
    \|\hat{\theta}_\ell - \theta^*\|_{V_{\ell}^*} \leq 24RS \left(\sqrt{d+\log T} + \frac{R(d+\log T)}{\sqrt{\lambda}} \right) + 2S\sqrt{\lambda} \triangleq \beta(\lambda) \; .
    \]
\end{lemma}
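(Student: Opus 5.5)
We would prove this confidence bound for the regularized MLE by the standard self-concordant route of \citet{sawarni2024generalized}. Throughout, $\hat\theta_\ell$ is read as the minimizer of the $\lambda$-regularized log-loss
\[
L_\ell(\theta)=\sum_{s}\ell_s(\theta)+\tfrac{\lambda}{2}\|\theta\|_2^2 ,
\]
or of its projected variant, which is how $V_\ell^*$ acquires its $\lambda I$ term and how the additive $2S\sqrt{\lambda}$ arises.

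\textbf{Step 1: mean-value reduction.} Write the first-order optimality condition $\nabla L_\ell(\hat\theta_\ell)=0$ and compare it with $\nabla L_\ell(\theta^*)$. By the mean value theorem, $\nabla L_\ell(\hat\theta_\ell)-\nabla L_\ell(\theta^*)=G_\ell(\hat\theta_\ell-\theta^*)$, where
\[
G_\ell=\lambda I+\sum_s \alpha_s\, x_{s,a_s}x_{s,a_s}^\top,
\qquad
\alpha_s=\int_0^1\dot\mu\big(\langle x_{s,a_s},\theta^*+v(\hat\theta_\ell-\theta^*)\rangle\big)\,dv .
\]
Hence
\[
G_\ell(\hat\theta_\ell-\theta^*)=\sum_s \eta_s x_{s,a_s}-\lambda\theta^*,
\qquad \eta_s=r_s-\mu(\langle x_{s,a_s},\theta^*\rangle).
\]

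\textbf{Step 2: compare $G_\ell$ with $V_\ell^*$.} Use self-concordance, i.e.\ the integrated form of \cref{lem:GLM_concordance}: $\alpha_s\ge \dot\mu(\langle x_{s,a_s},\theta^*\rangle)/(1+R|\langle x_{s,a_s},\hat\theta_\ell-\theta^*\rangle|)$. This gives $G_\ell\succeq V_\ell^*/(1+2RS)$, using $\|\hat\theta_\ell\|,\|\theta^*\|\le S$. Combined with Step 1, this yields
\[
\|\hat\theta_\ell-\theta^*\|_{V_\ell^*}\lesssim \sqrt{1+2RS}\,\Big(\big\|\textstyle\sum_s\eta_s x_{s,a_s}\big\|_{(V_\ell^*)^{-1}}+\sqrt\lambda S\Big).
\]
The crude factor $\sqrt{1+2RS}$ is too large to reproduce the stated $RS$ scaling, so the sharper argument of \citet{sawarni2024generalized} is needed. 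They work with $\|g(\hat\theta_\ell)-g(\theta^*)\|_{H^{-1}}$, where $g(\theta)=\sum_s\mu(\langle x_{s,a_s},\theta\rangle)x_{s,a_s}+\lambda\theta$. They then use a quadratic inequality of the form $\|\Delta\|^2_{V^*}\le (1+R\|\Delta\|)\cdot\text{noise}$ and solve for $\|\Delta\|_{V^*}$. This is what produces the two noise terms $\sqrt{d+\log T}$ and $R(d+\log T)/\sqrt\lambda$, the latter arising from the $R\|\Delta\|_2\le R\|\Delta\|_{V^*}/\sqrt\lambda$ piece.

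\textbf{Step 3: noise bound.} Bound $\|\sum_s\eta_s x_{s,a_s}\|_{(V_\ell^*)^{-1}}$ with probability $1-T^{-2}$. The rewards lie in $[0,R]$ and have variance $\dot\mu(\langle x_{s,a_s},\theta^*\rangle)$, so a Bernstein-type vector martingale inequality applies; for a fixed-design interval we could instead use a covering argument over the unit ball with $\epsilon=1/2$, giving $e^{O(d)}$ directions. Both routes give a bound of order $\sqrt{d+\log T}+R(d+\log T)/\sqrt\lambda$. The variance weighting is what matches $V_\ell^*$, and the range term carries the $R/\sqrt\lambda$ factor. Collecting constants yields $24RS(\cdot)+2S\sqrt\lambda$.

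\textbf{Main obstacle.} The key difficulty is decoupling the data-dependent Hessian $G_\ell$ from $V_\ell^*$ without paying $e^{RS}$. I would first try the quadratic self-bounding inequality, with the a priori $\|\Delta\|\le 2S$ closing the loop. Since the lemma is quoted from \citet{sawarni2024generalized}, one may also simply invoke their Theorem/Lemma with the confidence level set to $1/T^2$. What must then be checked is that their proof is per-interval: the estimator uses only that interval's data, and within it the design is predictable with respect to past rewards. In our algorithm rewards within an interval are not even used, so the design is independent of the interval's noise given $\mathcal F_{\mathcal T_{\ell-1}}$, and the fixed-design argument applies verbatim.
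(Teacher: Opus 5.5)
The paper gives no proof of this lemma; it imports it from \citet{sawarni2024generalized}. Your fallback, which applies their bound interval by interval at confidence level $T^{-2}$, is therefore exactly the paper's treatment. Your remark that the design inside an interval is built without that interval's rewards supplies the fixed-design hypothesis the citation uses silently: conditionally on $\mathcal F_{\mathcal T_{\ell-1}}$ and the realized design, the rewards are independent with means $\mu(\langle x_{s,a_s},\theta^*\rangle)$. That part is sound and worth keeping.

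Your self-contained sketch, however, contains three concrete errors, and the ``main obstacle'' it ends on is a misdiagnosis.

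(i) You replace the estimator. The lemma and \texttt{BGLE} use the \emph{unregularized} MLE under the standing assumption $\|\hat\theta_\ell\|_2\le S$. The $\lambda I$ in $V_\ell^*$ is only an analysis device, and $2S\sqrt\lambda$ comes from $\lambda\|\Delta\|_2^2\le 4\lambda S^2$ with $\Delta=\hat\theta_\ell-\theta^*$. With a penalty, the cross term $\lambda\Delta^\top\theta^*$ gets multiplied by the self-concordance factor, and the clean $2S\sqrt\lambda$ is lost.

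(ii) The factor in Step 2 is $1+2RS$, not $\sqrt{1+2RS}$. You pay $\sqrt{1+2RS}$ once to pass from $\|\Delta\|_{V_\ell^*}$ to $\|\Delta\|_{G_\ell}$, and again to pass from $\|\xi\|_{G_\ell^{-1}}$ back to $\|\xi\|_{(V_\ell^*)^{-1}}$. The second conversion is forced, because $G_\ell$ depends on $\hat\theta_\ell$, so $\|\xi\|_{G_\ell^{-1}}$ cannot be concentrated directly. With the correct factor, the crude route is exactly what produces the coefficient linear in $RS$. Your integrated bound $\alpha_s\ge\dot\mu_s/(1+2RS)$ already avoids $e^{RS}$, so no sharper argument is needed; just apply it to the data part only. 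Write $\xi=\sum_s\eta_sx_{s,a_s}$ and $\dot\mu_s=\dot\mu(\langle x_{s,a_s},\theta^*\rangle)$, and note that the first-order condition of the unregularized MLE gives $\sum_s\alpha_s\langle x_{s,a_s},\Delta\rangle^2=\Delta^\top\xi$ for your mean-value coefficients $\alpha_s$. Then
\begin{align*}
\|\Delta\|_{V_\ell^*}^2
&=\sum_s\dot\mu_s\langle x_{s,a_s},\Delta\rangle^2+\lambda\|\Delta\|_2^2
\le(1+2RS)\,\Delta^\top\xi+4\lambda S^2\\
&\le(1+2RS)\,\|\Delta\|_{V_\ell^*}\,\|\xi\|_{(V_\ell^*)^{-1}}+4\lambda S^2,
\end{align*}
hence $\|\Delta\|_{V_\ell^*}\le(1+2RS)\|\xi\|_{(V_\ell^*)^{-1}}+2S\sqrt\lambda$.

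(iii) The term $R(d+\log T)/\sqrt\lambda$ is the range term of Bernstein in your Step 3, not a self-bounding $R\|\Delta\|_2/\sqrt\lambda$ piece. Each summand of $(V_\ell^*)^{-1/2}\xi$ has norm at most $R/\sqrt\lambda$, while the variance proxy is at most $1$. A $\tfrac{1}{2}$-net of size $5^d$ plus Bernstein gives $\|\xi\|_{(V_\ell^*)^{-1}}\le 4\sqrt{d+\log T}+\tfrac{8}{3}R(d+\log T)/\sqrt\lambda$ with probability at least $1-T^{-2}$. Then $1+2RS\le 3RS$, valid once $RS\ge 1$, yields the stated $\beta(\lambda)$.
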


\footnotetext{$\mathcal{C}_{\mathrm{opt}}^{\mathrm{tot}}$ denotes the total oracle cost of solving the log-loss minimization at interval boundaries. 
With $B$ intervals, $\mathcal{C}_{\mathrm{opt}}^{\mathrm{tot}}=\sum_{\ell=1}^{B}\mathcal{C}_{\mathrm{opt}}(\mathcal{T}_{\ell} - \mathcal{T}_{\ell-1},d)$, where
\(\mathcal{C}_{\mathrm{opt}}(n,d)\) is the cost of computing the unconstrained MLE from $n$ samples in $d$ dimensions.}

\begin{lemma} \label{lem:GLM_empirical_matrix}
For any interval $\ell \geq 2$, define $V_{\ell}^* \coloneqq \lambda I + \sum_{s=\mathcal{T}_{\ell-1}+1}^{\mathcal{T}_\ell} \dot\mu(\langle x_{s,a_s}, \theta^* \rangle) x_{s,a_s}x_{s,a_s}^\top$ and $V_\ell \coloneqq \lambda I + \sum_{s=\mathcal{T}_{\ell-1}+1}^{\mathcal{T}_\ell} \alpha_{s,\ell-1}(\lambda) \dot\mu(\langle x_{s,a_s}, \hat{\theta}_{\ell-1} \rangle) x_{s,a_s}x_{s,a_s}^\top$. 
Then, for every $\ell\geq 2$, with probability at least $1-\frac{1}{T^2}$, the following matrix inequality holds
\[
    V_\ell \preceq V_\ell^* \; .
\]
\end{lemma}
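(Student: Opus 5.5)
The plan is to compare the two Gram matrices term by term: since both share the regularizer $\lambda I$, it suffices to show that for every round $s\in[\mathcal T_{\ell-1}+1,\mathcal T_\ell]$,
\[
\alpha_{s,\ell-1}(\lambda)\,\dot\mu(\langle x_{s,a_s},\hat\theta_{\ell-1}\rangle)\le \dot\mu(\langle x_{s,a_s},\theta^*\rangle).
\]
Summing the rank-one terms $x_{s,a_s}x_{s,a_s}^\top\succeq 0$ with these scalar inequalities then gives $V_\ell\preceq V_\ell^*$. The scalar comparison follows from \cref{lem:GLM_concordance} with $x_1=\langle x_{s,a_s},\theta^*\rangle$ and $x_2=\langle x_{s,a_s},\hat\theta_{\ell-1}\rangle$:
\[
\dot\mu(\langle x_{s,a_s},\theta^*\rangle)\ge e^{-R|\langle x_{s,a_s},\hat\theta_{\ell-1}-\theta^*\rangle|}\,\dot\mu(\langle x_{s,a_s},\hat\theta_{\ell-1}\rangle).
\]
So the whole task reduces to showing $\alpha_{s,\ell-1}(\lambda)\le e^{-R|\langle x_{s,a_s},\hat\theta_{\ell-1}-\theta^*\rangle|}$, i.e.\ to upper bounding $|\langle x_{s,a_s},\hat\theta_{\ell-1}-\theta^*\rangle|$ by the exponent's argument.

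For $\ell=2$, so $k=\ell-1=1$, the bound is deterministic. By Cauchy--Schwarz, $\|x_{s,a_s}\|_2\le 1$, $\|\theta^*\|_2\le S$ and the standing assumption $\|\hat\theta_1\|_2\le S$, we get $|\langle x_{s,a_s},\hat\theta_1-\theta^*\rangle|\le 2S$. Hence $e^{-2RS}=\alpha_{s,1}$ works, with probability one.

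For $\ell\ge 3$, so $k=\ell-1\ge 2$, we need the minimum of two bounds. The $2S$ bound holds as above. For the other, write
\[
|\langle x,\hat\theta_k-\theta^*\rangle|\le \|x\|_{(V_k^*)^{-1}}\|\hat\theta_k-\theta^*\|_{V_k^*}\le \|x\|_{(V_k^*)^{-1}}\,\beta(\lambda),
\]
where the last step uses \cref{lem:GLM_ellipsoid} with probability $1-1/T^2$. To replace $V_k^*$ by the algorithm's $V_k$, use that $V_k\preceq V_k^*$ implies $(V_k^*)^{-1}\preceq V_k^{-1}$, so $\|x\|_{(V_k^*)^{-1}}\le\|x\|_{V_k^{-1}}$. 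Combining, $|\langle x_{s,a_s},\hat\theta_k-\theta^*\rangle|\le 2S\wedge \|x_{s,a_s}\|_{V_k^{-1}}\beta(\lambda)$, which is exactly the exponent defining $\alpha_{s,k}$. This requires $V_k\preceq V_k^*$, which is the same statement for the previous index, so I would argue by induction on $\ell$: the base case is $\ell=2$, and the inductive step uses the claim for $k=\ell-1$ together with the ellipsoid event for $\hat\theta_k$. Interval $1$ uses unweighted $V_1=\lambda I+\sum x x^\top$, but this never enters, since $\alpha_{\cdot,1}$ uses only the $2S$ bound.

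The main obstacle is the probability bookkeeping. The induction chains the ellipsoid events for $\hat\theta_2,\dots,\hat\theta_{\ell-1}$, and the events $V_k\preceq V_k^*$ for earlier $k$. Naively this gives probability $1-(\ell-2)/T^2$ rather than $1-1/T^2$. I would handle it by observing that for fixed $\ell$ the only fresh randomness needed is the ellipsoid event for $\hat\theta_{\ell-1}$ (\cref{lem:GLM_ellipsoid}, probability $1-1/T^2$), and that the comparison $(V_{\ell-1}^*)^{-1}\preceq V_{\ell-1}^{-1}$ sits on the same inductively established event. If the statement must hold exactly at level $1-1/T^2$, I would rescale the confidence in \cref{lem:GLM_ellipsoid}, which changes only constants. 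Alternatively, I would state the result on the intersection of the ellipsoid events, which has probability at least $1-B/T^2$, and that suffices downstream.
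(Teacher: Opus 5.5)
Your proposal is correct and follows the paper's proof essentially step for step. It uses a term-wise scalar comparison via \cref{lem:GLM_concordance} and the deterministic $2S$ bound justifying $\alpha_{s,1}=e^{-2RS}$ for $\ell=2$. For $\ell\ge 3$ it combines the ellipsoid bound of \cref{lem:GLM_ellipsoid} with the inductive hypothesis $V_{\ell-1}\preceq V_{\ell-1}^*$ to pass from $\|x\|_{(V_{\ell-1}^*)^{-1}}$ to $\|x\|_{V_{\ell-1}^{-1}}$.

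Your probability concern is legitimate, and the paper's proof glosses over it as well. The induction really uses the ellipsoid events for $\hat\theta_2,\dots,\hat\theta_{\ell-1}$, so it yields $1-(\ell-2)/T^2$ rather than $1-1/T^2$. Your ``only fresh randomness'' remark does not recover $1-1/T^2$, because the inductively established event is itself random. Your fallback is the right repair: work on the intersection of all ellipsoid events, which has probability at least $1-B/T^2$. This suffices for \cref{lem:GLM_good_event} and the regret bound.
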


\begin{proof}
    We first consider the case $\ell = 2$, applying \cref{lem:GLM_concordance} yields
    \begin{align*}
        e^{-R|\langle x_{s,a_s}, \hat{\theta}_{1} - \theta^* \rangle|}\dot\mu(\langle x_{s,a_s}, \hat{\theta}_{1} \rangle) &\leq \dot\mu(\langle x_{s,a_s}, \theta^* \rangle) \; .
    \end{align*}
    By the assumptions $\|x_{s,a_s}\|_2 \leq 1$, $\|\theta^*\|_2 \leq S$, and $\|\hat{\theta}_{1}\|_2 \leq S$, we further obtain
    \begin{align*}
        |\langle x_{s,a_s}, \hat{\theta}_{1} - \theta^* \rangle| \leq \|x_{s,a_s}\|_2 \cdot \|\hat{\theta}_{1} - \theta^*\|_2 \leq \|\hat{\theta}_{1}\|_2 + \| \theta^*\|_2 \leq 2S\; .
    \end{align*}
    Consequently,
    \begin{align*}
        \alpha_{s,1}(\lambda) \dot\mu(\langle x_{s,a_s}, \hat{\theta}_{1} \rangle) &= e^{-2RS} \dot\mu(\langle x_{s,a_s}, \hat{\theta}_{1} \rangle) \\
        &\leq e^{-R|\langle x_{s,a_s}, \hat{\theta}_{1} - \theta^* \rangle|}\dot\mu(\langle x_{s,a_s}, \hat{\theta}_{1} \rangle) \\
        &\leq \dot\mu(\langle x_{s,a_s}, \theta^* \rangle) \; ,
    \end{align*}
    which establishes $V_2 \preceq V_2^*$.
    \\\\
    For the general case $\ell \geq 3$,  \cref{lem:GLM_concordance} gives
    \begin{align*}
        e^{-R|\langle x_{s,a_s}, \hat{\theta}_{\ell-1} - \theta^* \rangle|}\dot\mu(\langle x_{s,a_s}, \hat{\theta}_{\ell-1} \rangle) &\leq \dot\mu(\langle x_{s,a_s}, \theta^* \rangle) \; .
    \end{align*}
    Using \cref{lem:GLM_ellipsoid} together with the assumptions $\|x_{s,a_s}\|_2 \leq 1$, $\|\theta^*\|_2 \leq S$ and $\|\hat{\theta}_{\ell-1}\|_2 \leq S$, we obtain
    \begin{align*}
        |\langle x_{s,a_s}, \hat{\theta}_{\ell-1} - \theta^* \rangle| \leq (2S \wedge \|x_{s,a_s}\|_{V_{\ell-1}^{*-1}} \beta(\lambda)) \leq (2S \wedge \|x_{s,a_s}\|_{V_{\ell-1}^{-1}} \beta(\lambda)) \; ,
    \end{align*}
    where the second inequality follows inductively from $V_{\ell-1} \preceq V_{\ell-1}^*$. Therefore,
    \begin{align*}
        \alpha_{s,\ell-1}(\lambda) \dot\mu(\langle x_{s,a_s}, \hat{\theta}_{\ell-1} \rangle) &= e^{-R(2S \wedge  \|x_{s,a_s}\|_{V_{\ell-1}^{-1}} \beta(\lambda))} \dot\mu(\langle x_{s,a_s}, \hat{\theta}_{\ell-1} \rangle) \\
        &\leq e^{-R|\langle x_{s,a_s}, \hat{\theta}_{\ell-1} - \theta^* \rangle|}\dot\mu(\langle x_{s,a_s}, \hat{\theta}_{\ell-1} \rangle) \\
        &\leq \dot\mu(\langle x_{s,a_s}, \theta^* \rangle) \; ,
    \end{align*}
    which completes the proof that $V_\ell \preceq V_\ell^*$ for all $\ell \geq 2$.
\end{proof}

\begin{lemma}[Good event] \label{lem:GLM_good_event}
Define the following quantities:
\begin{align*}
\beta(\lambda) &\coloneqq 24RS \left(\sqrt{d+\log T} + \frac{R(d+\log T)}{\sqrt{\lambda}} \right) + 2S\sqrt{\lambda}\,,\\
\varepsilon_{t,\ell}'(\lambda) &\coloneqq \underset{y \in \mathcal{A}_{t}^{(\ell-1)}}{\max}\|y\|_{V_\ell^{-1}} \cdot \beta(\lambda)\; \text{ for } \; \ell \geq 2 \; .
\end{align*}
Then, with probability at least $1-\frac{2(B-2)}{T^2}$, the following event holds 
\begin{align*}
E' \coloneqq& \bigcap\limits_{\ell = 2}^{B-1}\bigcap\limits_{t = \mathcal{T}_{\ell}+1}^{T}\left\{|\langle x,\hat{\theta}_\ell - \theta^* \rangle |
\leq \varepsilon_{t,\ell}'(\lambda), \; \forall x \in \mathcal{A}_{t}^{(\ell-1)}\right\}
\; .
\end{align*}
\end{lemma}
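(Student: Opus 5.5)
The plan is to combine, for each interval index $\ell\in\{2,\dots,B-1\}$, the MLE confidence ellipsoid of \cref{lem:GLM_ellipsoid} with the matrix comparison of \cref{lem:GLM_empirical_matrix}, and then take a union bound over the $B-2$ indices. Each index contributes two failure events of probability at most $1/T^2$, one from each lemma, which gives the total failure probability $2(B-2)/T^2$.

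Fix $\ell\in\{2,\dots,B-1\}$. Let $F_\ell$ be the event $\|\hat\theta_\ell-\theta^*\|_{V_\ell^*}\le\beta(\lambda)$, which by \cref{lem:GLM_ellipsoid} has probability at least $1-1/T^2$. Let $M_\ell$ be the event $V_\ell\preceq V_\ell^*$, which by \cref{lem:GLM_empirical_matrix} also has probability at least $1-1/T^2$. On $M_\ell$ we have $V_\ell^{*-1}\preceq V_\ell^{-1}$, since matrix inversion is order-reversing on the positive definite cone. Hence for any $x$,
\[
\|x\|_{V_\ell^{*-1}}\le\|x\|_{V_\ell^{-1}}.
\]
Next we use the weighted Cauchy--Schwarz inequality: for every $t\ge\mathcal T_\ell+1$ and every $x\in\mathcal A_t^{(\ell-1)}$,
\[
|\langle x,\hat\theta_\ell-\theta^*\rangle|\le\|x\|_{V_\ell^{*-1}}\|\hat\theta_\ell-\theta^*\|_{V_\ell^*}\le\|x\|_{V_\ell^{-1}}\beta(\lambda)\le\max_{y\in\mathcal A_t^{(\ell-1)}}\|y\|_{V_\ell^{-1}}\,\beta(\lambda)=\varepsilon'_{t,\ell}(\lambda).
\]
This holds on $F_\ell\cap M_\ell$. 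The bound on $\hat\theta_\ell$ does not depend on $x$, so it holds simultaneously for all $t$ and all arms without any further union bound. This is the advantage of the ellipsoid form over a per-direction bound.

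The union bound over $\ell$ then gives $\mathbb P\bigl(\bigcap_{\ell=2}^{B-1}(F_\ell\cap M_\ell)\bigr)\ge 1-2(B-2)/T^2$, and $E'$ contains this intersection.

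The main obstacle is the inductive character of \cref{lem:GLM_empirical_matrix}. Its proof for index $\ell$ uses the confidence bound at $\ell-1$ and the inclusion $V_{\ell-1}\preceq V_{\ell-1}^*$. So $M_\ell$ is really implied by $F_2,\dots,F_{\ell-1}$ (note $M_2$ holds deterministically because of the $2S$ weight). I would therefore restate the argument to avoid double counting. On $\bigcap_{k\le \ell-1}F_k$, induction gives $M_k$ for all $k\le\ell$, and this only improves the count to $(B-2)/T^2$, so the stated probability $2(B-2)/T^2$ is conservative and holds in either accounting.

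A second subtlety is that \cref{lem:GLM_ellipsoid} is applied with data-dependent designs inside interval $\ell$: the actions depend on the earlier estimates and on within-interval contexts. I would check that its self-normalized argument only needs the rewards in interval $\ell$ to be conditionally independent given the chosen actions, which holds under the batched-feedback protocol because actions within the interval do not depend on those rewards.
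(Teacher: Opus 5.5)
Your proposal is correct and is essentially the paper's proof: weighted Cauchy--Schwarz, \cref{lem:GLM_ellipsoid} for the $V_\ell^*$-norm bound, \cref{lem:GLM_empirical_matrix} to pass from $V_\ell^{*-1}$ to $V_\ell^{-1}$, and a union bound over $\ell\in\{2,\dots,B-1\}$ at $2/T^2$ per index. You add one point the paper's proof does not make: $V_\ell\preceq V_\ell^*$ actually follows from the earlier ellipsoid events. The paper instead takes the per-$\ell$ probability of \cref{lem:GLM_empirical_matrix} at face value, whereas conditioning on $\bigcap_{k=2}^{B-1}F_k$ makes the accounting rigorous and gives the sharper bound $(B-2)/T^2$.
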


\begin{proof}
 For any interval $\ell\geq 2$, by the Cauchy-Schwarz inequality together with \cref{lem:GLM_ellipsoid} and \cref{lem:GLM_empirical_matrix}, it follows that for every round $t$ and for all $x \in \mathcal{A}_{t}^{(\ell-1)}$, with probability at least $1-\tfrac{2}{T^2}$ we have
\[
| \langle x, \hat{\theta}_{\ell} - \theta^* \rangle | \leq \|x\|_{{V_{{\ell}}^*}^{-1}}\|\hat{\theta}_{\ell} - \theta^*\|_{V_{{\ell}}^*} \leq \|x\|_{V_\ell^{-1}} \cdot \beta(\lambda) \leq \underset{y \in \mathcal{A}_{t}^{(\ell-1)}}{\max}\|y\|_{V_\ell^{-1}} \cdot \beta(\lambda) \; . 
\]
Applying a union bound over all intervals $\ell$ then guarantees that the event $E'$ holds with probability at least $1-\frac{2(B-2)}{T^2}$, which completes the proof.
\end{proof}

\begin{lemma} \label{lem:GLM_optimal_arm}
Let $E'$ be the good event defined in \cref{lem:GLM_good_event}. Conditioned on $E'$, the optimal arm $x_{t}^* \in  \arg\max_{x \in \mathcal{A}_t} \langle x, \theta^* \rangle$ is never eliminated at any round $t$. In particular,
\begin{equation*}
x_t^* \in \mathcal{A}_t^{(\ell)}, \quad \text{for all } \, 1 \leq \ell \leq B-1 \, \text{ and } \, \mathcal{T}_\ell+1 \leq t \leq T \; .
\end{equation*}
\end{lemma}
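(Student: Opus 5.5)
The proof will be an induction over the elimination stages, following the argument of \cref{lem:optimal_arm}. The only new points are that elimination in \texttt{BGLE} starts with $\hat\theta_2$ rather than $\hat\theta_1$, and that the optimal arm is defined through the link function. Since $\mu$ is non-decreasing ($\dot\mu=m''\geq 0$ by \cref{assum:GLM_diff}), an arm maximizing $\langle x,\theta^*\rangle$ also maximizes $\mu(\langle x,\theta^*\rangle)$. So I will work with $x_t^*\in\arg\max_{x\in\mathcal{A}_t}\langle x,\theta^*\rangle$, and the conclusion transfers to the regret-relevant optimal arm.

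Fix $s\in[B-1]$ and a round $t\in[\mathcal{T}_s+1,\mathcal{T}_{s+1}]$. The plan is to show by induction on $\ell$ that $x_t^*\in\mathcal{A}_t^{(\ell)}$ for every $\ell\in[s]$.

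\textbf{Base case.} By convention $\mathcal{A}_t^{(1)}=\mathcal{A}_t^{(0)}=\mathcal{A}_t$, since no elimination uses $\hat\theta_1$. Hence $x_t^*\in\mathcal{A}_t^{(1)}$ trivially, and this settles $\ell=1$ (which also covers $s=1$).

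\textbf{Inductive step.} For $2\le\ell\le s$, assume $x_t^*\in\mathcal{A}_t^{(\ell-1)}$. The empirical leader $x_t^{(\ell)}$ also lies in $\mathcal{A}_t^{(\ell-1)}$. Decompose
\[
\langle\hat\theta_\ell,x_t^{(\ell)}-x_t^*\rangle=\langle\hat\theta_\ell-\theta^*,x_t^{(\ell)}-x_t^*\rangle+\langle\theta^*,x_t^{(\ell)}-x_t^*\rangle .
\]
The second term is nonpositive by the optimality of $x_t^*$. The first term is at most $|\langle\hat\theta_\ell-\theta^*,x_t^{(\ell)}\rangle|+|\langle\hat\theta_\ell-\theta^*,x_t^*\rangle|$. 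Because $\ell\ge 2$, $t\ge\mathcal{T}_\ell+1$, and both arms lie in $\mathcal{A}_t^{(\ell-1)}$, the event $E'$ of \cref{lem:GLM_good_event} bounds each absolute value by $\varepsilon'_{t,\ell}(\lambda)$. This gives $\langle\hat\theta_\ell,x_t^{(\ell)}-x_t^*\rangle\le 2\varepsilon'_{t,\ell}(\lambda)$, which is exactly the retention criterion in Line 12 of \texttt{BGLE}. Therefore $x_t^*\in\mathcal{A}_t^{(\ell)}$, and the induction closes.

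\textbf{Main obstacle.} The step needing care is checking that the indices of $E'$ match what the inductive step uses. $E'$ only covers $\ell\ge 2$, and for such $\ell$ it is stated for $x$ ranging over $\mathcal{A}_t^{(\ell-1)}$. So the induction must supply $x_t^*\in\mathcal{A}_t^{(\ell-1)}$ before $E'$ is invoked at stage $\ell$. This is exactly why the base case is taken at $\ell=1$ through the no-elimination convention: $E'$ gives no control over $\hat\theta_1$, and none is needed. No other earlier results are required, since \cref{lem:GLM_good_event} already incorporates \cref{lem:GLM_ellipsoid} and \cref{lem:GLM_empirical_matrix}.
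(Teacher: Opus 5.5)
Your proposal is correct and follows essentially the same argument as the paper: fix $t\in[\mathcal{T}_s+1,\mathcal{T}_{s+1}]$, take the base case $\ell=1$ trivially from the no-elimination convention $\mathcal{A}_t^{(1)}=\mathcal{A}_t$, and close the inductive step with the decomposition through $\theta^*$ together with the event $E'$ applied to $x_t^{(\ell)},x_t^*\in\mathcal{A}_t^{(\ell-1)}$. Your check that the indices of $E'$ ($\ell\ge 2$, $t\ge\mathcal{T}_\ell+1$) match the inductive step, and your remark that monotonicity of $\mu$ links this $x_t^*$ to the regret-relevant optimal arm, are harmless additions to the same proof.
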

\begin{proof}
Fix $t\in[\mathcal{T}_{s}+1,\mathcal{T}_{s+1}]$ for some $s\in[B-1]$. We show by induction on $\ell$ that $x_t^*\in\mathcal{A}_t^{(\ell)}$ for all $\ell\in[s]$.  
\\\\
\textbf{Base case ($\ell = 1$).} 
By definition we have $\mathcal{A}_t = \mathcal{A}_t^{(0)} = \mathcal{A}_t^{(1)}$, which immediately implies that $x_t^*\in \mathcal{A}_t^{(1)}$ holds trivially.
\\\\
\textbf{Inductive step.} Assume $x_t^* \in \mathcal{A}_t^{(\ell-1)}$ for some $\ell \in \{2, \dots, s\}$. Since $x_{t}^{(\ell)} \in \mathcal{A}_t^{(\ell-1)}$, we similarly obtain
\begin{align*}
\langle \hat{\theta}_\ell, x_{t}^{(\ell)} - x_t^* \rangle
&= \langle \hat{\theta}_\ell - \theta^*, x_{t}^{(\ell)} - x_t^* \rangle + \langle \theta^*, x_{t}^{(\ell)} - x_t^* \rangle \\
&\leq \langle \hat{\theta}_\ell - \theta^*, x_{t}^{(\ell)} - x_t^* \rangle \\
&\leq |\langle \hat{\theta}_\ell - \theta^*, x_{t}^{(\ell)} \rangle| + |\langle \hat{\theta}_\ell - \theta^*, x_t^* \rangle| \\
&\leq 2\varepsilon_{t,\ell}'(\lambda) \; ,
\end{align*}
which shows that $x_t^* \in \mathcal{A}_t^{(\ell)}$. By induction, the claim holds for all $\ell \in [s]$, completing the proof.
\end{proof}

\begin{proof}[Proof of Theorem 3] To establish the claimed interval complexity bound, we analyze the lower bound on the length of the $\ell$-th interval, where $\ell = \lceil \log_2\log_2 T \rceil + 4$. By definition of the schedule, the length of this interval is at least
\[
\Bigg\lceil \frac{T^{1-\frac{1}{3\cdot2^{{\ell-4}}}}}{\log_2\log_2T}\Bigg\rceil \geq \frac{T^{1-2^{-\lceil \log_2\log_2 T\rceil}}}{\log_2\log_2T} \geq \frac{T^{1-2^{- \log_2\log_2 T}}}{\log_2\log_2T} = \frac{T^{1-\frac{1}{\log_2T}}}{\log_2\log_2T} = \frac{T}{2\log_2\log_2T} \; .
\]
Since the interval length is non-decreasing in $\ell$, every interval with index $\ell \geq \lceil \log_2\log_2 T \rceil + 4$ has length at least $\frac{T}{2\log_2\log_2T}$. Given the total time horizon $T$, the number of such intervals is therefore at most $\lceil 2 \log_2\log_2T \rceil$. Including the initial $\lceil  \log_2\log_2T \rceil + 3$ intervals, the total number of intervals $B$ is bounded by
\[ 
B \leq \lceil 2 \log_2\log_2T \rceil + \lceil  \log_2\log_2T \rceil + 3\leq 3 \log_2\log_2T + 5 \; ,
\]
which implies that the interval complexity is $\Ocal(\log\log T)$.
\\\\
Next, we decompose the cumulative expected regret with respect to the good event $E'$. The regret can be expressed as
\begin{align*}
\mathcal{R}(T) &= \sum_{k=1}^{B}\sum_{t=\mathcal{T}_{k-1}+1}^{\mathcal{T}_k} \mathbb{E}\bigg[\max_{x\in \mathcal{A}_t} \mu(\langle x,\theta^*\rangle) - \mu(\langle x_{t,a_t},\theta^*\rangle) \bigg] \\
&= \sum_{k=1}^{B}\sum_{t=\mathcal{T}_{k-1}+1}^{\mathcal{T}_k} \mathbb{E}\bigg[\mu(\langle x_t^*, \theta^*\rangle)-\mu(\langle x_{t,a_t}, \theta^* \rangle)\bigg| {E'}^c \bigg] \cdot \mathbb{P}({E'}^c) + \mathbb{E}\bigg[\mu(\langle x_t^*, \theta^*\rangle)-\mu(\langle x_{t,a_t}, \theta^* \rangle)\bigg| E' \bigg] \cdot \mathbb{P}(E') \; .
\end{align*}
\\
Since the rewards are supported on $[0,R]$ and $\mathbb{E}[r \,|\,x;\theta^*] = \mu(\langle x,\theta^* \rangle)$ holds, the instantaneous regret is bounded by the support width, i.e.,
\begin{align*}
    \mu(\langle x_t^*, \theta^*\rangle)-\mu(\langle x_{t,a_t}, \theta^* \rangle) \leq R \; . \tag{14} \label{eq:GLM_regret_bound}
\end{align*}
Hence, the cumulative expected regret can be bounded as
\begin{align*}
\mathcal{R}(T) &\leq RT \cdot \frac{2(B-2)}{T^2} + \sum_{k=1}^{B}\sum_{t=\mathcal{T}_{k-1}+1}^{\mathcal{T}_k} \mathbb{E}\bigg[\mu(\langle x_t^*, \theta^*\rangle)-\mu(\langle x_{t,a_t}, \theta^* \rangle)\bigg| E' \bigg] \cdot \mathbb{P}(E') \\
&\leq \frac{2R(B-2)}{T} + \sum_{k=1}^{B}\sum_{t=\mathcal{T}_{k-1}+1}^{\mathcal{T}_k} \mathbb{E}\bigg[\mu(\langle x_t^*, \theta^*\rangle)-\mu(\langle x_{t,a_t}, \theta^* \rangle)\bigg| E' \bigg] \\
&= \Ocal \left(\frac{R\log\log T}{T} \right) + \sum_{k=1}^{B}\sum_{t=\mathcal{T}_{k-1}+1}^{\mathcal{T}_k} \mathbb{E}\bigg[\mu(\langle x_t^*, \theta^*\rangle)-\mu(\langle x_{t,a_t}, \theta^* \rangle)\bigg| E' \bigg] \; .
\end{align*}
Throughout the remainder of the analysis, we condition on the good event \(E'\). Let $\text{Regret}_{\ell}$ denote the cumulative expected regret incurred during interval $\ell$. We analyze the regret by separating the discussion into four cases, namely $\ell=1$, $\ell=2$, $\ell=3$, and the subsequent intervals with $\ell \ge 4$.

\paragraph{Case 1: $\ell \in \{1,2,3\}$.} In the first three intervals, the number of rounds is given by $\Big\lceil \frac{\sqrt[3]{T}}{\log_2\log_2 T}\Big\rceil$. By (\ref{eq:GLM_regret_bound}), each round incurs an instantaneous regret of at most $R$. Consequently, for the first interval we obtain
\begin{align*}
\text{Regret}_{1} = \sum_{t=1}^{\mathcal{T}_1} \mathbb{E}[\mu(\langle x_t^*,\theta^*\rangle) - \mu(\langle x_{t,a_t},\theta^*\rangle)] 
&\leq R\mathcal{T}_1 \leq R \bigg( \frac{\sqrt[3]{T}}{\log_2\log_2 T} + 1 \bigg) = \Ocal\left(\frac{R\sqrt[3]{T}}{\log\log T}\right) \; .
\end{align*}
The same reasoning applies to the second and third intervals, which yields the same order of regret,
\[
\text{Regret}_{2} = \text{Regret}_{3} = \Ocal\!\left(\frac{R\sqrt[3]{T}}{\log\log T}\right) \; .
\]
\paragraph{Case 2: $\ell \geq 4$.} For each interval $\ell \geq 4$, the instantaneous regret can be controlled using the Mean Value Theorem. For some $z$ lying between $\langle x_{t,a_t}, \theta^* \rangle$ and $\langle x_{t}^*, \theta^* \rangle$, we have
\begin{align*}
    \mu(\langle x_t^*,\theta^*\rangle) - \mu(\langle x_{t,a_t},\theta^*\rangle) &= \dot\mu(z)\langle x_t^* - x_{t,a_t}, \theta^* \rangle \\
    &= \dot\mu(z)(\langle x_t^* - x_{t,a_t}, \theta^* - \hat{\theta}_{\ell-1} \rangle + \langle x_t^* - x_{t,a_t}, \hat{\theta}_{\ell-1} \rangle) \\
    &\leq \dot\mu(z)(|\langle x_t^* , \hat{\theta}_{\ell-1} - \theta^* \rangle| + |\langle x_{t,a_t}, \hat{\theta}_{\ell-1}- \theta^* \rangle| + \langle x_t^{(\ell-1)} - x_{t,a_t}, \hat{\theta}_{\ell-1} \rangle) \\
    &\leq \underbrace{4\dot\mu(z)\varepsilon_{t,\ell-1}'(\lambda)}_{\triangleq A_t}
    \; . \tag{15} \label{eq:GLM_instant_bound}
\end{align*}
The first inequality uses the fact that $x_{t}^{(\ell-1)}$ is optimal with respect to $\hat{\theta}_{\ell-1}$, together with the guarantee from \cref{lem:GLM_optimal_arm} that $x_t^*$ belongs to $\mathcal{A}_t^{(\ell-2)}$. The final bound follows because $x_{t,a_t} \in \mathcal{A}_{t}^{(\ell-1)}$ by the arm elimination rule, combined with the definition of the good event $E'$ and again \cref{lem:GLM_optimal_arm}, which ensures that $x_t^* \in \mathcal{A}_t^{(\ell-2)}$.
As in the analysis of \cref{BLCE}, during the greedy selection step the regret can be bounded more tightly as $\mu(\langle x_t^*,\theta^*\rangle) - \mu(\langle x_{t,a_t},\theta^*\rangle)$ by $2\dot\mu(z)\varepsilon_{t,\ell-1}'(\lambda)$, analogous to \cref{eq:regret_upper_bound_6}. For simplicity in the subsequent analysis, we substitute the greedy selection step with uncertainty-driven exploration, so that in all intervals the algorithm may be analyzed under uncertainty-driven exploration alone. For rigor, interval end time $\mathcal{T}_\ell$ must be replaced by $\mathcal{T}_{\ell-1}' \coloneqq \mathcal{T}_{\ell-1} + \Big\lceil \tfrac{cT^{1-2^{(4-\ell)}/3}}{\log_2\log_2T} \Big\rceil$, as in the proof of \cref{thm:BLCE}, but this modification affects the regret bound only by a constant factor.
\paragraph{Bounding $\sum_{t=\mathcal{T}_{\ell-1}+1}^{\mathcal{T}_\ell}\mathbb{E}[A_t]$.}
\begin{align*}
    \sum_{t=\mathcal{T}_{\ell-1}+1}^{\mathcal{T}_\ell}\mathbb{E}[A_t] 
    &= 4\sum_{t=\mathcal{T}_{\ell-1}+1}^{\mathcal{T}_\ell} \mathbb{E}[\dot\mu(z)\varepsilon_{t,\ell-1}'(\lambda)] \\
    &\leq 4\sum_{t=\mathcal{T}_{\ell-1}+1}^{\mathcal{T}_\ell} \mathbb{E}\left[e^{R(\langle x_t^*, \theta^* \rangle - z)}\dot\mu(\langle x_t^*,\theta^*\rangle)\, \varepsilon_{t,\ell-1}'(\lambda)\right] \\
    &\leq 4\sum_{t=\mathcal{T}_{\ell-1}+1}^{\mathcal{T}_\ell} \mathbb{E}\left[e^{R\langle x_t^* - x_{t,a_t}, \theta^* \rangle }\dot\mu(\langle x_t^*,\theta^*\rangle)\, \varepsilon_{t,\ell-1}'(\lambda)\right] \\
    &\leq 4\sum_{t=\mathcal{T}_{\ell-1}+1}^{\mathcal{T}_\ell} \mathbb{E}\left[e^{4R(S \wedge \varepsilon_{t,\ell-1}'(\lambda))}\dot\mu(\langle x_t^*,\theta^*\rangle)\, \varepsilon_{t,\ell-1}'(\lambda)\right] \\
    &\leq 4\sum_{t=\mathcal{T}_{\ell-1}+1}^{\mathcal{T}_\ell} \mathbb{E}\left[\smash[b]{\underbrace{\dot\mu(\langle x_t^*,\theta^*\rangle)\, \varepsilon_{t,\ell-1}'(\lambda)}_{\triangleq B_t}\vphantom{\bigg|}}\right] + \mathbb{E}\left[\smash[b]{\underbrace{\frac{e^{4RS}(S \wedge \varepsilon_{t,\ell-1}'(\lambda))}{S}\dot\mu(\langle x_t^*,\theta^*\rangle)\, \varepsilon_{t,\ell-1}'(\lambda)}_{\triangleq C_t}\vphantom{\Bigg|}}\right] \; .
    \\
\end{align*}
The first inequality is a direct application of \cref{lem:GLM_concordance}. The second inequality holds because $z$ lies between $\langle x_{t,a_t}, \theta^* \rangle$ and $\langle x_{t}^*, \theta^* \rangle$. The third inequality follows from Assumption~\ref{assum:norm} together with the bound on $\langle x_t^*-x_{t,a_t}, \theta^* \rangle$ derived in (\ref{eq:GLM_instant_bound}). Finally, the last inequality is obtained by invoking \cref{lem:GLM_exponential_convex}.

\paragraph{Bounding $\sum_{t=\mathcal{T}_{\ell-1}+1}^{\mathcal{T}_\ell}\mathbb{E}[B_t]$.}
\begin{align*}
    \sum_{t=\mathcal{T}_{\ell-1}+1}^{\mathcal{T}_\ell}\mathbb{E}[B_t]
    &= \beta(\lambda)\sum_{t=\mathcal{T}_{\ell-1}+1}^{\mathcal{T}_\ell}\mathbb{E}\left[\max_{y \in \mathcal{A}_t^{(\ell-2)}} \|\dot\mu(\langle x_t^*,\theta^*\rangle)y\|_{V_{\ell-1}^{-1}}\right]\\
    &= \beta(\lambda)(\mathcal{T}_\ell - \mathcal{T}_{\ell-1})\mathbb{E}\left[\max_{y \in \mathcal{A}_t^{(\ell-2)}} \|\dot\mu(\langle x_t^*,\theta^*\rangle)y\|_{V_{\ell-1}^{-1}}\right] \tag{16} \label{eq:GLM_const_max_weight_norm}\\
    &= \frac{\beta(\lambda)(\mathcal{T}_\ell - \mathcal{T}_{\ell-1})}{\mathcal{T}_{\ell-1} - \mathcal{T}_{\ell-2}}\sum_{s=\mathcal{T}_{\ell-2}+1}^{\mathcal{T}_{\ell-1}}\mathbb{E}\left[\max_{y \in \mathcal{A}_t^{(\ell-2)}} \|\dot\mu(\langle x_t^*,\theta^*\rangle)y\|_{V_{\ell-1}^{-1}}\right]\\
    &\leq \frac{\beta(\lambda)(\mathcal{T}_\ell - \mathcal{T}_{\ell-1})}{\mathcal{T}_{\ell-1} - \mathcal{T}_{\ell-2}}\sum_{s=\mathcal{T}_{\ell-2}+1}^{\mathcal{T}_{\ell-1}}\mathbb{E}\left[\max_{y \in \mathcal{A}_t^{(\ell-2)}} \|\dot\mu(\langle x_t^*,\theta^*\rangle)y\|_{H_{s-1}^{-1}}\right]\\
    &= \frac{\beta(\lambda)(\mathcal{T}_\ell - \mathcal{T}_{\ell-1})}{\mathcal{T}_{\ell-1} - \mathcal{T}_{\ell-2}}\sum_{s=\mathcal{T}_{\ell-2}+1}^{\mathcal{T}_{\ell-1}}\mathbb{E}\left[\mathbb{E}\left[\max_{y \in \mathcal{A}_t^{(\ell-2)}} \|\dot\mu(\langle x_t^*,\theta^*\rangle)y\|_{H_{s-1}^{-1}} \Bigg| \mathcal{F}_{s-1}\right]\right]\\
    &= \frac{\beta(\lambda)(\mathcal{T}_\ell - \mathcal{T}_{\ell-1})}{\mathcal{T}_{\ell-1} - \mathcal{T}_{\ell-2}}\sum_{s=\mathcal{T}_{\ell-2}+1}^{\mathcal{T}_{\ell-1}}\mathbb{E}\left[\mathbb{E}\left[\max_{y \in \mathcal{A}_s^{(\ell-2)}} \|\dot\mu(\langle x_s^*,\theta^*\rangle)y\|_{H_{s-1}^{-1}} \Bigg| \mathcal{F}_{s-1}\right]\right]\\
    &= \frac{\beta(\lambda)(\mathcal{T}_\ell - \mathcal{T}_{\ell-1})}{\mathcal{T}_{\ell-1} - \mathcal{T}_{\ell-2}}\sum_{s=\mathcal{T}_{\ell-2}+1}^{\mathcal{T}_{\ell-1}}\mathbb{E}\left[\max_{y \in \mathcal{A}_s^{(\ell-2)}} \|\dot\mu(\langle x_s^*,\theta^*\rangle)y\|_{H_{s-1}^{-1}}\right]\\
    &= \frac{\beta(\lambda)(\mathcal{T}_\ell - \mathcal{T}_{\ell-1})}{\mathcal{T}_{\ell-1} - \mathcal{T}_{\ell-2}}\sum_{s=\mathcal{T}_{\ell-2}+1}^{\mathcal{T}_{\ell-1}}\mathbb{E}\left[\smash[b]{\underbrace{ \|\dot\mu(\langle x_s^*,\theta^*\rangle)x_{s,a_s}\|_{H_{s-1}^{-1}}}_{\triangleq D_s}\vphantom{\bigg|}}\right] \; .
    \\
\end{align*}
The second equality mirrors the reasoning in (\ref{eq:maximal_quadratic}), since both $V_{\ell-1}$ and the arm elimination rule---determined by $\hat{\theta}_1, \dots, \hat{\theta}_{\ell-2}$---are measurable with respect to $\mathcal{F}_{\mathcal{T}_{\ell-1}}$, and can therefore be regarded as fixed conditional on this filtration. Given that the contexts are drawn independently and identically, their values are equal. The first inequality follows from the monotonicity, as $H_{s-1} \preceq V_{\ell-1}$ for all $s$ in the interval. The fourth and sixth equalities use the tower property. The fifth equality relies on the fact that, conditional on $\mathcal{F}_{s-1}$, both $H_{s-1}$ and the arm elimination rule (determined by $\hat{\theta}_1, \dots, \hat{\theta}_{\ell-2}$) are fixed, while the distribution of the contexts remains unchanged. Finally, the last equality follows from the arm-selection strategy, since the factor $\dot\mu(\langle x_s^*,\theta^* \rangle)$ does not affect the maximization and can thus be pulled outside without altering the $\arg\max$.
\paragraph{Bounding $\sum_{s=\mathcal{T}_{\ell-2}+1}^{\mathcal{T}_{\ell-1}}\mathbb{E}[D_s]$.}
\begin{align*}
    \sum_{s=\mathcal{T}_{\ell-2}+1}^{\mathcal{T}_{\ell-1}}\mathbb{E}[D_s]
    &\leq \sum_{s=\mathcal{T}_{\ell-2}+1}^{\mathcal{T}_{\ell-1}}\mathbb{E}\left[\left\|e^{\frac{R}{2}|\langle x_s^*,\theta^*\rangle - \langle x_{s,a_s},\hat{\theta}_{\ell-2}\rangle|}\sqrt{\dot\mu(\langle x_{s}^*,\theta^*\rangle)\dot\mu(\langle x_{s,a_s},\hat{\theta}_{\ell-2}\rangle)}x_{s,a_s}\right\|_{H_{s-1}^{-1}}\right]\\
    &\leq \sum_{s=\mathcal{T}_{\ell-2}+1}^{\mathcal{T}_{\ell-1}}\mathbb{E}\left[\left\|e^{\frac{R}{2}(2S \wedge 5\varepsilon_{s,\ell-2}'(\lambda))}\sqrt{\dot\mu(\langle x_{s}^*,\theta^*\rangle)\dot\mu(\langle x_{s,a_s},\hat{\theta}_{\ell-2}\rangle)}x_{s,a_s}\right\|_{H_{s-1}^{-1}}\right]\\
    &\leq \sum_{s=\mathcal{T}_{\ell-2}+1}^{\mathcal{T}_{\ell-1}}\mathbb{E}\left[\left\|e^{\frac{R}{2}(2S \wedge 5\varepsilon_{s,\ell-2}'(\lambda))+\frac{R}{2}(2S \wedge \varepsilon_{s,\ell-2}'(\lambda))}\sqrt{\dot\mu(\langle x_{s}^*,\theta^*\rangle)\alpha_{s,\ell-2}(\lambda)\dot\mu(\langle x_{s,a_s},\hat{\theta}_{\ell-2}\rangle)}x_{s,a_s}\right\|_{H_{s-1}^{-1}}\right]\\
    &\leq \sum_{s=\mathcal{T}_{\ell-2}+1}^{\mathcal{T}_{\ell-1}}\mathbb{E}\left[\left\|e^{3R(S \wedge \varepsilon_{s,\ell-2}'(\lambda))}\sqrt{\dot\mu(\langle x_{s}^*,\theta^*\rangle)\alpha_{s,\ell-2}(\lambda)\dot\mu(\langle x_{s,a_s},\hat{\theta}_{\ell-2}\rangle)}x_{s,a_s}\right\|_{H_{s-1}^{-1}}\right]\\
    &\leq \sum_{s=\mathcal{T}_{\ell-2}+1}^{\mathcal{T}_{\ell-1}}\mathbb{E}\left[\underbrace{\left\|\sqrt{\dot\mu(\langle x_{s}^*,\theta^*\rangle)\alpha_{s,\ell-2}(\lambda)\dot\mu(\langle x_{s,a_s},\hat{\theta}_{\ell-2}\rangle)}x_{s,a_s}\right\|_{H_{s-1}^{-1}}}_{\triangleq E_s}\right] \\  &\hspace{4.8em}+\mathbb{E}\left[\smash[b]{\underbrace{\left\|\frac{e^{3RS}(S \wedge \varepsilon_{s,\ell-2}'(\lambda))}{S}\sqrt{\dot\mu(\langle x_{s}^*,\theta^*\rangle)\alpha_{s,\ell-2}(\lambda)\dot\mu(\langle x_{s,a_s},\hat{\theta}_{\ell-2}\rangle)}x_{s,a_s}\right\|_{H_{s-1}^{-1}}}_{\triangleq F_s}\vphantom{\Bigg|}}\right] \; .
    \\
\end{align*}
The first inequality is obtained by applying \cref{lem:GLM_concordance} to compare $\dot\mu(\langle x_{s}^*,\theta^*\rangle)$ and $\dot\mu(\langle x_{s,a_s},\hat{\theta}_{\ell-2}\rangle)$. The second inequality follows from Assumption~\ref{assum:GLM_norm} together with the decomposition $|\langle x_s^* - x_{s,a_s},\theta^*\rangle| + |\langle x_{s,a_s},\theta^* - \hat{\theta}_{\ell-2}\rangle| \leq 4\varepsilon'_{s,\ell-2}(\lambda) + \varepsilon'_{s,\ell-2}(\lambda)$, as implied by~(\ref{eq:GLM_instant_bound}) and the definition of the good event $E'$. The third inequality directly follows from the definition of $\alpha_{s,\ell-2}(\lambda)$ when $\ell \geq 4$. The final bound is obtained by invoking \cref{lem:GLM_exponential_convex}.

\paragraph{Bounding $\sum_{s=\mathcal{T}_{\ell-2}+1}^{\mathcal{T}_{\ell-1}}\mathbb{E}[E_s]$.}
\begin{align*}
    \sum_{s=\mathcal{T}_{\ell-2}+1}^{\mathcal{T}_{\ell-1}}\mathbb{E}[E_s]
    &\leq \sum_{s=\mathcal{T}_{\ell-2}+1}^{\mathcal{T}_{\ell-1}}\sqrt{\mathbb{E}\left[\dot\mu(\langle x_{s}^*,\theta^*\rangle)\right]}\sqrt{\mathbb{E}\left[\left\|\sqrt{\alpha_{s,\ell-2}(\lambda)\dot\mu(\langle x_{s,a_s},\hat{\theta}_{\ell-2}\rangle)}x_{s,a_s}\right\|_{H_{s-1}^{-1}}^2\right]}\\
    &\leq \frac{\sqrt{\mathcal{T}_{\ell-1}- \mathcal{T}_{\ell-2}}}{\sqrt{\hat\kappa}}\sqrt{\sum_{s=\mathcal{T}_{\ell-2}+1}^{\mathcal{T}_{\ell-1}}\mathbb{E}\left[\left\|\sqrt{\alpha_{s,\ell-2}(\lambda)\dot\mu(\langle x_{s,a_s},\hat{\theta}_{\ell-2}\rangle)}x_{s,a_s}\right\|_{H_{s-1}^{-1}}^2\right]}\\
    &\leq \frac{\sqrt{\mathcal{T}_{\ell-1}- \mathcal{T}_{\ell-2}}}{\sqrt{\hat\kappa}}\sqrt{2\log\left(\frac{\det(H_{\mathcal{T}_{\ell-1}})}{\det(\lambda I)}\right)} \\ 
    &\leq \frac{\sqrt{2d(\mathcal{T}_{\ell-1}- \mathcal{T}_{\ell-2})\log (2T)}}{\sqrt{\hat\kappa}} \; .
\end{align*}
The first two inequalities are obtained by successive applications of the Cauchy-Schwarz inequality. The third inequality follows directly from Corollary~\ref{cor:elliptical_potential}. Finally, the last inequality is derived using the same reasoning as in (\ref{eq:elliptical_potential_id}), where the determinant is upper bounded by a trace argument, yielding a logarithmic dependence on $T$.

\paragraph{Bounding $\sum_{s=\mathcal{T}_{\ell-2}+1}^{\mathcal{T}_{\ell-1}}\mathbb{E}[F_s]$.}
\begin{align*}
    \sum_{s=\mathcal{T}_{\ell-2}+1}^{\mathcal{T}_{\ell-1}}\mathbb{E}[F_s]
    &\leq \frac{e^{3RS}}{S}\hspace{-0.4em}\sum_{s=\mathcal{T}_{\ell-2}+1}^{\mathcal{T}_{\ell-1}}\mathbb{E}\left[\varepsilon_{s,\ell-2}(\lambda)\left\|\sqrt{\dot\mu(\langle x_{s}^*,\theta^*\rangle)\alpha_{s,\ell-2}(\lambda)\dot\mu(\langle x_{s,a_s},\hat{\theta}_{\ell-2}\rangle)}x_{s,a_s}\right\|_{H_{s-1}^{-1}}\right]\\
    &= \frac{e^{3RS}\beta(\lambda)}{S}\hspace{-0.7em}\sum_{s=\mathcal{T}_{\ell-2}+1}^{\mathcal{T}_{\ell-1}}\mathbb{E}\left[\max_{y \in \mathcal{A}_s^{(\ell-3)}}\|y\|_{V_{\ell-2}^{-1}}\left\|\sqrt{\dot\mu(\langle x_{s}^*,\theta^*\rangle)\alpha_{s,\ell-2}(\lambda)\dot\mu(\langle x_{s,a_s},\hat{\theta}_{\ell-2}\rangle)}x_{s,a_s}\right\|_{H_{s-1}^{-1}}\right]\\
    &\leq \!\frac{e^{3RS}\beta(\lambda)}{S}\!\!\hspace{-0.7em}\sum_{s=\mathcal{T}_{\ell-2}+1}^{\mathcal{T}_{\ell-1}}\!\!\sqrt{\mathbb{E}\!\left[\max_{y \in \mathcal{A}_s^{(\ell-3)}}\|\sqrt{\dot\mu(\langle x_{s}^*,\theta^*\rangle)}y\|_{V_{\ell-2}^{-1}}^2\right]\mathbb{E}\!\left[\left\|\sqrt{\alpha_{s,\ell-2}(\lambda)\dot\mu(\langle x_{s,a_s},\hat{\theta}_{\ell-2}\rangle)}x_{s,a_s}\right\|_{H_{s-1}^{-1}}^2\right]}\\
    &\leq \frac{e^{3RS}\beta(\lambda)\sqrt{2d(\mathcal{T}_{\ell-1}- \mathcal{T}_{\ell-2})\log (2T)}}{S}\sqrt{\mathbb{E}\left[\smash[b]{\underbrace{\max_{y \in \mathcal{A}_s^{(\ell-3)}}\|\sqrt{\dot\mu(\langle x_{s}^*,\theta^*\rangle)}y\|_{V_{\ell-2}^{-1}}^2}_{\triangleq G_s}\vphantom{\Bigg|}}\right]} \; .
    \\
\end{align*}
The second inequality follows from an application of the Cauchy-Schwarz inequality. The final inequality uses the fact that the expectation $\mathbb{E}\Big[\max_{y \in \mathcal{A}_s^{(\ell-3)}}\|\sqrt{\dot\mu(\langle x_s^*,\theta^*\rangle)}y\|^2_{V_{\ell-2}^{-1}}\Big]$ takes the same value for all $s \in [\mathcal{T}_{\ell-2}+1,\mathcal{T}_{\ell-1}]$, as established in (\ref{eq:GLM_const_max_weight_norm}), together with Corollary~\ref{cor:elliptical_potential}, which bounds the quadratic form by a log-determinant expression.

\paragraph{Bounding $\mathbb{E}[G_s]$ for $s \in [\mathcal{T}_{\ell-2}+1,\mathcal{T}_{\ell-1}]$.}
\begin{align*}
    \mathbb{E}[G_s]
    &= \frac{1}{\mathcal{T}_{\ell-2}- \mathcal{T}_{\ell-3}}\sum_{u=\mathcal{T}_{\ell-3}+1}^{\mathcal{T}_{\ell-2}}\mathbb{E}\left[\max_{y \in \mathcal{A}_s^{(\ell-3)}}\|\sqrt{\dot\mu(\langle x_{s}^*,\theta^*\rangle)}y\|_{V_{\ell-2}^{-1}}^2\right] \\
    &\leq \frac{1}{\mathcal{T}_{\ell-2}- \mathcal{T}_{\ell-3}}\sum_{u=\mathcal{T}_{\ell-3}+1}^{\mathcal{T}_{\ell-2}}\mathbb{E}\left[\max_{y \in \mathcal{A}_s^{(\ell-3)}}\|\sqrt{\dot\mu(\langle x_{s}^*,\theta^*\rangle)}y\|_{H_{u-1}^{-1}}^2\right] \\
    &= \frac{1}{\mathcal{T}_{\ell-2}- \mathcal{T}_{\ell-3}}\sum_{u=\mathcal{T}_{\ell-3}+1}^{\mathcal{T}_{\ell-2}}\mathbb{E}\left[\mathbb{E}\left[\max_{y \in \mathcal{A}_s^{(\ell-3)}}\|\sqrt{\dot\mu(\langle x_{s}^*,\theta^*\rangle)}y\|_{H_{u-1}^{-1}}^2 \Bigg| \mathcal{F}_{u-1}\right]\right] \\
    &= \frac{1}{\mathcal{T}_{\ell-2}- \mathcal{T}_{\ell-3}}\sum_{u=\mathcal{T}_{\ell-3}+1}^{\mathcal{T}_{\ell-2}}\mathbb{E}\left[\mathbb{E}\left[\max_{y \in \mathcal{A}_u^{(\ell-3)}}\|\sqrt{\dot\mu(\langle x_{u}^*,\theta^*\rangle)}y\|_{H_{u-1}^{-1}}^2 \Bigg| \mathcal{F}_{u-1}\right]\right] \\
    &= \frac{1}{\mathcal{T}_{\ell-2}- \mathcal{T}_{\ell-3}}\sum_{u=\mathcal{T}_{\ell-3}+1}^{\mathcal{T}_{\ell-2}}\mathbb{E}\left[\max_{y \in \mathcal{A}_u^{(\ell-3)}}\|\sqrt{\dot\mu(\langle x_{u}^*,\theta^*\rangle)}y\|_{H_{u-1}^{-1}}^2 \right] \\
    &= \frac{1}{\mathcal{T}_{\ell-2}- \mathcal{T}_{\ell-3}}\sum_{u=\mathcal{T}_{\ell-3}+1}^{\mathcal{T}_{\ell-2}}\mathbb{E}\left[\smash[b]{\underbrace{\|\sqrt{\dot\mu(\langle x_{u}^*,\theta^*\rangle)}x_{u,a_u}\|_{H_{u-1}^{-1}}^2}_{\triangleq I_u} \vphantom{\Bigg|}}\right] \; . \\
\end{align*}
 The first inequality follows from the monotonicity of the matrices, since $H_{u-1} \preceq V_{\ell-2}$ for all $u$ in the summation range. The second and fourth equalities are applications of the tower property. The third equality holds because, conditional on $\mathcal{F}_{u-1}$, both $H_{u-1}$ and the arm elimination rule (determined by $\hat{\theta}_1, \dots, \hat{\theta}_{\ell-3}$) are fixed, while the distribution of the contexts remains unchanged. Finally, the last equality follows from the arm-selection strategy: the multiplicative factor $\dot\mu(\langle x_u^*,\theta^* \rangle)$ does not affect the maximization and can therefore be factored out without altering the $\arg\max$.

\paragraph{Bounding $\sum_{u=\mathcal{T}_{\ell-3}+1}^{\mathcal{T}_{\ell-2}}\mathbb{E}[I_u]$.}
\begin{align*}
    \sum_{u=\mathcal{T}_{\ell-3}+1}^{\mathcal{T}_{\ell-2}}\mathbb{E}[I_u]
    &\leq \sum_{u=\mathcal{T}_{\ell-3}+1}^{\mathcal{T}_{\ell-2}}\mathbb{E}\left[e^{R|\langle x_u^*,\theta^*\rangle - \langle x_{u,a_u},\hat{\theta}_{\ell-3}\rangle|}\left\|\sqrt{\dot\mu(\langle x_{u,a_u},\hat{\theta}_{\ell-3}\rangle)}x_{u,a_u}\right\|^2_{H_{u-1}^{-1}}\right]\\
    &\leq \sum_{u=\mathcal{T}_{\ell-3}+1}^{\mathcal{T}_{\ell-2}}\mathbb{E}\left[e^{4RS}\left\|\sqrt{\alpha_{u,\ell-3}(\lambda)\dot\mu(\langle x_{u,a_u},\hat{\theta}_{\ell-3}\rangle)}x_{u,a_u}\right\|_{H_{u-1}^{-1}}^2\right] \\
    &\leq 2e^{4RS}d\log (2T) \; .
\end{align*}
The first inequality is obtained by applying \cref{lem:GLM_concordance}, which allows us to compare $\dot\mu(\langle x_{u}^*,\theta^*\rangle)$ and $\dot\mu(\langle x_{u,a_u},\hat{\theta}_{\ell-3}\rangle)$. The second inequality uses Assumption \ref{assum:norm}, together with the fact that $\alpha_{u,\ell-3}(\lambda) \geq e^{-2RS}$ for all $\ell \geq 4$. Finally, the last inequality follows from Corollary~\ref{cor:elliptical_potential}, which bounds the quadratic form in terms of a log-determinant expression and yields the stated order.

We now combine the previous bounds step by step. First, from the result on $\mathbb{E}[G_s]$, we obtain
\begin{align*}
    \mathbb{E}[G_s]
    \leq \frac{1}{\mathcal{T}_{\ell-2} - \mathcal{T}_{\ell-3}} \sum_{u=\mathcal{T}_{\ell-3}+1}^{\mathcal{T}_{\ell-2}} \mathbb{E}[I_u] \leq \frac{2e^{4RS}d\log (2T)}{\mathcal{T}_{\ell-2} - \mathcal{T}_{\ell-3}}
     \; .
\end{align*}

Next, substituting this into the bound for $\sum_{s=\mathcal{T}_{\ell-2}+1}^{\mathcal{T}_{\ell-1}}\mathbb{E}[F_s]$, we obtain
\begin{align*}
    \sum_{s=\mathcal{T}_{\ell-2}+1}^{\mathcal{T}_{\ell-1}} \mathbb{E}[F_s]
    &\leq \frac{e^{3RS}\beta(\lambda)\sqrt{2d(\mathcal{T}_{\ell-1}- \mathcal{T}_{\ell-2})\log (2T)}}{S}\sqrt{\mathbb{E}\left[G_s\right]} \\
    &\leq \frac{2e^{5RS}\beta(\lambda)d\log (2T)\sqrt{\mathcal{T}_{\ell-1}- \mathcal{T}_{\ell-2}}}{S\sqrt{\mathcal{T}_{\ell-2} - \mathcal{T}_{\ell-3}}}
    \; .
\end{align*}

Furthermore, combining this with the result on $\sum_{s=\mathcal{T}_{\ell-2}+1}^{\mathcal{T}_{\ell-1}}\mathbb{E}[E_s]$, we can bound $\sum_{s=\mathcal{T}_{\ell-2}+1}^{\mathcal{T}_{\ell-1}}\mathbb{E}[D_s]$ as
\begin{align*}
    \sum_{s=\mathcal{T}_{\ell-2}+1}^{\mathcal{T}_{\ell-1}} \mathbb{E}[D_s]
    &\leq \sum_{s=\mathcal{T}_{\ell-2}+1}^{\mathcal{T}_{\ell-1}} \mathbb{E}[E_s] + \sum_{s=\mathcal{T}_{\ell-2}+1}^{\mathcal{T}_{\ell-1}} \mathbb{E}[F_s] \\
    &\leq \frac{\sqrt{2d(\mathcal{T}_{\ell-1}- \mathcal{T}_{\ell-2})\log (2T)}}{\sqrt{\hat\kappa}} + \frac{2e^{5RS}\beta(\lambda)d\log (2T)\sqrt{\mathcal{T}_{\ell-1}- \mathcal{T}_{\ell-2}}}{S\sqrt{\mathcal{T}_{\ell-2} - \mathcal{T}_{\ell-3}}}
    \; .
\end{align*}

Finally, substituting this bound into the expression for $\sum_{t=\mathcal{T}_{\ell-1}+1}^{\mathcal{T}_{\ell}}\mathbb{E}[B_t]$ yields
\begin{align*}
    \sum_{t=\mathcal{T}_{\ell-1}+1}^{\mathcal{T}_{\ell}} \mathbb{E}[B_t]
    &\leq \frac{\beta(\lambda)(\mathcal{T}_{\ell} - \mathcal{T}_{\ell-1})}{\mathcal{T}_{\ell-1} - \mathcal{T}_{\ell-2}}\sum_{s=\mathcal{T}_{\ell-2}+1}^{\mathcal{T}_{\ell-1}} \mathbb{E}[D_s] \\
    &\leq \frac{\beta(\lambda)(\mathcal{T}_{\ell} - \mathcal{T}_{\ell-1})}{\sqrt{\mathcal{T}_{\ell-1} - \mathcal{T}_{\ell-2}}}\left(\frac{\sqrt{2d\log (2T)}}{\sqrt{\hat\kappa}} + \frac{2e^{5RS}\beta(\lambda)d\log (2T)}{S\sqrt{\mathcal{T}_{\ell-2} - \mathcal{T}_{\ell-3}}}\right)
    \; .
\end{align*}

\paragraph{Bounding $\sum_{t=\mathcal{T}_{\ell-1}+1}^{\mathcal{T}_{\ell}}\mathbb{E}[C_t]$.}
\begin{align*}
    \sum_{t=\mathcal{T}_{\ell-1}+1}^{\mathcal{T}_{\ell}}\mathbb{E}[C_t]
    &\leq \frac{e^{4RS}}{S}\sum_{t=\mathcal{T}_{\ell-1}+1}^{\mathcal{T}_{\ell}}\mathbb{E}\left[\dot\mu(\langle x_t^*,\theta^*\rangle)\, \varepsilon_{t,\ell-1}'(\lambda)^2\right] \\
    &= \frac{e^{4RS}\beta(\lambda)^2}{S}\sum_{t=\mathcal{T}_{\ell-1}+1}^{\mathcal{T}_{\ell}}\mathbb{E}\left[\max_{z\in\mathcal{A}_t^{(\ell-2)}}\|\sqrt{\dot\mu(\langle x_t^*,\theta^*\rangle)}z\|_{V_{\ell-1}^{-1}}^2\right] \\
    &= \frac{e^{4RS}\beta(\lambda)^2(\mathcal{T}_{\ell} - \mathcal{T}_{\ell-1})}{S} \, \mathbb{E}\left[\max_{z\in\mathcal{A}_t^{(\ell-2)}}\|\sqrt{\dot\mu(\langle x_t^*,\theta^*\rangle)}z\|_{V_{\ell-1}^{-1}}^2\right] \\
    &= \frac{e^{4RS}\beta(\lambda)^2(\mathcal{T}_{\ell} - \mathcal{T}_{\ell-1})}{S(\mathcal{T}_{\ell-1} - \mathcal{T}_{\ell-2})}\sum_{s=\mathcal{T}_{\ell-2}+1}^{\mathcal{T}_{\ell-1}}\mathbb{E}\left[\max_{z\in\mathcal{A}_t^{(\ell-2)}}\|\sqrt{\dot\mu(\langle x_t^*,\theta^*\rangle)}z\|_{V_{\ell-1}^{-1}}^2\right] \\
    &\leq \frac{e^{4RS}\beta(\lambda)^2(\mathcal{T}_{\ell} - \mathcal{T}_{\ell-1})}{S(\mathcal{T}_{\ell-1} - \mathcal{T}_{\ell-2})}\sum_{s=\mathcal{T}_{\ell-2}+1}^{\mathcal{T}_{\ell-1}}\mathbb{E}\left[\max_{z\in\mathcal{A}_t^{(\ell-2)}}\|\sqrt{\dot\mu(\langle x_t^*,\theta^*\rangle)}z\|_{H_{s-1}^{-1}}^2\right] \\
    &= \frac{e^{4RS}\beta(\lambda)^2(\mathcal{T}_{\ell} - \mathcal{T}_{\ell-1})}{S(\mathcal{T}_{\ell-1} - \mathcal{T}_{\ell-2})}\sum_{s=\mathcal{T}_{\ell-2}+1}^{\mathcal{T}_{\ell-1}}\mathbb{E}\left[\mathbb{E}\left[\max_{z\in\mathcal{A}_t^{(\ell-2)}}\|\sqrt{\dot\mu(\langle x_t^*,\theta^*\rangle)}z\|_{H_{s-1}^{-1}}^2 \Bigg| \mathcal{F}_{s-1}\right]\right] \\
    &= \frac{e^{4RS}\beta(\lambda)^2(\mathcal{T}_{\ell} - \mathcal{T}_{\ell-1})}{S(\mathcal{T}_{\ell-1} - \mathcal{T}_{\ell-2})}\sum_{s=\mathcal{T}_{\ell-2}+1}^{\mathcal{T}_{\ell-1}}\mathbb{E}\left[\mathbb{E}\left[\max_{z\in\mathcal{A}_s^{(\ell-2)}}\|\sqrt{\dot\mu(\langle x_s^*,\theta^*\rangle)}z\|_{H_{s-1}^{-1}}^2 \Bigg| \mathcal{F}_{s-1}\right]\right] \\
    &= \frac{e^{4RS}\beta(\lambda)^2(\mathcal{T}_{\ell} - \mathcal{T}_{\ell-1})}{S(\mathcal{T}_{\ell-1} - \mathcal{T}_{\ell-2})}\sum_{s=\mathcal{T}_{\ell-2}+1}^{\mathcal{T}_{\ell-1}}\mathbb{E}\left[\max_{z\in\mathcal{A}_s^{(\ell-2)}}\|\sqrt{\dot\mu(\langle x_s^*,\theta^*\rangle)}z\|_{H_{s-1}^{-1}}^2 \right] \\
    &= \frac{e^{4RS}\beta(\lambda)^2(\mathcal{T}_{\ell} - \mathcal{T}_{\ell-1})}{S(\mathcal{T}_{\ell-1} - \mathcal{T}_{\ell-2})}\sum_{s=\mathcal{T}_{\ell-2}+1}^{\mathcal{T}_{\ell-1}}\mathbb{E}\left[\|\sqrt{\dot\mu(\langle x_s^*,\theta^*\rangle)}x_{s,a_s}\|_{H_{s-1}^{-1}}^2 \right] \\
    &= \frac{e^{4RS}\beta(\lambda)^2(\mathcal{T}_{\ell} - \mathcal{T}_{\ell-1})}{S(\mathcal{T}_{\ell-1} - \mathcal{T}_{\ell-2})}\sum_{s=\mathcal{T}_{\ell-2}+1}^{\mathcal{T}_{\ell-1}}\mathbb{E}\left[I_s \right] \\
    &\leq \frac{2e^{8RS}\beta(\lambda)^2(\mathcal{T}_{\ell} - \mathcal{T}_{\ell-1})d\log (2T)}{S(\mathcal{T}_{\ell-1} - \mathcal{T}_{\ell-2})} \; .
\end{align*}
The second equality follows from the same reasoning as in (\ref{eq:maximal_quadratic}), since both $V_{\ell-1}$ and the arm elimination rule—determined by $\hat{\theta}_1, \dots, \hat{\theta}_{\ell-2}$—are measurable with respect to $\mathcal{F}_{\mathcal{T}_{\ell-1}}$ and can therefore be treated as fixed conditional on this filtration; given that the contexts are drawn independently and identically, their values coincide. The second inequality uses the monotonicity of the matrices, as $H_{s-1} \preceq V_{\ell-1}$ for all $s$ in the interval. The fourth and sixth equalities apply the tower property. The fifth equality holds because, conditional on $\mathcal{F}_{s-1}$, both $H_{s-1}$ and the arm elimination rule (determined by $\hat{\theta}_1, \dots, \hat{\theta}_{\ell-2}$) are fixed, while the distribution of the contexts remains unchanged. The seventh equality is justified by the arm-selection strategy, as the multiplicative factor $\dot\mu(\langle x_s^*,\theta^* \rangle)$ does not affect the maximization and hence does not alter the $\arg\max$. The final inequality follows from the previously established bound on $\sum_{s = \mathcal{T}_{\ell-2}+1}^{\mathcal{T}_{\ell-1}} \mathbb{E}[I_s]$.

Combining the previously derived bounds for $\sum_{t=\mathcal{T}_{\ell-1}+1}^{\mathcal{T}_\ell}\mathbb{E}[B_t]$ and $\sum_{t=\mathcal{T}_{\ell-1}+1}^{\mathcal{T}_\ell}\mathbb{E}[C_t]$, we obtain
\begin{align*}
    \sum_{t=\mathcal{T}_{\ell-1}+1}^{\mathcal{T}_\ell}\mathbb{E}[A_t] 
    &\leq 4\sum_{t=\mathcal{T}_{\ell-1}+1}^{\mathcal{T}_\ell}\mathbb{E}[B_t] + 4\sum_{t=\mathcal{T}_{\ell-1}+1}^{\mathcal{T}_\ell}\mathbb{E}[C_t] \\
    &\leq \frac{4\beta(\lambda)(\mathcal{T}_{\ell} - \mathcal{T}_{\ell-1})}{\sqrt{\mathcal{T}_{\ell-1} - \mathcal{T}_{\ell-2}}}\left(\frac{\sqrt{2d\log (2T)}}{\sqrt{\hat\kappa}} + \frac{2e^{5RS}\beta(\lambda)d\log (2T)}{S\sqrt{\mathcal{T}_{\ell-2} - \mathcal{T}_{\ell-3}}}\right) + \frac{8e^{8RS}\beta(\lambda)^2(\mathcal{T}_{\ell} - \mathcal{T}_{\ell-1})d\log (2T)}{S(\mathcal{T}_{\ell-1} - \mathcal{T}_{\ell-2})} \; .
\end{align*}
For the case $\ell=4$, this simplifies to
\begin{align*}
    \sum_{t=\mathcal{T}_{\ell-1}+1}^{\mathcal{T}_\ell}\mathbb{E}[A_t] 
    &= \Ocal\left(\frac{RS\sqrt{d+\log T} \cdot \frac{T^{\frac{2}{3}}}{\log\log T}}{\sqrt{\frac{\sqrt[3]{T}}{\log\log T}}} \left(\frac{\sqrt{d\log T}}{\sqrt{\hat\kappa}} + \frac{Re^{5RS}d\log T\sqrt{d+\log T}}{\sqrt\frac{\sqrt[3]{T}}{\log\log T}}\right)\right)\\ 
    &\hspace{1em} +\Ocal\left( \frac{R^2Se^{8RS} \cdot \frac{dT^{\frac{2}{3}}(d+\log T)\log T}{\log\log T}}{\frac{\sqrt[3]{T}}{\log\log T}}\right) \\
    &= \Ocal\left(\frac{RS\sqrt{d(d+\log T)T\log T}}{\sqrt{\hat\kappa \log\log T}} + R^2Se^{8RS}d(d+\log T)T^{\frac{1}{3}}\log T\right) \; .
\end{align*}
For all subsequent intervals with $\ell \geq 5$, we similarly obtain
\begin{align*}
    \sum_{t=\mathcal{T}_{\ell-1}+1}^{\mathcal{T}_\ell}\mathbb{E}[A_t] 
    &= \Ocal\left(\frac{RS\sqrt{d+\log T} \cdot \frac{T^{1-\frac{1}{3\cdot2^{\ell-4}}}}{\log\log T}}{\sqrt{\frac{T^{1-\frac{1}{3\cdot2^{\ell-5}}}}{\log\log T}}} \left(\frac{\sqrt{d\log T}}{\sqrt{\hat\kappa}} + \frac{Re^{5RS}d\log T\sqrt{d+\log T}}{\sqrt\frac{T^{1-\frac{1}{3\cdot2^{\ell-6}}}}{\log\log T}}\right)\right) \\
    &\hspace{1em} +\Ocal\left(\frac{R^2Se^{8RS} \cdot \frac{dT^{1-\frac{1}{3\cdot2^{\ell-4}}}(d+\log T)\log T}{\log\log T}}{\frac{T^{1-\frac{1}{3\cdot2^{\ell-5}}}}{\log\log T}}\right) \\
    &= \Ocal\left(\frac{RS\sqrt{d(d+\log T)T\log T}}{\sqrt{\hat\kappa \log\log T}} + R^2Se^{8RS}d(d+\log T)T^{\frac{1}{3\cdot2^{\ell-5}}}\log T\right) \\
    &= \Ocal\left(\frac{RS\sqrt{d(d+\log T)T\log T}}{\sqrt{\hat\kappa \log\log T}} + R^2Se^{8RS}d(d+\log T)T^{\frac{1}{3}}\log T\right) \; .
\end{align*} 
Therefore, the total worst-case regret is bounded as
\begin{align*}
    \mathcal{R}(T)
    &= \sum_{\ell=1}^{3}\text{Regret}_{\ell} + 
    \sum_{\ell=4}^{B}\text{Regret}_{\ell} \\
    &= \Ocal\!\left(\frac{R T^{\frac{1}{3}}}{\log\log T}\right) + \sum_{\ell=4}^{B}\sum_{t=\mathcal{T}_{\ell-1}+1}^{\mathcal{T}_\ell}\mathbb{E}[\mu(\langle x_t^*,\theta^*\rangle) - \mu(\langle x_{t,a_t},\theta^*\rangle)] \\
    &\leq \Ocal\!\left(\frac{RT^{\frac{1}{3}}}{\log\log T}\right) +\sum_{\ell=4}^{B}\sum_{t=\mathcal{T}_{\ell-1}+1}^{\mathcal{T}_\ell}\mathbb{E}[A_t] \\
    &= \Ocal \left(RS\sqrt{\frac{d(d+\log T)T\log T\log\log T}{\hat\kappa}}+ \left(R^2Se^{8RS}d(d+\log T)\log T\log\log T + \frac{R}{\log\log T} \right) T^{\frac{1}{3}}\right) \\
    &=\tilde\Ocal\left(\frac{dRS\sqrt{T}}{\sqrt{\hat\kappa}} + (R^2Se^{8RS}d^2+R)T^{\frac{1}{3}}\right) \; .
\end{align*}
\end{proof}
\section{Additional Experiments} \label{add_experiments}

\subsection{Linear Contextual Bandits: Experimental Results with Normal Contexts} \label{exp:norm}
We evaluate the performance of \texttt{BLCE-G} and \texttt{BLCE} by measuring the cumulative regret over $T=10{,}000$ rounds. At each iteration, $K$ arms are independently sampled from a \(d\)-dimensional normal distribution, and the parameter vector \(\theta^*\) is drawn from a $d$-dimensional normal distribution. Each experiment is repeated 10 times. We consider  $(K,d) \in \{ (1000,5), (5000,10), (50,20),(100,30)\}$, where the first two pairs represent the large-$K$ regime and the latter two correspond to the small-$K$ regime.

For comparison, we benchmark against state-of-the-art algorithms: \textit{Rarely Switching OFUL} (\texttt{RS-OFUL}; \citealt{abbasi2011improved}), \texttt{BatchLinUCB-DG} (\citealt{ruan2021linear}), \textit{Efficient Batched Algorithm for linear contextual Bandits} (\texttt{SoftBatch}; \citealt{hanna2023contexts}), and \texttt{BatchLearning} (\citealt{zhang2025almost}). The within-interval allocation rate for \texttt{BLCE-G} and \texttt{BLCE} is set to $c=0.5$. For \texttt{RS-OFUL}, the switching parameter is $C=3$, and for \texttt{SoftBatch}, the discretization parameter is $q=1/(8\sqrt{d})$. Algorithms of \citet{hanna2023contexts} incur substantial computational overhead, as reflected in their time complexity reported in \cref{table:comparison}; we therefore omit their regret plots. Importantly, \texttt{BLCE-G} and \texttt{BLCE} are implemented with the exact theoretical hyperparameters specified in our main results, without additional tuning.

We report three types of figures. First, the average cumulative regret (solid line) together with its standard deviation (shaded region) over 10 runs.
Second, zoomed-in views of the regret curves to highlight the differences between \texttt{BLCE-G} and \texttt{BLCE}.
Third, the average update complexity across $10$ runs, illustrating the frequency of parameter updates (for consistency with prior work, we label the horizontal axis using the term ``batch'').

As illustrated in Figure~\ref{normal figure}, both \texttt{BLCE-G} and \texttt{BLCE} consistently outperform all baselines in both the large-$K$ and small-$K$ regimes, achieving lowest regret and exhibiting greater stability.
These results confirm that \texttt{BLCE-G} and \texttt{BLCE} not only attain the tightest theoretical guarantees but also deliver strong empirical performance, thereby fulfilling their design objectives of near-optimal regret and minimal interval complexity. Furthermore, runtime comparisons in \cref{table:runtime_results_2} show that \texttt{BLCE-G} and \texttt{BLCE} incur substantially lower computation cost than other theoretically optimal algorithms. In particular, \texttt{BLCE}, which eliminates reliance on G-optimal design, attains the fastest runtime—comparable even to suboptimal algorithms.

Overall, these experiments demonstrate a distinctive advantage of our approach: \texttt{BLCE-G} and \texttt{BLCE} combine minimax-optimal regret guarantees with practical efficiency. This dual benefit of theoretical optimality and empirical superiority sets them apart from all prior methods for linear contextual bandits.
\begin{figure*}[ht]
\centering
\begin{subfigure}{\textwidth}
\includegraphics[width=\textwidth]{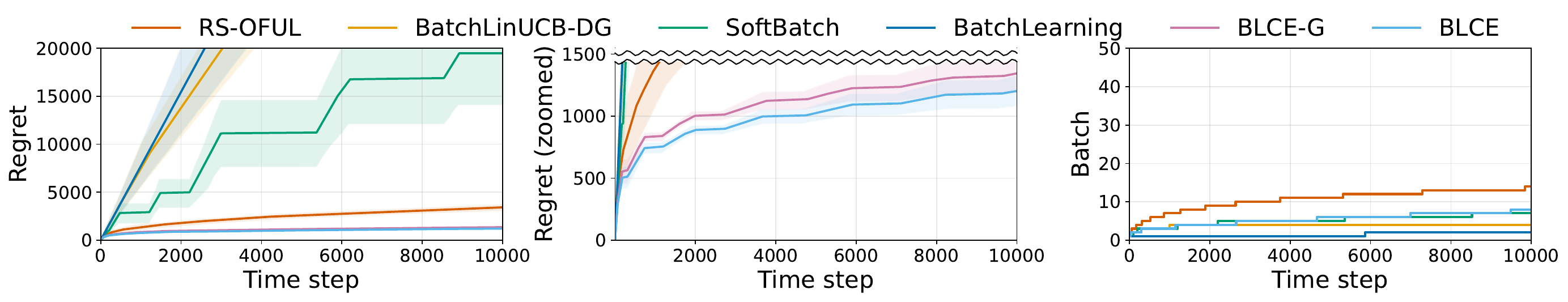}
\centering
{\small \raisebox{0.5cm}{(a) $K=1000, d=5$} \par}
\end{subfigure}\vspace{-1em}
\begin{subfigure}{\textwidth}
\includegraphics[width=\textwidth]{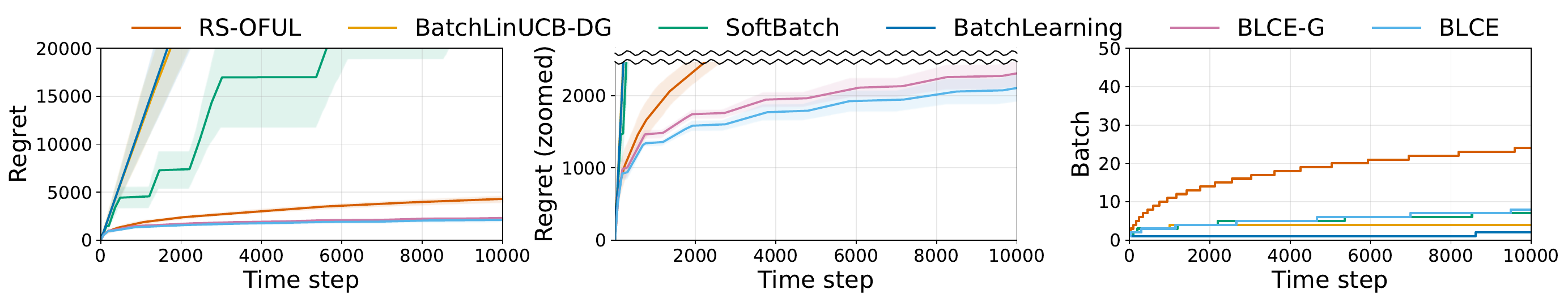}
\centering
{\small \raisebox{0.5cm}{(b) $K=5000, d=10$} \par}
\end{subfigure}\vspace{-1em}
\begin{subfigure}{\textwidth}
\includegraphics[width=\textwidth]{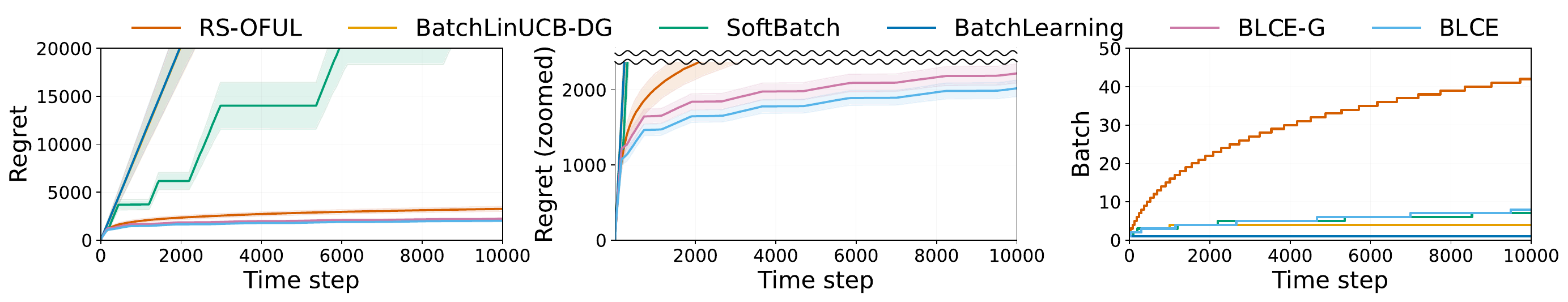}
\centering
{\small \raisebox{0.5cm}{(c) $K=50, d=20$} \par}
\end{subfigure}\vspace{-1em}
\begin{subfigure}{\textwidth}
\includegraphics[width=\textwidth]{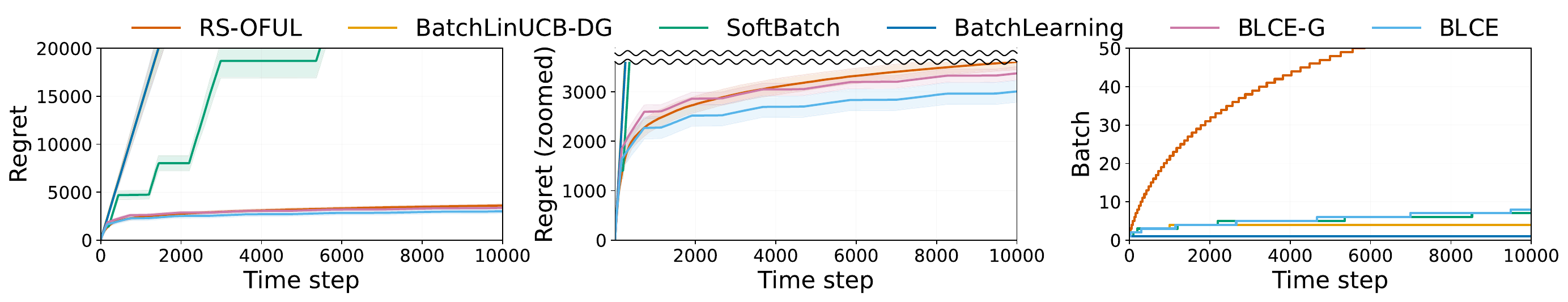}
\centering
{\small \raisebox{0.5cm}{(d) $K=100, d=30$} \par}
\end{subfigure}\vspace{-2em}
\setcounter{figure}{1}
\captionof{figure}{Regret, zoomed-in regret, and interval (batch) complexity over time for different values of $K$ and $d$.}
\label{normal figure}
\end{figure*}

\begin{table}[!ht]  
\caption{Average runtime (seconds) over 10 runs.}
\label{table:runtime_results_2}
\centering
\begin{footnotesize}
\begin{tabular}{lrrrrrrr}
\toprule
& \multicolumn{2}{c}{\textbf{Suboptimal algorithms}} 
& \multicolumn{5}{c}{\textbf{Optimal algorithms}} \\
\cmidrule(lr){2-3}\cmidrule(lr){4-8}
\textbf{$(K,d)$} 
& \texttt{RS-OFUL} & \texttt{SoftBatch} 
& \texttt{BatchLinUCB-DG} & \citet{hanna2023contexts} & \texttt{BatchLearning} & \texttt{BLCE-G} & \texttt{BLCE} \\
\midrule
$(1000,5)$   & $1.37$ & $1.15$ & $148.19$ & Exponential & $143.07$ & $3.58$ & $2.17$ \\
$(5000,10)$  & $10.46$ & $12.62$ & $555.11$ & Exponential & $590.69$ & $9.16$ & $6.19$ \\
$(50,20)$    & $0.54$ & $1.83$ & $981.19$ & Exponential & $46.24$ & $1.39$ & $1.05$ \\
$(100,30)$   & $0.95$ & $3.51$ & $2773.66$ & Exponential & $75.07$ & $1.97$ & $1.41$ \\
\bottomrule
\end{tabular}
\end{footnotesize}
\end{table}
\subsection{Experimental Results for Generalized Linear Contextual Bandits} \label{exp:GLM}
We evaluate the performance of \texttt{BGLE} by measuring the cumulative regret over a horizon of $T=10{,}000$ rounds. At each iteration, $K$ arms are independently sampled from either a \(d\)-dimensional uniform or normal distribution, and the parameter vector \(\theta^*\) is drawn from a $d$-dimensional normal distribution. Each experiment is repeated $20$ times for the parameter pairs  $(K,d) \in \{ (20,2), (50,3)\}$, considering both uniform and normal contexts.

For comparison, we benchmark against a state-of-the-art algorithm: \texttt{B-GLinCB} (\citealt{sawarni2024generalized}). The within-interval allocation rate for \texttt{BGLE} is set to $c=0.5$, and we conduct experiments on logistic bandits with $R=S=1$. Importantly, \texttt{BGLE} is implemented with the exact theoretical hyperparameters from our main results, without any tuning.

We report two types of figures. First, the average cumulative regret (solid line) together with its standard deviation (shaded region) over $20$ runs. Second, the average interval complexity across $20$ runs, showing how frequently each algorithm updates its policy.

As shown in Figure~\ref{GLM figure}, \texttt{BGLE} consistently outperforms the baseline, achieving lowest regret and demonstrating stable performance.
Runtime comparisons in \cref{table:runtime_results_3} further show that \texttt{BGLE} incurs lower computation cost than \texttt{B-GLinCB}.

An additional limitation of \texttt{B-GLinCB} is that it often uses only one interval (see \cref{GLM figure}), even for small values of $K$ and $d$. This behavior arises because its first interval length is determined by
\[
\Big(900R^2S\sqrt{\kappa}e^{3RS}d^3\log T\sqrt{T}\Big)^{\tfrac{2}{3}} \; ,
\]
which easily exceeds $T=10{,}000$ when $d$ is small, thereby preventing meaningful batching.

Overall, these experiments highlight a clear advantage of our approach:
\texttt{BGLE} combines near-optimal regret guarantees with practical
efficiency. This combination of theoretical guarantees and strong empirical
performance distinguishes our method from prior approaches to generalized
linear contextual bandits.

\begin{figure*}[ht]
\centering
\begin{subfigure}{\textwidth}
\includegraphics[width=\textwidth]{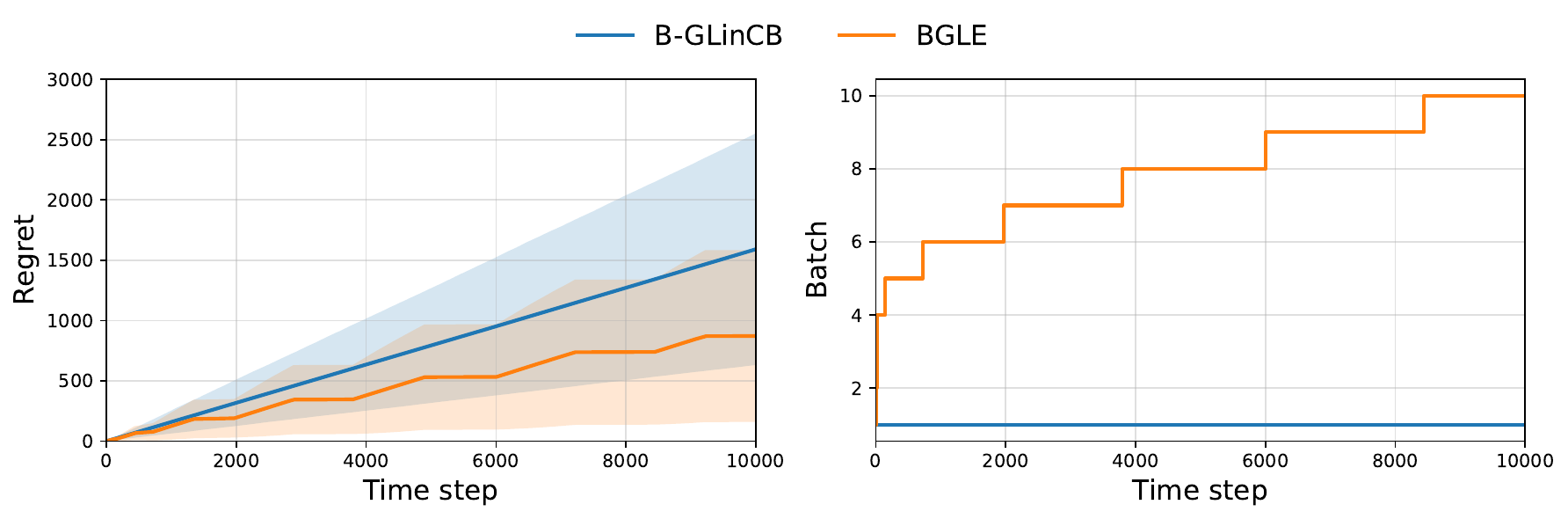}
\centering
{\raisebox{0.5cm}{(a) $K=20, d=2$, uniform} \par}
\end{subfigure}\vspace{-1em}
\begin{subfigure}{\textwidth}
\includegraphics[width=\textwidth]{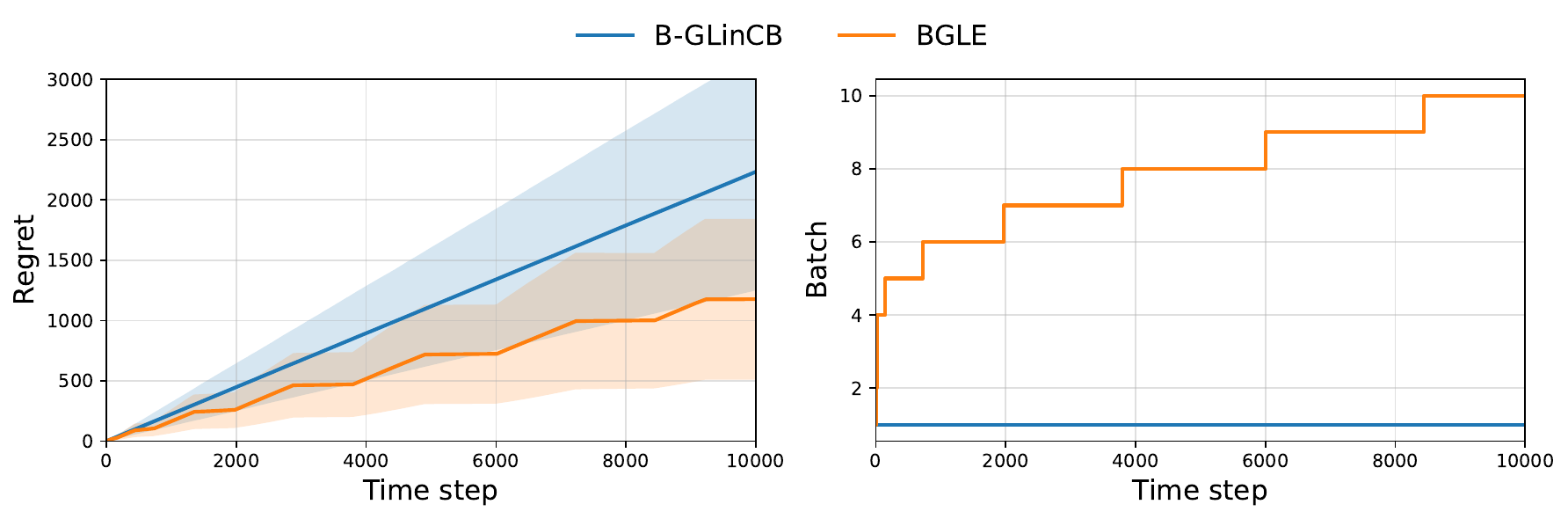}
\centering
{\raisebox{0.5cm}{(b) $K=50, d=3$, uniform} \par}
\end{subfigure}\vspace{-1em}
\begin{subfigure}{\textwidth}
\includegraphics[width=\textwidth]{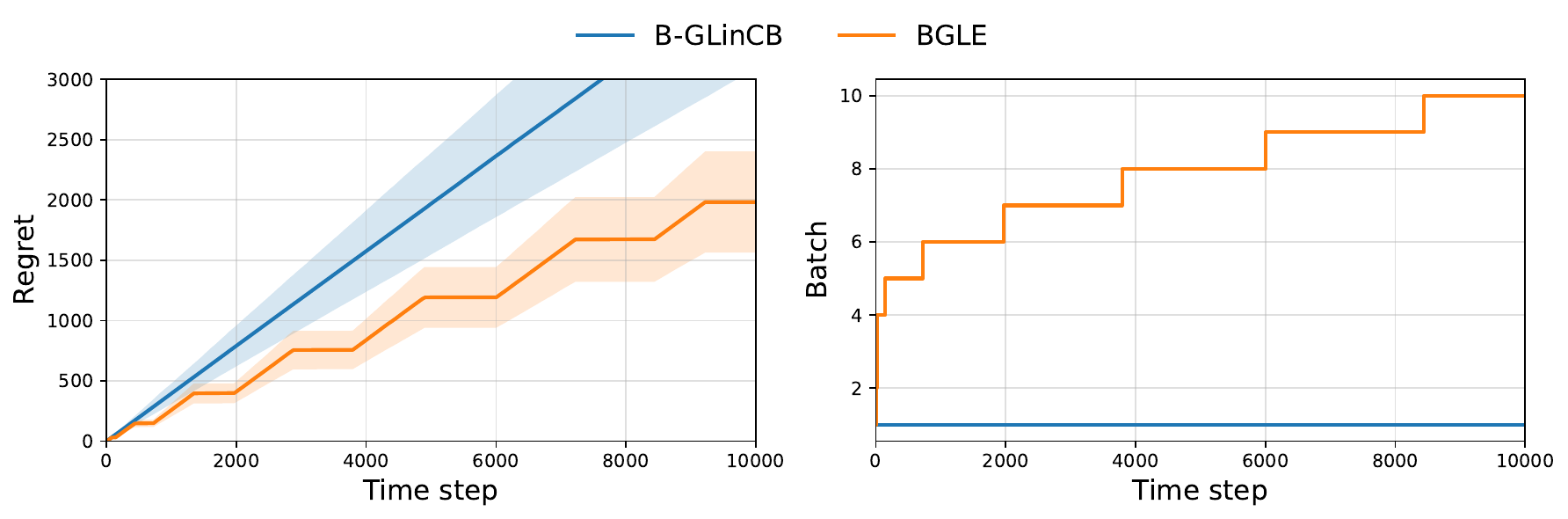}
\centering
{\raisebox{0.5cm}{(c) $K=20, d=2$, normal} \par}
\end{subfigure}\vspace{-1em}
\begin{subfigure}{\textwidth}
\includegraphics[width=\textwidth]{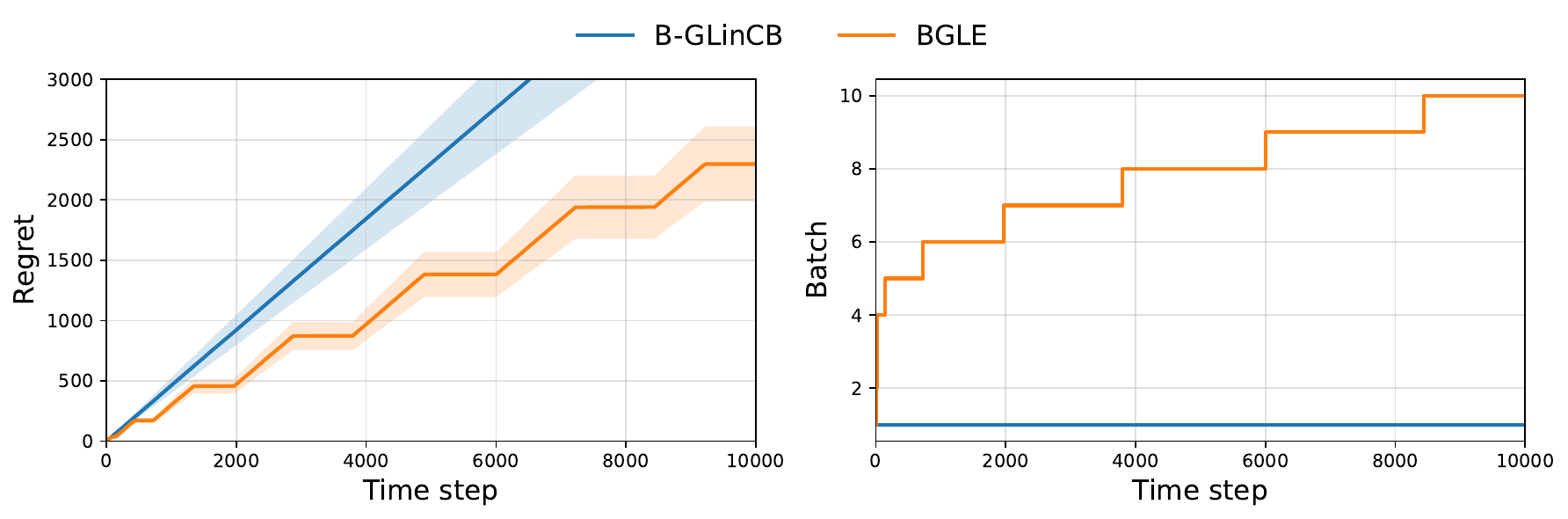}
\centering
{\raisebox{0.5cm}{(d) $K=50, d=3$, normal} \par}
\end{subfigure}\vspace{-2em}
\setcounter{figure}{2}
\captionof{figure}{Regret and interval (batch) complexity over time for different values of $K$ and $d$.}
\label{GLM figure}
\end{figure*}

\begin{table}[!ht]  
\caption{Average runtime (seconds) over 20 runs.}
\label{table:runtime_results_3}
\centering
\begin{normalsize}
\begin{tabular}{lrrrr}
\toprule
& \multicolumn{2}{c}{\textbf{Uniform distribution}} 
& \multicolumn{2}{c}{\textbf{Normal distribution}} \\
\cmidrule(lr){2-3}\cmidrule(lr){4-5}
\textbf{$(K,d)$} 
& \texttt{B-GLinCB} & \texttt{BGLE}  & \texttt{B-GLinCB} & \texttt{BGLE} \\
\midrule
$(20,2)$   & $24.94$ & $3.62$ & $25.82$ & $3.88$  \\
$(50,3)$  & $27.45$ & $3.91$ & $28.78$ & $4.06$ \\
\bottomrule
\end{tabular}
\end{normalsize}
\end{table}
\subsection{Performance Comparison between \texttt{BLCE-G} and \texttt{BLCE}} \label{exp:BLCE-G vs BLCE}
We evaluate the performance of \texttt{BLCE-G} and \texttt{BLCE} over $T = 10{,}000$ rounds with $K = 1000$ arms and feature dimension $d = 5$. At the beginning of each run, the parameter vector $\theta^*$ is sampled from a $d$-dimensional normal distribution. We then conduct \emph{four independent experiments}, each corresponding to a different fixed context distribution. In each experiment, and for every round, the $d$-dimensional feature vectors are drawn i.i.d.\ from one of the following distributions: (i) a Student-$t$ distribution with $1.5$ degrees of freedom, (ii) a Beta$(0.5,0.5)$ distribution supported on $(0,1)$, (iii) a Laplace distribution with location $0$ and scale $1/\sqrt{2}$, or (iv) an Exponential distribution with unit scale supported on $[0,\infty)$. Thus, the four experiments differ only in the underlying feature distribution, while $T$, $K$, and $d$ remain fixed across all settings.

The within-interval allocation rate for both \texttt{BLCE-G} and \texttt{BLCE} is fixed at $c = 0.5$. As in \cref{exp:norm}, we report cumulative regret, zoomed-in regret, and interval (batch) complexity, averaged over $10$ independent runs for each distributional setting.

As shown in Figure~\ref{various distribution figure}, \texttt{BLCE-G} and \texttt{BLCE} exhibit remarkably similar regret performance, with neither algorithm consistently dominating the other. Across most distributional settings, \texttt{BLCE} achieves slightly lower regret, whereas in the heavy-tailed Student-\(t\) distribution case this difference nearly vanishes and \texttt{BLCE-G} matches or occasionally exceeds the performance of \texttt{BLCE}. While theory guarantees that \texttt{BLCE-G} has a strictly smaller minimax regret bound, these guarantees characterize worst-case regimes and do not necessarily arise in benign stochastic environments. Hence, the empirical results do not contradict the theoretical results.

Finally, the runtime comparison in \cref{table:runtime_results_4} shows that \texttt{BLCE} incurs lower computational cost than \texttt{BLCE-G}. By removing the G-optimal design component, \texttt{BLCE} achieves a simpler computational structure and correspondingly faster execution in practice.

\begin{figure*}[ht]
\centering
\begin{subfigure}{\textwidth}
\includegraphics[width=\textwidth]{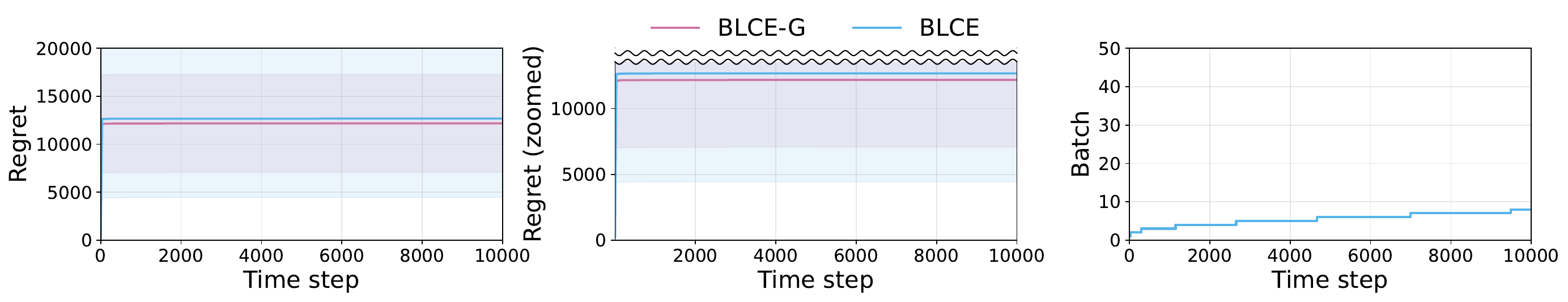}
\centering
{\small \raisebox{0.5cm}{(a) Student-$t$ distribution} \par}
\end{subfigure}\vspace{-1em}
\begin{subfigure}{\textwidth}
\includegraphics[width=\textwidth]{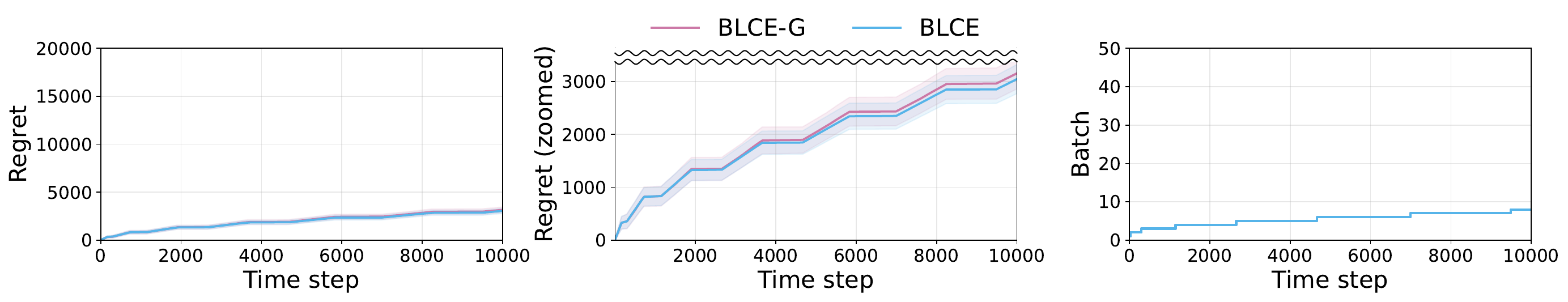}
\centering
{\small \raisebox{0.5cm}{(b) Beta distribution} \par}
\end{subfigure}\vspace{-1em}
\begin{subfigure}{\textwidth}
\includegraphics[width=\textwidth]{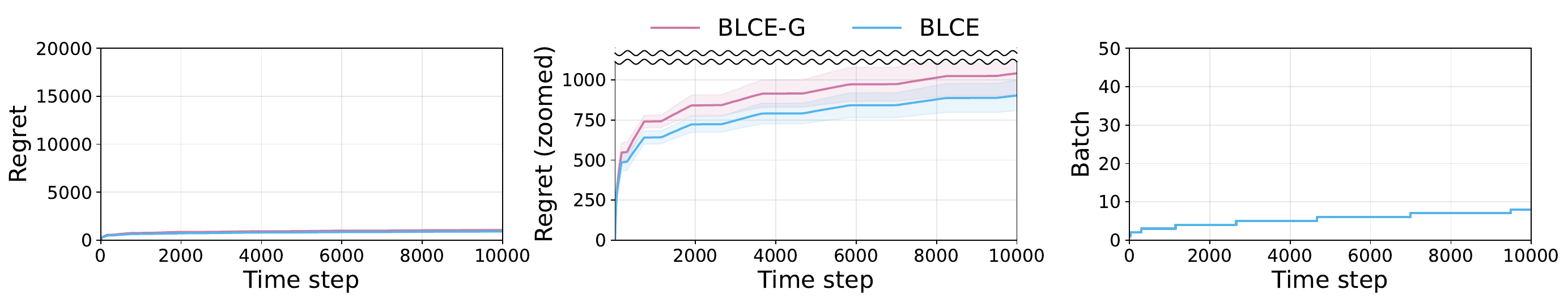}
\centering
{\small \raisebox{0.5cm}{(c) Laplace distribution} \par}
\end{subfigure}\vspace{-1em}
\begin{subfigure}{\textwidth}
\includegraphics[width=\textwidth]{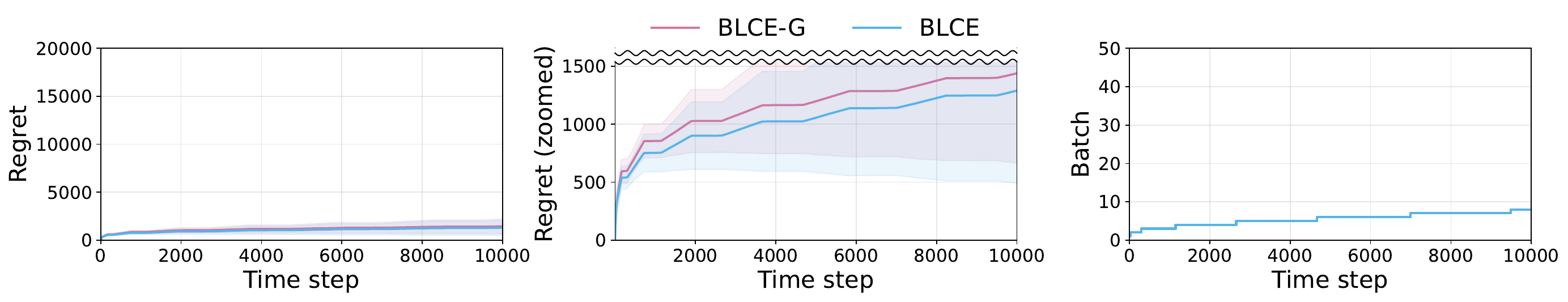}
\centering
{\small \raisebox{0.5cm}{(d) Exponential distribution} \par}
\end{subfigure}\vspace{-2em}
\setcounter{figure}{3}
\captionof{figure}{Regret, zoomed-in regret, and interval (batch) complexity over time for various context distributions.}
\label{various distribution figure}
\end{figure*}

\begin{table}[!ht]  
\caption{Average runtime (seconds) over 10 runs.}
\label{table:runtime_results_4}
\centering
\begin{normalsize}
\begin{tabular}{lrrrr}
\toprule
\textbf{Method} 
& \textbf{Student-$t$} & \textbf{Beta}  & \textbf{Laplace} & \textbf{Exponential} \\
\midrule
\texttt{BLCE-G}   & $9.33$ & $47.57$ & $9.91$ & $48.96$  \\
\texttt{BLCE}  & $3.56$ & $8.16$ & $4.11$ & $7.96$ \\
\bottomrule
\end{tabular}
\end{normalsize}
\end{table}
\subsection{How the Performance of \texttt{BLCE-G}, \texttt{BLCE}, and \texttt{BGLE} Changes with the Allocation Rate \(c\)} \label{exp: varing c}
We evaluate the performance of \texttt{BLCE-G}, \texttt{BLCE}, and \texttt{BGLE} over $T = 10{,}000$ rounds. For \texttt{BLCE-G} and \texttt{BLCE}, we set $K = 1000$ arms with feature dimension $d = 5$, where the arm contexts are independently sampled from either a \(d\)-dimensional uniform or normal distribution. For \texttt{BGLE}, we set $K = 50$ arms with feature dimension $d = 3$, again sampled independently from either a \(d\)-dimensional uniform or normal distribution. At the beginning of each run, the parameter vector $\theta^*$ is sampled from a $d$-dimensional normal distribution. For each algorithm, we vary the within-interval allocation rate $c \in \{0.1, 0.3, 0.5,0.7,0.9\}$ and compare the resulting performance.

As in \cref{exp:norm}, we report cumulative regret, zoomed-in regret, and interval complexity averaged over $10$ independent runs for \texttt{BLCE-G} and \texttt{BLCE}. For \texttt{BGLE}, following \cref{exp:GLM}, we report cumulative regret and interval complexity averaged over $20$ independent runs.

As shown in Figure~\ref{various c figure} and Figure~\ref{various c GLM figure}, decreasing the value of $c$ consistently improves the empirical performance of \texttt{BLCE-G}, \texttt{BLCE}, and \texttt{BGLE}. This indicates that more greedy within-interval arm selection tends to yield better regret. Our theoretical analysis only requires that $c$ be chosen in the interval $(0,1)$ and does not predict how performance varies with $c$. Understanding why smaller $c$---that is, more greedy selection---leads to improved performance in practice remains an interesting direction for future work. The empirical findings suggest that, under stochastic contextual variation, prioritizing greedy exploitation may already provide strong performance even in the rare parameter update setting. Developing a rigorous mathematical explanation for this behavior would be an important step toward a deeper theoretical understanding of linear contextual bandits.

\begin{figure*}[ht]
\centering
\begin{subfigure}{\textwidth}
\includegraphics[width=\textwidth]{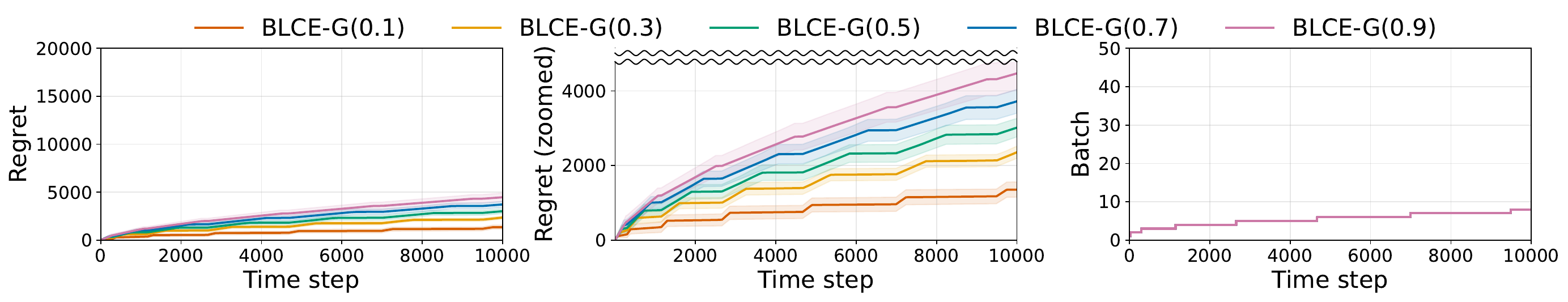}
\centering
{\small \raisebox{0.5cm}{(a) \texttt{BLCE-G}, uniform} \par}
\end{subfigure}\vspace{-1em}
\begin{subfigure}{\textwidth}
\includegraphics[width=\textwidth]{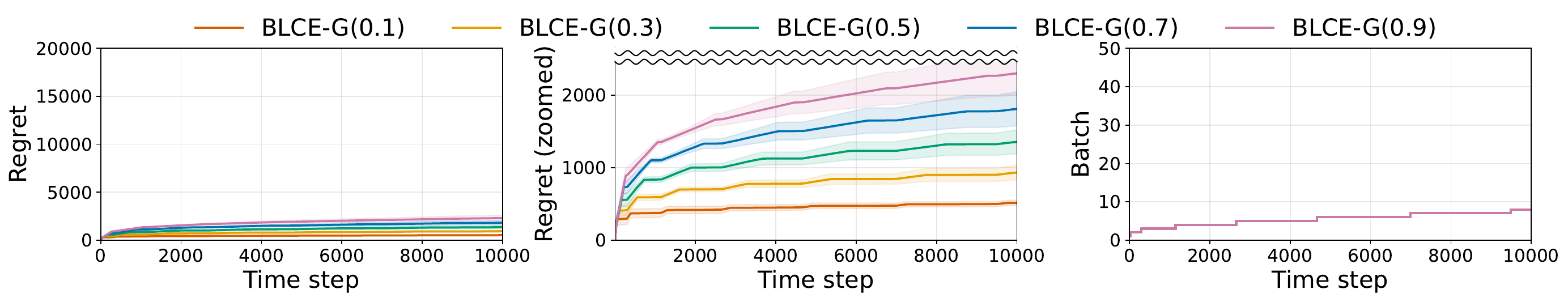}
\centering
{\small \raisebox{0.5cm}{(b) \texttt{BLCE-G}, normal} \par}
\end{subfigure}\vspace{-1em}
\begin{subfigure}{\textwidth}
\includegraphics[width=\textwidth]{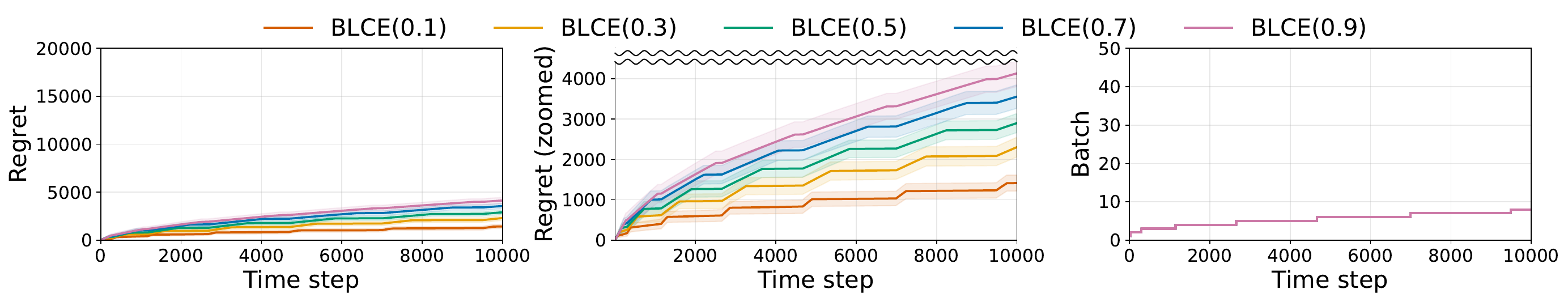}
\centering
{\small \raisebox{0.5cm}{(c) \texttt{BLCE}, uniform} \par}
\end{subfigure}\vspace{-1em}
\begin{subfigure}{\textwidth}
\includegraphics[width=\textwidth]{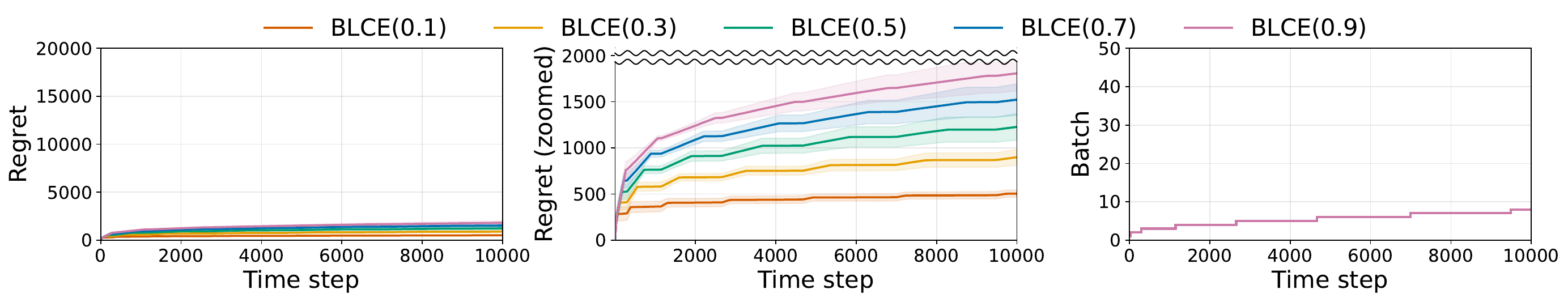}
\centering
{\small \raisebox{0.5cm}{(d) \texttt{BLCE}, normal} \par}
\end{subfigure}\vspace{-2em}
\setcounter{figure}{4}
\captionof{figure}{Regret and interval (batch) complexity over time for different values of $c$.}
\label{various c figure}
\end{figure*}

\begin{figure*}[ht]
\centering
\begin{subfigure}{\textwidth}
\includegraphics[width=\textwidth]{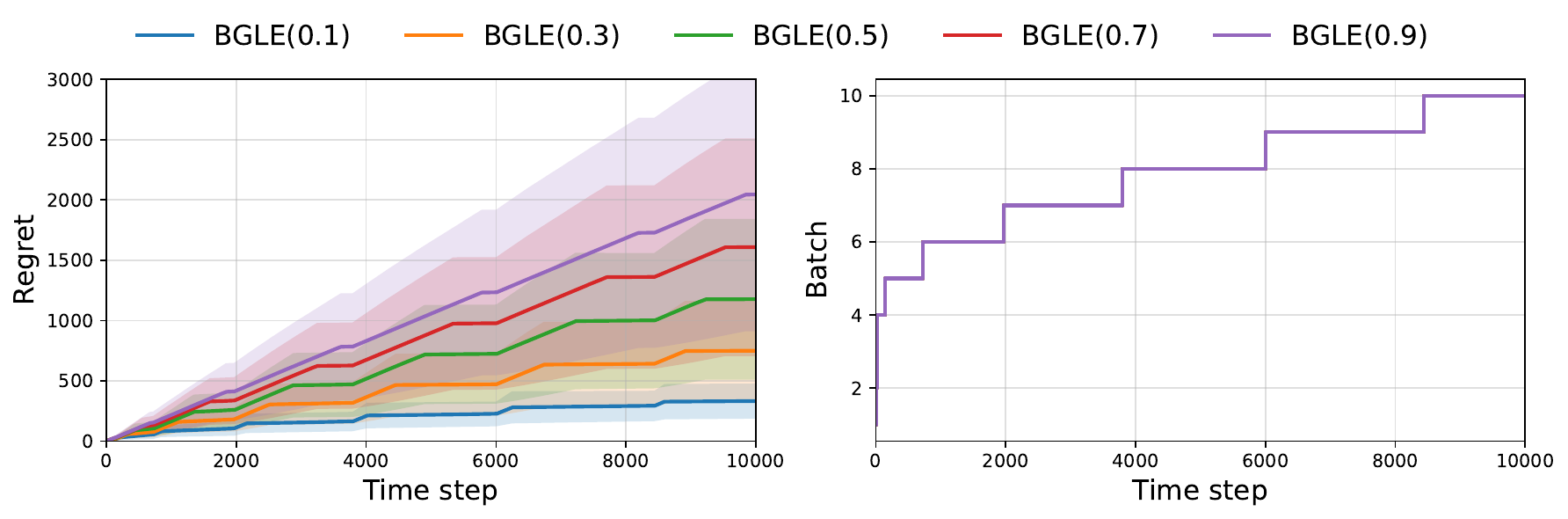}
\centering
{\raisebox{0.5cm}{(a) \texttt{BGLE}, uniform} \par}
\end{subfigure}\vspace{-1em}
\begin{subfigure}{\textwidth}
\includegraphics[width=\textwidth]{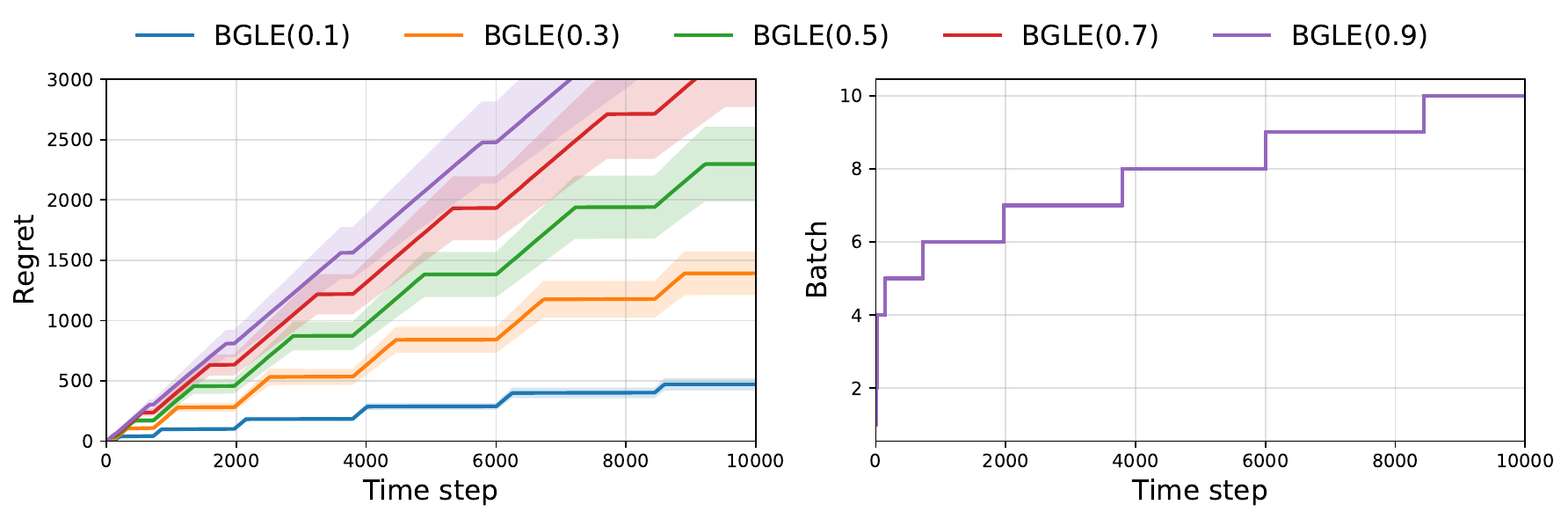}
\centering
{\raisebox{0.5cm}{(b) \texttt{BGLE}, normal} \par}
\end{subfigure}\vspace{-1em}
\setcounter{figure}{5}
\captionof{figure}{Regret and interval (batch) complexity over time for different values of $c$.}
\label{various c GLM figure}
\end{figure*}

\end{document}